\documentclass{article}
\usepackage{paperstyle}
\usepackage{times}

\usepackage[utf8]{inputenc}
\usepackage[T1]{fontenc}
\usepackage{url}
\usepackage{amsfonts}
\usepackage{nicefrac}
\usepackage{microtype}
\usepackage{wrapfig}
\usepackage{mathtools}
\usepackage{amssymb}
\usepackage{enumitem}
\usepackage{amsthm}
\usepackage{amsmath}
\usepackage{subcaption}
\usepackage{xcolor,hyperref}
\usepackage[capitalise]{cleveref}
\usepackage{aliascnt}
\def\RevmarkMode{new}
\usepackage{revmark}
\usepackage{booktabs}
\usepackage{pgfplots}
\pgfplotsset{compat=1.18}
\usepgfplotslibrary{groupplots}
\usepgfplotslibrary{fillbetween}

\hypersetup{
  colorlinks,
  linkcolor={blue!50!black},
  citecolor={green!50!black},
  urlcolor={red!80!black}
}

\definecolor{gaussblue}{HTML}{1F4E79}
\definecolor{gaussorange}{HTML}{B5651D}
\definecolor{gaussred}{HTML}{8B1A1A}
\definecolor{gaussgrey}{HTML}{7A7A7A}
\definecolor{gaussbrown}{HTML}{C26A2A}

\pgfplotsset{
  theoryplot/.style={
    scale only axis,
    clip mode=individual,
    grid=major,
    grid style={gray!20},
    yticklabel style={/pgf/number format/1000 sep={}},
    tick label style={font=\scriptsize},
    label style={font=\small},
    title style={font=\small},
    legend style={font=\scriptsize, draw=none, fill=none},
    legend cell align=left,
    width=0.76\linewidth,
    height=3.4cm,
  }
}

\newcommand{\E}{\mathbb{E}}
\DeclarePairedDelimiter\abs{\lvert}{\rvert}
\DeclarePairedDelimiter\norm{\|}{\|}
\def\defeq{:=}
\def\rset{\mathbb{R}}
\def\eye{\operatorname{Id}}
\makeatletter
\def\p{\@ifstar{\p@star}{\p@normal}}
\def\p@normal#1{(#1)}
\def\p@star#1{\!\left(#1\right)}
\makeatother
\def\Xspace{\mathbb{R}^d}
\newcommand{\psAl}[2]{\left\langle#1,#2\right\rangle}
\providecommand{\D}{\mathop{}\!\mathrm{d}}
\newcommand{\lec}{\!\le\!}
\DeclareMathOperator{\arcsinh}{arcsinh}
\DeclareMathOperator{\Law}{Law}

\def\DEnoise#1{D_{\pi, #1}}
\def\OPnoise#1{N_{#1}}
\def\DEnoiseA#1{\replaced{\widetilde D^\theta_{\pi, #1}}{\widetilde D_{\pi, #1}}}
\def\PhiA#1{\widetilde\Phi^\theta_{#1}}
\def\PhiS#1{\widetilde\Phi_{#1}}
\def\CMstep#1{\mathcal L_{#1}}
\def\CMstepA#1{\replaced{\widetilde{\mathcal L}^\theta_{#1}}{\widetilde{\mathcal L}_{#1}}}

\newcommand{\equalcontribstar}{\textsuperscript{*}}

\newcommand{\mainfigheight}{2.44cm}

\newtheorem{theorem}{Theorem}[section]
\newtheoremstyle{compactassumption}
  {\topsep}{\topsep}{\itshape}{}{\bfseries}{.}{.5em}{\thmnumber{#2}}
\theoremstyle{compactassumption}
\newtheorem{assumption}{Assumption}

\theoremstyle{plain}
\newcommand{\newsharedtheorem}[2]{%
  \newaliascnt{#1}{theorem}\counterwithin*{#1}{section}%
  \newtheorem{#1}[#1]{#2}\aliascntresetthe{#1}}
\newsharedtheorem{remark}{Remark}
\newsharedtheorem{lemma}{Lemma}
\newsharedtheorem{proposition}{Proposition}
\newsharedtheorem{corollary}{Corollary}

\makeatletter
\@for\@tempa:=theorem,remark,lemma,proposition,corollary\do{%
  \expandafter\def\csname theH\@tempa\endcsname{\thetheorem}}
\makeatother

\crefname{appendix}{Appendix}{Appendices}
\Crefname{appendix}{Appendix}{Appendices}
\crefname{assumption}{Assumption}{Assumptions}
\crefformat{assumption}{#2#1#3}
\Crefformat{assumption}{#2#1#3}
\crefrangeformat{assumption}{#3#1#4--#5#2#6}
\Crefrangeformat{assumption}{#3#1#4--#5#2#6}
\crefmultiformat{assumption}{#2#1#3}{ and~#2#1#3}
  {, #2#1#3}{, and~#2#1#3}
\Crefmultiformat{assumption}{#2#1#3}{ and~#2#1#3}
  {, #2#1#3}{, and~#2#1#3}
\crefrangemultiformat{assumption}{#3#1#4--#5#2#6}
  { and~#3#1#4--#5#2#6}{, #3#1#4--#5#2#6}{, and~#3#1#4--#5#2#6}
\Crefrangemultiformat{assumption}{#3#1#4--#5#2#6}
  { and~#3#1#4--#5#2#6}{, #3#1#4--#5#2#6}{, and~#3#1#4--#5#2#6}

\title{Iterating Consistency Models: Stability, \\ Error Bounds and Noise Schedules}

\author{%
\normalfont
\begin{minipage}{\textwidth}
\centering
\textbf{Alessio Spagnoletti}$^{1}$\equalcontribstar \quad
\textbf{Abdul-Lateef Haji-Ali}$^{2}$\equalcontribstar
\\[0.35em]
\textbf{Andrés Almansa}$^{1}$ \quad
\textbf{Alain Oliviero Durmus}$^{3}$ \quad
\textbf{Eric Moulines}$^{4}$ \quad
\textbf{Marcelo Pereyra}$^{2}$
\\[0.35em]
$^{1}$Laboratoire MAP5, UMR 8145, Université Paris Cité, CNRS
\\
$^{2}$Heriot-Watt University, School of Mathematical and Computer Sciences \& Maxwell Institute for Mathematical Sciences
\\
$^{3}$École Polytechnique
\\
$^{4}$Laboratoire de Recherche d’EPITA, Le Kremlin-Bicêtre, France
\\
$^{5}$Computing and Mathematical Sciences Division, Mohamed bin Zayed University of Artificial Intelligence (MBZUAI), Abu Dhabi, United Arab Emirates
\\[0.35em]
\footnotesize
\equalcontribstar Joint primary authors
\end{minipage}%
}

\begin{document}
\maketitle
\begin{abstract}
Consistency models (CMs) have become a leading approach for generating
high-quality samples in few steps. However, adding steps can improve or degrade
sample quality in ways that are highly sensitive to the schedule and that
existing theory does not fully explain. To provide accuracy guarantees and guide CM sampler design, we analyze multistep CM sampling as a composition of noising and approximate denoising operators. Under explicit, verifiable stability
assumptions, we derive a non-asymptotic error bound that separates
contraction of the initialization error from accumulation of approximation
error. The bound assigns distinct roles to the schedule: large early noise
levels drive contraction, while small late noise levels control the
residual bias. As a corollary, we obtain explicit constants for
strongly log-concave and semi-log-concave targets. We further establish a complementary guarantee whose assumptions, one-step accuracy and stability, can be estimated for a given trained model.  Experiments show that the
contraction and approximation profiles entering our bounds can be reliably
measured and closely match the predicted functional forms. Together, these
results provide a meaningful convergence theory for multi-step CMs and a
practical route to sampler design.
\end{abstract}

\section{Introduction}
Diffusion models (DMs)~\citep{sohl2015deep,song2019generative,song2021sde,ho2020ddpm} have revolutionized generative machine learning, achieving remarkable results in image~\citep{ramesh2022hierarchical,rombach2022ldm,zhang2023adding}, video~\citep{ho2022imagenvideohighdefinition,blattmann2023stable,wan2025}, 3D~\citep{liu2023zero,poole2022dreamfusion,wang2023prolificdreamer}, and audio generation~\citep{evans2024fast,liu2023audioldm}. Early DMs required hundreds to thousands of neural function evaluations (NFEs) per sample. However, distillation substantially reduces this cost, such as direct~\citep{Luhman2021KnowledgeDI,Zheng2022FastSO}, adversarial~\citep{Wang2022DiffusionGANTG,sauer2024adversarial}, or variational score distillation~\citep{wang2023prolificdreamer,yin2024one,yin2024improved,Luo2023DiffInstructAU,Salimans2024MultistepDO}. Within this context, consistency models (CMs)~\citep{song2023consistency,songdhariwal2023improved,kim2023consistency} have emerged as particularly effective few-step DMs. A CM learns a \emph{consistency function} that maps noisy points along a probability-flow ODE trajectory directly to clean data. It can be distilled from a pre-trained DM\added{ (consistency distillation)} or trained from scratch \replaced{(consistency training)}{via consistency training}.

Training CMs is nonetheless delicate. Discrete-time training introduces a discretization error that degrades sample quality, whereas naive continuous-time training is notoriously unstable~\citep{song2023consistency,songdhariwal2023improved,geng2024consistency}. sCM~\citep{lu2025simplifyingstabilizingscalingcontinuoustime} addressed this with TrigFlow, a formulation that greatly improves stability. A closely related line of work~\citep{geng2025mean,sabour2025align,zhou2025inductive} learns \emph{flow maps}~\citep{boffi2024flow}, which transport samples between any two points of the trajectory and include CMs as an important special case. Methods such as IMM~\citep{zhou2025inductive}, MeanFlow~\citep{geng2025mean}, and SCoT~\citep{wu2025traflow} make training more stable still by adding constraints to the learning objective.

In principle, CMs can generate a sample with a single network evaluation, but practical samplers often apply the model several times to improve sample quality. Each iteration re-noises the current sample and maps it back to the target with the learned consistency function. Multistep CMs, however, are brittle: additional steps do not always help, sample quality is highly sensitive to the noise schedule, and stopping early sometimes yields better samples than continuing. Existing theory for multistep samplers is limited and offers little guidance to practitioners designing or deploying CMs.

Intuitively, multistep CM sampling trades off two competing effects. On the one hand, each additional step can reduce the discrepancy between the generated and target distributions. On the other hand, each step introduces a new error from model bias. The key question is thus whether the contraction gained at each step outweighs the error it introduces. This paper makes that balance explicit.

To this end, we study multistep samplers that alternately apply stochastic noising and deterministic denoising operators to probability distributions. Our main contribution is a non-asymptotic convergence theory for multistep CMs that explains how the schedule governs sample quality and provides a principled basis for sampler design. Specifically:
\begin{itemize}[leftmargin=*,itemsep=2pt,topsep=3pt]
  \item \textbf{Error bound and the role of the schedule.} We derive a non-asymptotic error bound that separates contraction of the initialization error from accumulated approximation error. The schedule plays two roles: large early steps drive contraction, while small late steps control the residual bias. For strongly log-concave and semi-log-concave targets, we obtain explicit constants.
  \item \textbf{Guarantees for trained CMs.} We provide a complementary guarantee whose assumptions, one-step accuracy and stability of the learned map, can be estimated for any trained model.
  \item \textbf{Empirical validation.} Experiments show that the contraction
    and approximation profiles in our bounds can be measured reliably and
    closely match the functional forms predicted by our theory.
\end{itemize}

\paragraph{Relation to existing work.}
Convergence theory for CM sampling is recent. \citet{Lyu2024SamplingEasyCM}
analyze one-step and multistep sampling under Ornstein--Uhlenbeck noising.
Assuming that the learned map is Lipschitz and that the score and
consistency training errors are small, they bound the Wasserstein error of
the sampler and show that repeating a suitable noise level reduces this
error geometrically, down to a plateau set by the training errors.
\citet{chen2025convergenceconsistencymodelmultistep} derive multistep
guarantees for bounded targets via a Kullback--Leibler recursion, and other
work addresses statistical accuracy and consistency
training~\citep{dou2024theory,li2025consistencytraining}.

We build on the error recursion of \citet{Lyu2024SamplingEasyCM} but establish its ingredients under different assumptions. First, as they suggest, we place the stability condition required for contraction on the exact denoiser rather than on the learned network, and show that it holds under natural conditions on the target, such as semi-log-concavity. Second, we allow general Gaussian noising processes and arbitrary schedules. Third, we adopt a more realistic error model in which the score error may diverge as the noise level tends to zero~\cite[Assumption~A3]{debortoli2022manifold}, where noising no longer smooths the target, rather than remaining uniformly small. We make the correspondence between the two sets of results precise after \cref{thm:main}, in \cref{sec:relaxed-noise-CM}, and in \cref{sec:app_compars}.

\section{Background}\label{sec:background}
\paragraph{Notation.} We work on $\mathbb{R}^d$ with the Euclidean norm $\|\cdot\|$, and
denote by $\mathcal{P}_2(\Xspace)$ the set of Borel probability measures on
$\Xspace$ with finite second moment. For a measurable map $F$ and
$\rho \in \mathcal{P}_2(\Xspace)$, the pushforward $F_\#\rho$ is the law of
$F(X)$ with $X \sim \rho$. We identify a measure with its Lebesgue density
whenever the latter exists. For $r > 0$, we define the Gaussian smoothing
$\rho_r \coloneqq \kappa_r * \rho$, where
$\kappa_r(x) \coloneqq (2\pi r^2)^{-d/2} \exp\!\big(-\|x\|^2/(2r^2)\big)$;
equivalently, $\rho_r$ is the law of $X + rZ$ with $X \sim \rho$ and
$Z \sim \mathcal{N}(0, \mathbb{I})$ independent. The $2$-Wasserstein distance
between $\rho_1, \rho_2 \in \mathcal{P}_2(\Xspace)$ is
$W_2^2(\rho_1,\rho_2):=\inf_{\Gamma\in\Pi(\rho_1,\rho_2)}
\int\|x-y\|^2\,\Gamma(\D x,\D y)$,
where $\Pi(\rho_1, \rho_2)$ is the set of couplings of $\rho_1$ and $\rho_2$. A dot denotes differentiation in time, e.g.\
$\dot{x}_t = \D x_t / \D t$.

\paragraph{Diffusion models.} Diffusion models (DMs) draw approximate samples from a target distribution $\pi\in\mathcal P_2(\Xspace)$ by reversing an $\mathbb{R}^d$-valued noising process $X_t=\alpha_tX_0+\sigma_t Z$, where $X_0\sim\pi$ and
$Z\sim\mathcal N(0, \mathbb{I})$ are independent. The schedules $\alpha$ and $\sigma$ are non-negative and differentiable, with $\alpha_0 = 1$, $\sigma_0 = 0$ and $\alpha_t>0$ on the sampling interval. The noise-to-signal
ratio $\gamma_t \coloneqq \sigma_t / \alpha_t$ is non-decreasing, with $\gamma_t>0$ for $t>0$. For $t>0$, the law of $X_t$ has density
$\varpi_t(x)=\alpha_t^{-d}\pi_{\gamma_t}(x/\alpha_t)$. The reversal is typically implemented via the probability-flow ODE~\citep{song2021sde}
\begin{equation}\label{eq:grad-flow}
  \dot{x}_t = a_t x_t - \tfrac{1}{2} b_t^2 \nabla \log \varpi_t(x_t),
  \qquad
  a_t = \dot{\alpha}_t / \alpha_t, \quad
  b_t^2 = 2 \alpha_t \sigma_t \dot{\gamma}_t,
\end{equation}
whose solutions preserve the marginals $(\varpi_t)_{t > 0}$, when the flow is solved backwards from $X_t$. Let $\Phi_{t \to 0}$ denote the resulting map from time $t$ to time $0$, note that $(\Phi_{t \to 0})_\# \varpi_t = \pi$. Although easy to train, early DMs approximate the score
$\nabla \log \varpi_t$ rather than the solution of~\eqref{eq:grad-flow}, and
must therefore integrate~\eqref{eq:grad-flow} numerically to $0$,
at a cost of hundreds of neural function evaluations (NFEs) per sample. This drawback has been addressed through so-called distilled DMs, notably consistency models.

\paragraph{Consistency models.} Consistency models (CMs) approximate the flow map $\Phi_{t \to 0}$, predicting the endpoint at time $0$ of a state at time $t$ in a single NFE. They are trained either by distilling a pre-trained DM or directly from data via consistency training~\citep{song2023consistency,songdhariwal2023improved,boffi2025selfdistill}, using two properties of the flow map: self-consistency, $\Phi_{t \to 0}(x_t) = \Phi_{s \to 0}(x_s)$ for states on the same trajectory of~\eqref{eq:grad-flow}, and the boundary condition $\Phi_{0 \to 0}(x) = x$. The choice of $\alpha_t$ and $\sigma_t$ also affects training stability. Two common choices are the variance-exploding (VE) schedule, $\alpha_t = 1$ and $\sigma_t = t$ for $t \in [0, \infty)$~\citep{song2023consistency,songdhariwal2023improved}, and the TrigFlow schedule, $\alpha_t = \cos t$ and $\sigma_t = \sin t$ for $t \in [0, \pi/2)$~\citep{geng2024consistency,lu2025simplifyingstabilizingscalingcontinuoustime}.

Since the learned CM map $\replaced{\PhiA{t \to 0}}{\tilde{\Phi}_{t \to 0}}$\added{, a neural network with weights $\theta$,} only approximates
$\Phi_{t \to 0}$, a single CM step can yield insufficiently accurate
samples. Multistep sampling alternates forward noising with reverse
transport. \citet{song2023consistency} and \citet{heek2024multistep} show
that these iterations sometimes improve sample quality.
Namely, starting from an initialisation $X_0^{(0)}$, iteration $k \ge 1$ computes
\begin{equation}\label{eq:iterative-cm-sampler}
  X_{t_k}^{(k)} = \alpha_{t_k} X_0^{(k-1)} + \sigma_{t_k} Z^{(k)},
  \qquad
  X_0^{(k)} = \replaced{\PhiA{t_k \to 0}}{\tilde{\Phi}_{t_k \to 0}}\big(X_{t_k}^{(k)}\big),
\end{equation}
where $X_0^{(0)}, Z^{(1)}, \dots, Z^{(\replaced{n}{K})}$ are independent, with
$Z^{(k)} \sim \mathcal{N}(0, \eye)$, and $t_1 > \dots > t_{\replaced{n}{K}} > 0$.
In practice, $2$--$8$ iterations often suffice for excellent samples, yet more iterations can improve or degrade quality in ways that depend
sensitively on the schedule and that existing theory does not fully explain.

We herein close this gap with non-asymptotic bounds on the distance between the law
of $X_0^{(k)}$ and $\pi$, explicit in the initialization, schedule and time
steps. The bounds separate contraction of the initialization error, driven
by large early noise levels, from accumulation of approximation error,
controlled by small late noise levels. Crucially, their assumptions can be
estimated directly from a pre-trained model, so the bounds can be evaluated
in practice rather than serving only as qualitative guidance. For strongly
log-concave and semi-log-concave targets, all constants are explicit.

\section{Main results}\label{sec:main_results}

\subsection{Convergence of multistep consistency models}\label{sec:gen-noise-CM}

Let $\pi\in\mathcal P_2(\Xspace)$ be the target. We describe the sampler
\eqref{eq:iterative-cm-sampler} through its action on probability distributions. For $t>0$, the
noising operator $\OPnoise{t}:\mathcal P_2(\Xspace)\to\mathcal P_2(\Xspace)$
maps the law of $X$ to the law of $\alpha_tX+\sigma_tZ$, with
$Z\sim\mathcal N(0,\eye)$ independent of $X$, so that
$\OPnoise{t}(\pi)=\varpi_t$; the exact denoising operator is the
pushforward by the flow map of \eqref{eq:grad-flow},
$\DEnoise{t}(\rho)=(\Phi_{t\to0})_\#\rho$; and the learned denoising
operator is $\DEnoiseA{t}(\rho)=(\replaced{\PhiA{t\to0}}{\widetilde\Phi_{t\to0}})_\#\rho$. The
exact and learned sampling steps are
$\CMstep{t}=\DEnoise{t}\circ\OPnoise{t}$ and
$\CMstepA{t}=\DEnoiseA{t}\circ\OPnoise{t}$, so that the law $\rho_k$ of
\deleted{the iterate }$\replaced{X_0^{(k)}}{U_k:=X_0^{(k)}}$ in \eqref{eq:iterative-cm-sampler} satisfies
$\rho_k=\CMstepA{t_k}(\rho_{k-1})$\replaced{, $k\ge1$, where $\rho_0$ is the law of $X_0^{(0)}$}{ with $\rho_0=\Law(U_0)$}. The results of
this section use only the properties of these operators stated below, and
therefore hold for any maps
$\DEnoise{t},\DEnoiseA{t}:\mathcal P_2(\Xspace)\to\mathcal P_2(\Xspace)$
with $\CMstepA{t}$ preserving $\mathcal P_2(\Xspace)$, which is the case
when $\replaced{\PhiA{t\to0}}{\widetilde\Phi_{t\to0}}$ has at most linear growth. Fix finite
nonnegative profiles $R_t,L_t,E_t$ for noising stability, denoising
stability and approximation error, respectively.

\begin{assumption}\label{ass:oneshot}
For every $t>0$, $\CMstep{t}(\pi) = \pi$.
\end{assumption}

\begin{assumption}\label{ass:noising}
For all $t>0$ and $\rho_1,\rho_2\in\mathcal P_2(\Xspace)$,
$W_2\big(\OPnoise{t}(\rho_1),\,\OPnoise{t}(\rho_2)\big)
\;\le\; R_t\,W_2(\rho_1,\rho_2)$.

\end{assumption}

\begin{assumption}\label{ass:denoise}
For all $t>0$ and $\rho_1,\rho_2\in\mathcal P_2(\Xspace)$,
$ W_2\big(\DEnoise{t}(\rho_1),\,\DEnoise{t}(\rho_2)\big)
\;\le\; L_t\,W_2(\rho_1,\rho_2)$.

\end{assumption}

\begin{assumption}\label{ass:oneshot-approximation}
For every $t>0$ and $\rho\in\mathcal P_2(\Xspace)$, $W_2\big(\CMstep{t}(\rho),\,\CMstepA{t}(\rho)\big)\le E_t$.
\end{assumption}

\begin{proposition}[Finite-horizon sampling error]\label{thm:main}
  Let \cref{ass:oneshot,ass:noising,ass:denoise,ass:oneshot-approximation}
  hold and set $q_t:=R_tL_t$. Fix $T>0$ and an integer $n\ge1$. Let
  $t_1,\ldots,t_n\in(0,T]$ be an arbitrary schedule and
  $\rho_0\in\mathcal P_2(\Xspace)$ an initialization. Define
  $\rho_k:=\CMstepA{t_k}(\rho_{k-1})$ for $k=1,\ldots,n$ and
  $Q_{k,j}:=\prod_{i=k}^{j}q_{t_i}$ for $1\lec k\lec j\lec n$, with
  $Q_{j+1,j}=1$. Then\added{ the error after $n$ steps satisfies}
  \begin{equation}\label{eq:multi-stage-cm-err-bound}
    W_2(\rho_n,\pi) \le  Q_{1,n} W_2(\rho_{0},\pi)
                   + \sum_{k=1}^{n} Q_{k+1,n} E_{t_k} \,.
  \end{equation}
  If, additionally, the schedule satisfies the balance condition
  $q_{t_k}E_{t_{k-1}}\le\xi_n E_{t_k}$ for $k=2,\ldots,n$ and some
  $\xi_n\in(0,1)$, then
  \begin{equation}\label{eq:cm-geometric-error-bound}
    \sum_{k=1}^n Q_{k+1,n}E_{t_k}
    \le E_{t_n}\frac{1-\xi_n^n}{1-\xi_n}
    \le\frac{E_{t_n}}{1-\xi_n}.
  \end{equation}
  The schedule may repeat noise levels and need not be decreasing.
  The balance condition also covers zero error bounds;
  for $n=1$ it is empty and the sum equals $E_{t_1}$.
\end{proposition}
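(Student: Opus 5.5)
The plan is to derive a one-step error recursion and then unroll it. Set $e_k:=W_2(\rho_k,\pi)$. By \cref{ass:oneshot} we have $\pi=\CMstep{t_k}(\pi)$, so the triangle inequality gives
\begin{equation*}
  e_k = W_2\big(\CMstepA{t_k}(\rho_{k-1}),\CMstep{t_k}(\pi)\big)
  \le W_2\big(\CMstepA{t_k}(\rho_{k-1}),\CMstep{t_k}(\rho_{k-1})\big)
  + W_2\big(\CMstep{t_k}(\rho_{k-1}),\CMstep{t_k}(\pi)\big).
\end{equation*}
The first term is at most $E_{t_k}$ by \cref{ass:oneshot-approximation}; this is legitimate because $\rho_{k-1}\in\mathcal P_2(\Xspace)$, since $\CMstepA{t}$ preserves $\mathcal P_2(\Xspace)$. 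For the second term we write $\CMstep{t_k}=\DEnoise{t_k}\circ\OPnoise{t_k}$. Applying \cref{ass:denoise} to the pair $\OPnoise{t_k}(\rho_{k-1}),\OPnoise{t_k}(\pi)$, both of which lie in $\mathcal P_2$, and then \cref{ass:noising}, bounds it by $L_{t_k}R_{t_k}e_{k-1}=q_{t_k}e_{k-1}$. This yields
\begin{equation*}
  e_k\le q_{t_k}e_{k-1}+E_{t_k},\qquad k=1,\dots,n.
\end{equation*}

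Next I unroll this recursion by induction on $n$, claiming $e_n\le Q_{1,n}e_0+\sum_{k=1}^n Q_{k+1,n}E_{t_k}$. The case $n=1$ is the recursion itself, using $Q_{2,1}=1$. For the inductive step, I multiply the bound for $e_{n-1}$ by $q_{t_n}\ge0$. This gives $q_{t_n}Q_{1,n-1}=Q_{1,n}$ and $q_{t_n}Q_{k+1,n-1}=Q_{k+1,n}$, and the new term $E_{t_n}$ carries the coefficient $Q_{n+1,n}=1$. Nonnegativity of the profiles is what keeps the inequalities in the right direction. Nothing in the argument uses monotonicity of the schedule, so repeated or non-decreasing times are allowed.

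For the geometric bound \eqref{eq:cm-geometric-error-bound}, set $S_j:=\sum_{k=1}^j Q_{k+1,j}E_{t_k}$, which satisfies $S_j=q_{t_j}S_{j-1}+E_{t_j}$ with $S_1=E_{t_1}$. I claim $S_j\le E_{t_j}(1-\xi_n^j)/(1-\xi_n)$ and prove it by induction. The base case $j=1$ holds with equality. For the step,
\begin{equation*}
  S_j\le q_{t_j}E_{t_{j-1}}\frac{1-\xi_n^{j-1}}{1-\xi_n}+E_{t_j}
  \le E_{t_j}\Big(\xi_n\frac{1-\xi_n^{j-1}}{1-\xi_n}+1\Big)=E_{t_j}\frac{1-\xi_n^{j}}{1-\xi_n},
\end{equation*}
where the middle inequality uses the balance condition. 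Taking $j=n$ gives the first inequality, and the second follows from $\xi_n\in(0,1)$. Zero error values cause no trouble because the balance condition is stated multiplicatively, with no division by $E_{t_k}$. When $n=1$ the condition is vacuous and $S_1=E_{t_1}$.

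There is no real obstacle in this argument. The only delicate point is the bookkeeping that every measure fed into the assumptions lies in $\mathcal P_2$: the iterates $\rho_k$, and the noised measures handed to $\DEnoise{t}$. All of these are covered by the standing hypotheses.
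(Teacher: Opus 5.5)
Your proof is correct and follows the paper's argument: you insert $\CMstep{t_k}(\rho_{k-1})$ via the triangle inequality to get $e_k\le q_{t_k}e_{k-1}+E_{t_k}$ and then unroll it. For the geometric bound, the paper bounds each term $a_k:=Q_{k+1,n}E_{t_k}$ by $\xi_n^{n-k}E_{t_n}$ using $a_{k-1}\le\xi_n a_k$, whereas you induct forward on partial sums; this is only a reorganization of the same use of the balance condition.
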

The proof is in \cref{sec:proof-gause-noise-CM}.

The proposition uses a uniform error bound for the learned step and stability
of the exact step; it does not require the learned step to be contractive. The result alone implies neither an optimal
finite number of steps nor deterioration when more steps are used.

\paragraph{Repeating a fixed noise level.}
For a fixed $t>0$ with $0\lec q_t<1$, let
$\rho_n=(\CMstepA{t})^n(\rho_0)$. The geometric sum in
\eqref{eq:multi-stage-cm-err-bound} gives
$W_2(\rho_n,\pi)\le q_t^n W_2(\rho_0,\pi)
  +E_t(1-q_t^n)/(1-q_t)$ for $n\ge1$.
Iteration attenuates the initial-distance term, while the accumulated-error
bound approaches $E_t/(1-q_t)$ (for their factors, this is the
constant-level bound of \citet[Corollary~3.11]{Lyu2024SamplingEasyCM},
minimized over the level). Repeating a noisy step can therefore lower the
certificate for a large initial error, while reducing the final bound may
require a level with a smaller ratio $E_t/(1-q_t)$.

\paragraph{Adding one step.}
Given a certificate $\bar d \ge W_2(\rho, \pi)$, a step at level $t$ yields $q_t \bar d + E_t$, which improves on $\bar d$ exactly when $E_t < (1-q_t)\bar d$, i.e., when $\bar d$ exceeds the plateau $E_t/(1-q_t)$; no improvement is possible for $q_t \ge 1$. This is the case distinction in the proof of \citet[Corollary~3.11]{Lyu2024SamplingEasyCM}, here valid at every level with $q_t < 1$ and for both placements of the hypotheses (\cref{prop:learned-recursion}). An extra step pays off only if its attenuation outweighs its error budget, and lowering the noise level reduces $E_t$ but weakens attenuation (\cref{sec:gauss-noise-CM}). These statements compare certificates, not actual errors.

\citet[Theorem~2 and Section~3.3]{chen2025convergenceconsistencymodelmultistep} use a related decomposition: appending a lower-noise step increases their accumulated-error term but reduces the final denoising-error term, a trade-off they exploit to choose two-step OU and geometric VE schedules. Their recursion, however, propagates the KL divergence between noisy marginals with coefficient one, while our factors $q_t$ quantify when later steps attenuate earlier errors. \Cref{rem:chen,sec:app_compars} compare the assumptions and bounds.

\subsection{Noise-dependent stability and approximation error}\label{sec:gauss-noise-CM}
We specialize
\cref{ass:oneshot,ass:noising,ass:denoise,ass:oneshot-approximation} to the
operators of \cref{sec:gen-noise-CM} when the learned map is an
approximate flow of the probability-flow ODE. Replacing \(\nabla \log \varpi_{t}\p{\cdot}\)
in \eqref{eq:grad-flow} by an \replaced{approximation}{estimate}
\(\replaced{s_\theta(\cdot,t)}{s_{t}\p{\cdot}}\) \deleted{learned from data} gives the
\replaced{approximate-score}{learned-score} ODE $\dot
x_t=a_tx_t-\tfrac12 b_t^2\replaced{s_\theta(x_t,t)}{s_t(x_t)}$.  In this section
\replaced{the learned map $\PhiA{t\to0}$ is}{$\widetilde{\Phi}_{t\to 0}$ denotes} its flow map from time $t$ to $0$, so that
$\DEnoiseA{t}(\rho)=(\replaced{\PhiA{t\to0}}{\widetilde\Phi_{t\to0}})_\#\rho$ is the denoiser induced by the exact
flow of the \replaced{approximate}{learned} score.
\added{The score is not learned directly. If $\PhiA{t\to0}$ is a
  diffeomorphism for every $t$, depends differentiably on $t$ and is the
  identity at $t=0$, the curves $t\mapsto(\PhiA{t\to0})^{-1}(x_0)$ have a
  velocity field $V_t$, and $s_\theta(x,t):=2(a_tx-V_t(x))/b_t^2$ is the score
  implied by the learned map wherever $b_t>0$.}

With these definitions, \cref{ass:oneshot} holds by construction: $\OPnoise{t}(\pi)=\varpi_t$ and, since the probability-flow ODE \eqref{eq:grad-flow} transports $\varpi_t$ to $\varpi_0=\pi$~\citep{song2021sde}, $\CMstep{t}(\pi)=(\Phi_{t\to 0})_{\#}\varpi_t=\pi$ for all $t>0$.

We now verify \cref{ass:noising,ass:denoise,ass:oneshot-approximation}
for these operators, starting with the stability of $\OPnoise{t}$.

\begin{lemma}\label{thm:lipschitz-noiser}
Let $\alpha_t$ be the signal coefficient of the noising process of
\cref{sec:background}, so that $\OPnoise{t}$ maps the law of $X$ to that
of $\alpha_tX+\sigma_tZ$. Then $\OPnoise{t}$ is $\alpha_t$-Lipschitz for
$W_2$, and $\alpha_t$ is its exact Lipschitz constant; that is,
\cref{ass:noising} holds with $R_t=\alpha_t$.
\end{lemma}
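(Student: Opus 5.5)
The proof splits into an upper bound and a matching lower bound, both by direct coupling arguments.

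\emph{Upper bound.} Fix $\rho_1,\rho_2\in\mathcal P_2(\Xspace)$ and let $\Gamma$ be an optimal coupling of them; one exists by standard theory on $\mathcal P_2$. Take $(X,Y)\sim\Gamma$ and $Z\sim\mathcal N(0,\eye)$ independent of $(X,Y)$. Use the \emph{same} noise for both:
\[
\bigl(\alpha_tX+\sigma_tZ,\ \alpha_tY+\sigma_tZ\bigr).
\]
This pair is a coupling of $\OPnoise{t}(\rho_1)$ and $\OPnoise{t}(\rho_2)$. The noise cancels in the difference, so
\[
W_2^2\bigl(\OPnoise{t}(\rho_1),\OPnoise{t}(\rho_2)\bigr)\le \E\|\alpha_t(X-Y)\|^2=\alpha_t^2W_2^2(\rho_1,\rho_2).
\]
Since $\alpha_t\ge0$, this gives the Lipschitz bound with constant $\alpha_t$, i.e.\ \cref{ass:noising} with $R_t=\alpha_t$.

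\emph{Sharpness.} I would use Dirac masses: $\rho_1=\delta_x$ and $\rho_2=\delta_y$ with $x\neq y$. Then $\OPnoise{t}(\rho_1)=\mathcal N(\alpha_tx,\sigma_t^2\eye)$ and $\OPnoise{t}(\rho_2)=\mathcal N(\alpha_ty,\sigma_t^2\eye)$ are translates of each other. To get the lower bound, take any coupling $(U,V)$ of these two laws. By Jensen,
\[
\E\|U-V\|^2\ge\|\E U-\E V\|^2=\alpha_t^2\|x-y\|^2 .
\]
Hence $W_2$ between the two Gaussians equals $\alpha_t\|x-y\|=\alpha_tW_2(\delta_x,\delta_y)$, so equality is attained. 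No constant smaller than $\alpha_t$ works, and $\alpha_t$ is the exact Lipschitz constant.

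The argument is elementary; the only point needing care is the lower bound. The mean-shift (Jensen) inequality handles it cleanly, and it also covers $\sigma_t=0$ and the degenerate case $\alpha_t=0$ trivially.
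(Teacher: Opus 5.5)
Your proof is correct and follows the paper's argument: you couple an optimal pair with a shared independent Gaussian noise for the upper bound, and use the Dirac masses $\delta_x,\delta_y$ with $x\neq y$ for sharpness. Your Jensen step $\E\|U-V\|^2\ge\|\E U-\E V\|^2$ spells out the lower bound that the paper leaves implicit when it asserts equality for the two translated Gaussians.
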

The proof is in \cref{sec:proof-gause-noise-CM}; we now turn to
\cref{ass:denoise,ass:oneshot-approximation}.

\begin{theorem}\label{theo:lip-error-ft}
  Let $\pi\in\mathcal P_2(\Xspace)$ and $f_r=-\log\pi_r$, where
  $\pi_r=\kappa_r*\pi$. Assume $\alpha$ and $\gamma=\sigma/\alpha$ are
  absolutely continuous on every sampling interval $[0,t]$, with
  $\alpha_0=1$, $\gamma_0=0$,
  $\alpha_t,\gamma_t>0$ for $t>0$, and $\gamma$ nondecreasing.
  Suppose \eqref{eq:grad-flow} and the \replaced{approximate-score}{learned-score} ODE, with
  $\replaced{s_\theta(x,t)}{s_t(x)}=\alpha_t^{-1}\replaced{\widetilde s_\theta(x/\alpha_t,\gamma_t)}{\widetilde s_{\gamma_t}(x/\alpha_t)}$, admit Borel backward flow maps whose trajectories, for every terminal state, are locally absolutely continuous on $(0, t]$ and extend continuously to $[0, t]$. Assume $\nabla^2 f_r \succeq \mu_r \eye$ for all $r > 0$, and define
  \begin{equation}\label{eq:approx-assumption}
    e_r:=\sup_{x\in\Xspace}
      \norm{\replaced{\widetilde s_\theta(x,r)}{\widetilde s_r(x)}-\nabla\log\pi_r(x)}<\infty.
  \end{equation}
  Suppose $\mu_r,e_r$ are measurable and
  $r\mu_r^-,re_r\in L^1_{\mathrm{loc}}([0,\infty))$, where
  $\mu_r^-:=\max\{-\mu_r,0\}$. Then the pushforward operators
  $\DEnoise{t}$ preserve $\mathcal P_2(\Xspace)$, and
  \cref{ass:denoise,ass:oneshot-approximation} hold with
\begin{equation}\label{eq:general_Lt_Et}
\begin{split}
L_{t} &= \frac{1}{\alpha_{t}} \, \exp\p*{- \int_{0}^{\gamma_{t}} u \, \mu_{u} \D u}\,,\quad
 E_t = \int_{0}^{\gamma_{t}}
                r \, e_r \,  \exp\p*{- \int_{0}^{r} u \mu_{u} \D u}
                \D r\,.
\end{split}
\end{equation}
  The integrals from zero are improper integrals: since only the negative
  part of $u\mu_u$ is assumed integrable near zero,
  $\int_0^r u\mu_u\D u$ may equal $+\infty$, in which case the
  exponential is set to zero.
\end{theorem}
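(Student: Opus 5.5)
The plan is to reparametrize the probability-flow ODE in the noise-to-signal variable $r=\gamma_t$ and rescaled state $y=x/\alpha_t$. Write $x_t=\alpha_t y_{\gamma_t}$. Since $\varpi_t(x)=\alpha_t^{-d}\pi_{\gamma_t}(x/\alpha_t)$, we have $\nabla\log\varpi_t(x)=\alpha_t^{-1}\nabla\log\pi_{\gamma_t}(x/\alpha_t)$. Substituting $b_t^2=2\alpha_t\sigma_t\dot\gamma_t=2\alpha_t^2\gamma_t\dot\gamma_t$ into \eqref{eq:grad-flow}, the $a_tx_t$ term cancels against $\dot\alpha_t y$. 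This gives the VE-type ODE
\[
\frac{\D y_r}{\D r}=-r\,\nabla\log\pi_r(y_r)=r\,\nabla f_r(y_r),
\]
and likewise $\D\tilde y_r/\D r=-r\,\widetilde s_\theta(\tilde y_r,r)$ for the approximate-score ODE. Absolute continuity of $\alpha,\gamma$ and of the trajectories justifies this change of variables, with $\gamma$ nondecreasing but possibly flat on some intervals, where nothing moves. So $\Phi_{t\to0}(x)=\Psi_{\gamma_t\to0}(x/\alpha_t)$, with $\Psi$ the backward flow in $r$, and similarly for the learned map. The factor $1/\alpha_t$ in $L_t$ comes directly from this rescaling.

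For \cref{ass:denoise}, I will take two backward solutions $y_r,y'_r$ started at $r=\gamma_t$ and differentiate $\|y_r-y'_r\|^2$ in $r$. By $\mu_r$-convexity of $f_r$,
\[
\tfrac{\D}{\D r}\|y_r-y'_r\|^2=2r\langle\nabla f_r(y_r)-\nabla f_r(y'_r),y_r-y'_r\rangle\ge2r\mu_r\|y_r-y'_r\|^2.
\]
Integrating backward from $\gamma_t$ to $0$ gives $\|y_0-y'_0\|\le \exp(-\int_0^{\gamma_t}u\mu_u\D u)\|y_{\gamma_t}-y'_{\gamma_t}\|$. The map is therefore Lipschitz with constant $L_t$ after the $1/\alpha_t$ rescaling. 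Pushing forward an optimal coupling then gives the $W_2$ bound, and preservation of $\mathcal P_2$ follows from the Lipschitz bound plus $\Phi_{t\to0}(0)$ being finite. Integrability of $r\mu_r^-$ near $0$ makes the Grönwall step legitimate. If $\int_0^r u\mu_u=+\infty$, the bound degenerates to $0$, consistent with the stated convention. Some care is needed here: I will work on $[\epsilon,\gamma_t]$ and let $\epsilon\to0$ using the continuous extension of trajectories at $0$.

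For \cref{ass:oneshot-approximation}, I couple $\CMstep{t}(\rho)$ and $\CMstepA{t}(\rho)$ by applying both maps to the same noisy point. It then suffices to bound $\sup_x\|\Phi_{t\to0}(x)-\PhiA{t\to0}(x)\|$ uniformly in $x$, which gives a bound uniform in $\rho$. With $\delta_r=\|y_r-\tilde y_r\|$ and common terminal value at $r=\gamma_t$, I split
\[
\tfrac{\D}{\D r}(y_r-\tilde y_r)=r\big(\nabla f_r(y_r)-\nabla f_r(\tilde y_r)\big)+r\big(\nabla\log\pi_r(\tilde y_r)-\widetilde s_\theta(\tilde y_r,r)\big)\cdot(-1).
\]
This yields the differential inequality $\tfrac{\D}{\D r}\delta_r\ge r\mu_r\delta_r-re_r$, i.e.\ backward in $r$, $-\dot\delta\le -r\mu_r\delta+re_r$. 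To avoid nondifferentiability at $\delta=0$, I will apply the inequality to $\sqrt{\delta^2+\epsilon}$. A variation-of-constants (Grönwall) argument from $\gamma_t$ down to $0$ then gives
\[
\delta_0\le\int_0^{\gamma_t} r e_r\exp\Big(-\int_0^r u\mu_u\D u\Big)\D r=E_t.
\]
This is scale-free because the $x$-space maps at time $0$ have $\alpha_0=1$. The main obstacle is this step near $r=0$, where $\mu_r$ may be very negative and $e_r$ may blow up. I will handle it by running the estimate on $[\epsilon,\gamma_t]$, expressing the weights as $\exp(-\int_\epsilon^r u\mu_u\D u)$, and passing to the limit with monotone/dominated convergence using $r\mu_r^-,re_r\in L^1_{\mathrm{loc}}$, together with continuity of the trajectories at $0$. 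The infinite-integral convention is obtained in the same limit.
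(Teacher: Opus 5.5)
Your argument is essentially the paper's. You rescale by $\alpha$ to remove the linear drift, use the curvature bound as a one-sided Lipschitz estimate, and treat the score error as a forcing term. You then integrate backward with an integrating factor on $[\epsilon,\gamma_t]$, let $\epsilon\downarrow0$ using domination by $\exp(\int_0^{\gamma_t}u\mu_u^-\D u)$ and continuity at the endpoint, and conclude with the two couplings.

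The differences are minor. The paper never reparametrizes the trajectories: it keeps the time $u$, carries the factor $c_u=\gamma_u\dot\gamma_u$, and substitutes $r=\gamma_v$ only in the final integrals, so it needs no inverse of $\gamma$. It also obtains $L_t$ from the same one-sided inequality with $e_r=0$, rather than from a separate computation with $\norm{y_r-y_r'}^2$. Your reparametrized curve is well defined, since the state is frozen where $\gamma$ is flat. It is absolutely continuous in $r$ by the change-of-variables formula for monotone absolutely continuous $\gamma$, but you should write this out rather than assert it.

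One step is not justified as written. On $[\epsilon,\gamma_t]$ with $\epsilon>0$, the variation-of-constants argument needs $r\mapsto r\mu_r$ to be integrable there. Only then is $\exp(-\int_\epsilon^r u\mu_u\D u)$ absolutely continuous, so that the product rule applies. The hypothesis $r\mu_r^-\in L^1_{\rm loc}$ controls only the negative part. It is what makes the limit $\epsilon\downarrow0$ work, not the Gr\"onwall step itself as you state; you still need local integrability of $r\mu_r^+$ away from zero.

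The paper supplies this by noting $\mu_r\le\lambda_{\min}(\nabla^2 f_r(0))$. This quantity is continuous in $r>0$ by Gaussian smoothing, so $\mu_r^+$ is bounded on compact subintervals of $(0,\infty)$. Two alternatives also work:
\begin{itemize}
\item use $\nabla^2 f_r\preceq r^{-2}\eye$;
\item replace $\mu_r$ by $\min\{\mu_r,K\}$, which is still a valid lower bound, and let $K\to\infty$ by monotone and dominated convergence.
\end{itemize}

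The same issue affects your Lipschitz estimate. It also affects your $\sqrt{\delta^2+\epsilon}$ regularization, which creates an extra forcing term $r\mu_r^+\sqrt{\epsilon}$, with $\epsilon$ the regularization parameter; this term vanishes in the limit only if $r\mu_r^+$ is integrable. The paper avoids the regularization by using that $\dot h_u=0$ almost everywhere on $\{h_u=0\}$.
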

The proof is in \cref{sec:proof-gause-noise-CM}.

Strong log-concavity gives an explicit family of stability profiles.
We say that $\pi$ is $m$-strongly log-concave, for $m\ge0$, if it has a
positive Lebesgue density and $f=-\log\pi$ is $m$-strongly convex.

\begin{corollary}[Strongly log-concave case]\label{coro:strong-log-concave}
Assume the conditions of \cref{theo:lip-error-ft} hold and that there exist \(M_{0}, M_{1} \geq 0\) such that the score approximation error \eqref{eq:approx-assumption} satisfies
  \(
    e_{r}  \leq M_0 + M_1 /r\,,
\) for all \(r>0\). Assume further that $\pi$ is \(m\)-strongly log-concave with \(m>0\) and that $f=-\log \pi$ is \(C^2\). Then, \cref{ass:denoise,ass:oneshot-approximation} hold with
$L_t=\alpha_t^{-1}(1+m\gamma_t^2)^{-\nicefrac{1}{2}}$ and
$E_t \leq \frac{M_{0}}{m}
        \p*{\sqrt{1+m \gamma_{t}^{2}} - 1}
        +
        \frac{M_{1}}{\sqrt{m}}
        \, \arcsinh\p*{\sqrt{m} \gamma_{t} }$.
\end{corollary}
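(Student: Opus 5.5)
The plan is to apply \cref{theo:lip-error-ft} directly. Everything reduces to two things: an explicit lower bound $\mu_r$ on the Hessian of $f_r=-\log\pi_r$ under $m$-strong log-concavity, and the evaluation of the two integrals in \eqref{eq:general_Lt_Et}.

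\textbf{Step 1: curvature of the smoothed potential.} I will show $\nabla^2 f_r\succeq \frac{m}{1+mr^2}\eye$ for all $r>0$. This is the standard preservation of strong log-concavity under Gaussian convolution. A clean route writes $\pi_r(x)=\int \pi(y)\kappa_r(x-y)\D y$ and computes
\[
\nabla^2 f_r(x)=\tfrac{1}{r^2}\eye-\tfrac{1}{r^4}\operatorname{Cov}_{p_x}(Y),
\]
where $p_x(y)\propto \exp\bigl(-f(y)-\|x-y\|^2/(2r^2)\bigr)$. This conditional law is $(m+r^{-2})$-strongly log-concave, since $f$ is $C^2$ and $m$-strongly convex. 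Brascamp--Lieb then gives $\operatorname{Cov}_{p_x}(Y)\preceq (m+r^{-2})^{-1}\eye$, so
\[
\nabla^2 f_r\succeq \tfrac{1}{r^2}\Bigl(1-\tfrac{1}{1+mr^2}\Bigr)\eye=\tfrac{m}{1+mr^2}\eye.
\]
Differentiating under the integral is justified because $\pi$ has Gaussian tails; it is in $\mathcal P_2(\Xspace)$, and its strongly convex potential gives exponential integrability. Hence we may take $\mu_r=m/(1+mr^2)$. This choice is positive, so $\mu_r^-=0$ and the integrability hypotheses of \cref{theo:lip-error-ft} on $\mu$ hold trivially. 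The condition $re_r\in L^1_{\mathrm{loc}}$ holds because $re_r\le M_0r+M_1$.

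\textbf{Step 2: the stability profile.} Next compute
\[
\int_0^{\gamma}u\mu_u\D u=\int_0^\gamma\frac{mu}{1+mu^2}\D u=\tfrac12\log(1+m\gamma^2).
\]
The exponential factor is therefore $(1+m\gamma^2)^{-1/2}$, which gives $L_t=\alpha_t^{-1}(1+m\gamma_t^2)^{-1/2}$ exactly.

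\textbf{Step 3: the error profile.} Insert the same factor together with $e_r\le M_0+M_1/r$. This yields
\[
E_t\le M_0\int_0^{\gamma_t}\frac{r}{\sqrt{1+mr^2}}\D r+M_1\int_0^{\gamma_t}\frac{\D r}{\sqrt{1+mr^2}}.
\]
The first integral equals $\frac1m\bigl(\sqrt{1+m\gamma_t^2}-1\bigr)$. The second, after substituting $s=\sqrt m\,r$, equals $\frac{1}{\sqrt m}\arcsinh(\sqrt m\,\gamma_t)$. Together these give the stated bound.

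The main obstacle is Step 1: making the Hessian identity and the covariance bound rigorous under only $C^2$ and strong convexity. The differentiation under the integral sign must be justified, and Brascamp--Lieb (or the Prékopa--Leindler/Caffarelli-type contraction argument) must be invoked carefully. If Brascamp--Lieb is inconvenient, an alternative is available: $\pi_r$ is the density of $X+rZ$, and the convolution of an $m$-strongly log-concave law with an $r^{-2}$-strongly log-concave Gaussian is $(m^{-1}+r^2)^{-1}$-strongly log-concave. This is the known harmonic-sum stability, proved via Prékopa--Leindler, and it gives the same $\mu_r$.
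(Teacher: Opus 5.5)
Your proposal is correct and follows the paper's proof: apply \cref{theo:lip-error-ft} with $\mu_r=m/(1+mr^2)$, then evaluate the two integrals in \eqref{eq:general_Lt_Et}, both of which you compute correctly. The only difference is that the paper obtains $\mu_r$ by citing \cref{lem:gauss-smoothing} (Saumard--Wellner, Theorem~3.7(b), which is exactly your harmonic-sum alternative), whereas you also sketch a self-contained derivation via the Hessian identity and Brascamp--Lieb.
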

The proof is in \cref{sec:proof-gause-noise-CM}.

\begin{remark}[Variance-exploding \& TrigFlow CMs]\label{rem:VE-TrigFlow}
Under the assumptions of \cref{coro:strong-log-concave}, define
$B_m(r):=\frac{M_0}{m}\p*{\sqrt{1+mr^2}-1}
    +\frac{M_1}{\sqrt m}\arcsinh\p{\sqrt m\,r}$.
\Cref{thm:lipschitz-noiser} and the corollary give $R_t=1$,
$L_t=(1+mt^2)^{-\nicefrac{1}{2}}$ and $E_t=B_m(t)$ for VE\deleted{ noising} ($\alpha_t=1$,
$\sigma_t=t$), and $R_t=\cos t$, $L_t=(\cos^2t+m\sin^2t)^{-\nicefrac{1}{2}}$ and
$E_t=B_m(\tan t)$ for TrigFlow ($\alpha_t=\cos t$, $\sigma_t=\sin t$,
$0<t<\pi/2$); in both cases \deleted{the exact step }$\CMstep{t}$ is
$q_t$-contractive in $W_2$ with $q_t=\deleted{R_tL_t=}(1+m\gamma_t^2)^{-\nicefrac{1}{2}}<1$ \replaced{for
all admissible $t>0$}{at
every positive admissible time}.
\end{remark}

\paragraph{Beyond strong log-concavity.} We now apply \cref{theo:lip-error-ft} to nonconvex potentials, extending the strongly log-concave case of
\cref{coro:strong-log-concave} at the cost of weaker contraction.
For $\ell\ge0$ define
\begin{equation}\label{eq:psi-def}
  \psi_\ell(u):=2\sqrt\ell\,u\tanh(\sqrt\ell\,u/2),\qquad u\ge0,
\end{equation}
and consider, for $m>0$, the relaxed curvature condition
\begin{equation}\label{eq:weakly-convex}
  \psAl{\nabla f(x)-\nabla f(y)}{x-y}
  \ge m\norm{x-y}^2-\psi_\ell(\norm{x-y}),\qquad x,y\in\Xspace.
\end{equation}
For $\ell = 0$ this is $m$-strong convexity. For $\ell > 0$, the defect $\psi_\ell$ is quadratic near zero and grows linearly at infinity, so the quadratic term set by $m$ dominates at large separations while $\ell$ controls the curvature defect near zero. With $m$ as in \eqref{eq:weakly-convex} and score-error constants $M_0, M_1 \ge 0$, let $B_m$ be as in \cref{rem:VE-TrigFlow} and set $A_{m,\ell}(r) := \exp\{\ell r^2 / [2(1 + m r^2)]\}$ for $r \ge 0$.

\begin{theorem}[Stability]\label{thm:weak-curvature}
Assume the parametrization and backward-flow hypotheses of
\cref{theo:lip-error-ft}. Let $\pi\in\mathcal P_2(\Xspace)$ have a positive
density with $f=-\log\pi\in C^1$ satisfying \eqref{eq:weakly-convex}.
Assume the score error $e_r$ in \eqref{eq:approx-assumption} is measurable
and satisfies $e_r\le M_0+M_1/r$ for every $r>0$.
Then \cref{ass:noising,ass:denoise,ass:oneshot-approximation} hold with
\begin{equation}\label{eq:weak-curvature-profiles}
  R_t=\alpha_t,\qquad
  L_t=\frac{A_{m,\ell}(\gamma_t)}{\alpha_t\sqrt{1+m\gamma_t^2}},\qquad
  E_t=A_{m,\ell}(\gamma_t)B_m(\gamma_t).
\end{equation}
Consequently, \cref{thm:main} applies with $q_t=R_tL_t$.
The exact sampling step $\CMstep{t}$ is $q_t$-Lipschitz in $W_2$, with
$q_t<1$ if and only if
\begin{equation}\label{eq:weak-curvature-contraction}
  \frac{\ell\gamma_t^2}{1+m\gamma_t^2}<\log(1+m\gamma_t^2).
\end{equation}
\end{theorem}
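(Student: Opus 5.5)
The plan is to deduce everything from \cref{theo:lip-error-ft} and \cref{thm:lipschitz-noiser}. The only new input is a lower bound $\nabla^2 f_r\succeq\mu_r\eye$ for the smoothed potentials $f_r=-\log\pi_r$ under \eqref{eq:weakly-convex}, with
\begin{equation*}
  \mu_r=\frac{m}{1+mr^2}-\frac{\ell}{(1+mr^2)^2}.
\end{equation*}
Once this is in hand, $R_t=\alpha_t$ comes directly from \cref{thm:lipschitz-noiser}. The remaining claims are integrations. The local integrability $r\mu_r^-,\,re_r\in L^1_{\mathrm{loc}}$ is clear, because $\mu_r$ is bounded and $re_r\le M_0r+M_1$.

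\textbf{Step 1: propagation of weak convexity under Gaussian smoothing.} I would show that if $f$ satisfies \eqref{eq:weakly-convex} with $(m,\ell)$, then $f_r$ satisfies it with
\begin{equation*}
  m_r=\frac{m}{1+mr^2},\qquad \ell_r=\frac{\ell}{(1+mr^2)^2}.
\end{equation*}
This is the invariance of the $\tanh$-type profile $\psi_\ell$ along the heat flow. The constants are consistent with scaling: the Gaussian part $m\mapsto m/(1+mr^2)$ is exact in the quadratic case. The convexity defect is contracted by the square of the same factor, since $\psi_{\ell}(u)$ at scale $u$ is transported along a synchronous coupling whose spatial dilation is $(1+mr^2)^{-1}$.

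To prove it, I would first try the route of Conforti's weak-semiconvexity estimates. Write $\nabla f_r(x)=\E[\nabla f(X)\mid X+rZ=x]$. The conditional law has potential $f(y)+\|x-y\|^2/(2r^2)$, which satisfies \eqref{eq:weakly-convex} with $m+r^{-2}$. A coupling (reflection/synchronous) of the conditional laws at two points $x,x'$ then yields the one-sided monotonicity estimate for $\nabla f_r$. Concavity of $u\mapsto\psi_\ell(\sqrt u)$ should let Jensen pass the defect through the conditional expectation.

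If this becomes delicate, an alternative is to differentiate $\langle\nabla f_r(x)-\nabla f_r(y),x-y\rangle$ in $r$ along the heat equation. One then checks that the profile $m_r\|x-y\|^2-\psi_{\ell_r}(\|x-y\|)$ is a subsolution: this is a comparison argument for a 1D ODE in which the $\tanh$ form is precisely the stationary solution. I expect this step to be the main obstacle.

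\textbf{Step 2: from the two-point condition to a Hessian bound.} Since $f_r$ is smooth for $r>0$, take $y=x+\varepsilon v$, divide by $\varepsilon^2$ and let $\varepsilon\to0$. Because $\psi_\ell(u)\le \ell u^2$ with $\psi_\ell(u)/u^2\to\ell$ as $u\to0$, this gives $\nabla^2 f_r\succeq(m_r-\ell_r)\eye=\mu_r\eye$.

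\textbf{Step 3: profiles and contraction.} Compute
\begin{equation*}
  \int_0^{\gamma}u\mu_u\D u=\tfrac12\log(1+m\gamma^2)-\frac{\ell\gamma^2}{2(1+m\gamma^2)},
\end{equation*}
using $\int_0^\gamma \ell u(1+mu^2)^{-2}\D u=\ell\gamma^2/[2(1+m\gamma^2)]$. Hence $\exp(-\int_0^\gamma u\mu_u\D u)=A_{m,\ell}(\gamma)(1+m\gamma^2)^{-1/2}$, and \eqref{eq:general_Lt_Et} gives the stated $L_t$.

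For $E_t$, note that $A_{m,\ell}$ is nondecreasing in $r$, so
\begin{equation*}
  E_t\le A_{m,\ell}(\gamma_t)\int_0^{\gamma_t}r(M_0+M_1/r)(1+mr^2)^{-1/2}\D r=A_{m,\ell}(\gamma_t)B_m(\gamma_t),
\end{equation*}
where the last integral is exactly the one evaluated in \cref{coro:strong-log-concave}.

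Then $q_t=R_tL_t=A_{m,\ell}(\gamma_t)/\sqrt{1+m\gamma_t^2}$, and \cref{thm:main} applies. Taking logarithms, $q_t<1$ is equivalent to $\ell\gamma_t^2/[2(1+m\gamma_t^2)]<\tfrac12\log(1+m\gamma_t^2)$, which is \eqref{eq:weak-curvature-contraction}. Finally, $\CMstep{t}$ is $q_t$-Lipschitz because it is the composition of the $R_t$-Lipschitz operator $\OPnoise{t}$ with the $L_t$-Lipschitz operator $\DEnoise{t}$.
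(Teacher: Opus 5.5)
Your overall architecture matches the paper's, and your Steps~2 and~3 reproduce its argument: the Hessian bound from the two-point condition via $\varepsilon\downarrow0$ and $\psi_\ell(u)\le\ell u^2$, the identity $\exp(-\int_0^\gamma u\mu_u\D u)=A_{m,\ell}(\gamma)/\sqrt{1+m\gamma^2}$, monotonicity of $A_{m,\ell}$ for $E_t$, and logarithms for \eqref{eq:weak-curvature-contraction}. The gap is Step~1. It is the only non-routine content of the theorem, and you leave it as two unexecuted strategies.

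The strategy you describe concretely does not close as sketched. With $\nabla f_r(x)=\E[\nabla f(X)\mid X+rZ=x]$ and any coupling of the posteriors $X_x,X_{x'}$, you obtain $\E\,\psAl{\nabla f(X_x)-\nabla f(X_{x'})}{x-x'}$. But \eqref{eq:weakly-convex} controls only $\psAl{\nabla f(X_x)-\nabla f(X_{x'})}{X_x-X_{x'}}$. A coupling with $X_x-X_{x'}\equiv x-x'$ exists only if the two posteriors are translates of each other, so Jensen for $u\mapsto\psi_\ell(\sqrt u)$ has nothing to act on. Put differently, $\nabla^2 f_r=r^{-2}\big(\eye-r^{-2}\operatorname{Cov}(X\mid X+rZ=x)\big)$. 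So this route needs the sharp bound $\operatorname{Cov}(X\mid X+rZ=x)\preceq(M^{-1}+\ell M^{-2})\eye$, with $M=m+r^{-2}$, for a posterior satisfying \eqref{eq:weakly-convex} with $(M,\ell)$. That bound is just a restatement of $\nabla^2 f_r\succeq\mu_r\eye$, so the route is circular without an independent covariance estimate.

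The comparison route is essentially a reproof of the invariance theorem of \cite{conforti2024weak}, which you do not carry out. Moreover, for $m>0$ the profile $m_ru^2-\psi_{\ell_r}(u)$ moves with $r$, so stationarity of the $\tanh$ form does not apply to it directly.

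The missing idea is a reduction that turns Step~1 into a citation plus algebra.
\begin{itemize}
\item Set $h:=f-\tfrac m2\norm{\cdot}^2$. Then $\psAl{\nabla h(x)-\nabla h(y)}{x-y}\ge-\psi_\ell(\norm{x-y})$, so $h$ lies in the class $\mathcal F_\ell$ of \cite[Theorem~2.1]{conforti2024weak}. That class has no quadratic part and is invariant under $g\mapsto-\log P_te^{-g}$.
\item Completing the square with $c_r=1+mr^2$ and $\tau_r=r/\sqrt{c_r}$ gives $f_r(x)=\tfrac d2\log c_r+\tfrac{m}{2c_r}\norm{x}^2+h_{\tau_r}(x/c_r)$, where $h_{\tau_r}=-\log(\kappa_{\tau_r}*e^{-h})\in\mathcal F_\ell$.
\item The dilation $x\mapsto x/c_r$ is the contraction by $(1+mr^2)^{-1}$ you guessed. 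It acts on the defect through the exact identity $\psi_\ell(u/c)=\psi_{\ell/c^2}(u)$, which yields precisely $(m_r,\ell_r)$.
\item One technical point must be checked, because $e^{-h}=\pi\,e^{m\norm{\cdot}^2/2}$ need not be integrable. Integrating $\psi_\ell(u)\le2\sqrt\ell\,u$ along rays gives $h(x)\ge h(0)+\psAl{\nabla h(0)}{x}-2\sqrt\ell\norm{x}$. Hence $e^{-h}$ grows at most exponentially, and $h_{\tau_r}$ is finite and $C^\infty$.
\end{itemize}
This is \cref{lem:curvature_weakly_convex}. With it in place, the rest of your proposal goes through as written.
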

The proof is in \cref{sec:proof-gause-noise-CM}. When $\ell\le m$, \eqref{eq:weak-curvature-contraction} holds at every noise ratio, since $\log(1+z)>z/(1+z)$ for $z>0$; when $\ell>m$
it fails for small $\gamma_t$, as $q_t=1+(\ell-m)\gamma_t^2/2+O(\gamma_t^4)$,
but holds for large $\gamma_t$, since $q_t\to0$: larger noise restores certified contraction at the cost of a larger error budget in \eqref{eq:weak-curvature-profiles}.

A concrete sufficient condition for \eqref{eq:weakly-convex} is that
$f\in C^2$ has $\nabla^2 f\succeq m\eye$ outside a ball of radius
$\varrho>0$ and $\nabla^2 f\succeq-\beta\eye$ inside, with $\beta\ge0$.
One may then take
$\ell=2\max\{m+\beta,(m+\beta)^2\varrho^2\}$; the segment argument
justifying this choice is given in \cref{sec:app_proofs}.

\changenote{Paragraph on trained consistency models moved to \cref{sec:relaxed-noise-CM}.}

\begin{remark}[Comparison with \cite{Lyu2024SamplingEasyCM}]\label{rem:lyu}
Their analysis already establishes a multistep error recursion and geometric reduction toward an error plateau under a Lipschitz-type condition on the trained map. We instead derive the propagation factor from the target's curvature, following the direction proposed in their Section~4. Under strong log-concavity, \cref{coro:strong-log-concave} gives $q_t = (1+m\gamma_t^2)^{-\nicefrac{1}{2}} < 1$ at every $\gamma_t>0$, and \cref{thm:weak-curvature} identifies the levels that certify contraction under a curvature defect. Combined with $E_t$, these factors turn the balance condition of \cref{thm:main} into a quantitative criterion for reducing the noise while controlling accumulated error.

The two analyses rest on different assumptions: our exact-step recursion uses the uniform approximation bound \cref{ass:oneshot-approximation}, whereas their estimates use $L^2$ training losses and include numerical discretization. \Cref{thm:trained-cm} connects our flow estimates to consistency losses, with network Lipschitz constants still weighting the cellwise flow errors, and \cref{sec:relaxed-noise-CM} recovers their affine stability recursion as the learned-map alternative, with constants detailed in \cref{sec:app_compars}.
\end{remark}

\begin{remark}[Comparison with \cite{chen2025convergenceconsistencymodelmultistep}]\label{rem:chen}
Their Theorem~2 assumes bounded target samples and learned outputs,
and $L^2$ consistency along exact probability-flow trajectories, but
neither Lipschitz denoisers nor target curvature. It therefore covers
maps \replaced{lacking}{for which} the stability required by \cref{thm:main}\deleted{ is unavailable}.
Their Appendix~D \replaced{truncates suitable unbounded targets}{treats suitable unbounded targets by truncation}.

The difference is how errors propagate. Their Eq.~(6) accumulates
errors in a KL bound between noisy marginals and converts it to $W_2$
using boundedness, taking a fourth root. Our direct $W_2$ recursion
requires exact-denoiser stability and uniform approximation
(\cref{ass:denoise,ass:oneshot-approximation}), but no bounded support.
These controls retain the attenuation by later steps: for two levels,
the first-step error contributes $q_{t_2}E_{t_1}$ and is attenuated
when $q_{t_2}<1$.

The gain can be substantial when both sets of controls are available.
For the bounded VE example with $\pi=\mathrm{Unif}[-1,1]$ in
\cref{sec:app_compars}, the same learned maps, initialization and two
sampling levels give our certificate $0.1746$, versus $0.7659$ from
their bound, even after the refinements detailed there. Thus stability
can give a sharper guarantee; this comparison of upper bounds for the
same sampler does not order actual sampling errors or establish a
general dominance.
\end{remark}

\subsection{Multistep bounds \replaced{estimated from a trained
  model}{from error at the target}}\label{sec:relaxed-noise-CM}
\replaced{An error estimate for a trained model, such as \cref{thm:trained-cm}
below, typically controls}{The estimate \eqref{eq:trained-cm-target} controls} one learned sampling
step started from $\pi$. During iteration, however, the input law
generally differs from $\pi$. To handle these inputs, \cref{thm:main}
assumes that the learned and exact sampling steps produce distributions
at $W_2$-distance at most $E_t$, whatever their common input law
(\cref{ass:oneshot-approximation}). The estimate at $\pi$ does not
guarantee this. We instead control how the learned denoiser responds
to a change in its input law. Fix finite nonnegative profiles
\replaced{$E^\pi_t$ and $\widetilde L_t$ for target error and stability, and set
$\widetilde q_t:=R_t\widetilde L_t$.}{$E^\pi_t,\widetilde L_t,\beta_t$ for target error, multiplicative
stability and additive defect, respectively. Set
$\widetilde q_t:=R_t\widetilde L_t$ and $\bar E_t:=E^\pi_t+\beta_t$.}

\begin{assumption}[One-step accuracy at the target]\label{ass:approx-oneshot}
  For any $t>0$, \( W_2\big(\pi,\CMstepA{t}(\pi)\big)\;\le\;E^\pi_t\).
\end{assumption}

\begin{assumption}[\replaced{Stability}{Affine stability} of approximate denoising]\label{ass:approx-denoise}
  For all $t>0$ and $\rho_1,\rho_2\in\mathcal P_2(\Xspace)$,
  $W_2\big(\DEnoiseA{t}(\rho_1),\,\DEnoiseA{t}(\rho_2)\big)
    \;\le\;\widetilde L_t\,W_2(\rho_1,\rho_2)\deleted{+\beta_t}$.
\end{assumption}

\deleted{\Cref{thm:trained-cm-full} verifies these assumptions on the training grid using \eqref{eq:trained-cm-target}, $\widetilde L_{\tau_k}=\ell_k$ and $\beta_{\tau_k}=0$, without requiring uniformly bounded consistency residuals.} \replaced{Compared with \cref{thm:main}, the}{The} tradeoff is that $R_t\widetilde L_t$
replaces $R_tL_t$\deleted{ and $\beta_t$ adds to each step's error budget}.
Contraction of the exact step does not control either learned quantity.

\begin{proposition}[Learned-map recursion]\label{prop:learned-recursion}
  Let \cref{ass:noising,ass:approx-oneshot,ass:approx-denoise} hold. Then, for any
  $\rho\!\in\!\mathcal P_2(\Xspace)$,
  \begin{equation}\label{eq:learned-one-step}
    W_2\big(\CMstepA{t}(\rho),\pi\big)\;\le\;
    \widetilde q_t\,W_2(\rho,\pi)+\replaced{E^\pi_t}{\bar E_t}\,,\quad \textrm{for all}\quad t>0
  \end{equation}
Hence, the results of \cref{thm:main}, its balance condition and geometric bound~\eqref{eq:cm-geometric-error-bound}, and the results of \cref{sec:gen-noise-CM} on repeated levels and added steps hold verbatim with $(q_t, E_t)$ replaced by $(\widetilde q_t, \replaced{E^\pi_t}{\bar E_t})$.
\end{proposition}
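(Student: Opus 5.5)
The one-step bound \eqref{eq:learned-one-step} follows from the triangle inequality in $W_2$, inserting the learned step applied to the target as an intermediate point:
\[
W_2\big(\CMstepA{t}(\rho),\pi\big)\le W_2\big(\CMstepA{t}(\rho),\CMstepA{t}(\pi)\big)+W_2\big(\CMstepA{t}(\pi),\pi\big).
\]
The second term is at most $E^\pi_t$ by \cref{ass:approx-oneshot}. For the first term we use the factorization $\CMstepA{t}=\DEnoiseA{t}\circ\OPnoise{t}$. \Cref{ass:approx-denoise}, applied with $\rho_1=\OPnoise{t}(\rho)$ and $\rho_2=\OPnoise{t}(\pi)$, gives a bound of $\widetilde L_t\,W_2(\OPnoise{t}(\rho),\OPnoise{t}(\pi))$. \Cref{ass:noising} then bounds this by $\widetilde L_tR_t\,W_2(\rho,\pi)=\widetilde q_t\,W_2(\rho,\pi)$. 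Adding the two bounds yields \eqref{eq:learned-one-step}. The two intermediate laws are legitimate inputs because $\OPnoise{t}$ maps $\mathcal P_2(\Xspace)$ into itself.

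For the transfer claim, we would note that the proof of \cref{thm:main} presumably needs only a one-step recursion of the form $W_2(\rho_k,\pi)\le q_{t_k}W_2(\rho_{k-1},\pi)+E_{t_k}$. Under \cref{ass:oneshot,ass:noising,ass:denoise,ass:oneshot-approximation} this recursion came from writing $\pi=\CMstep{t}(\pi)$ and using contraction of the exact step plus the uniform approximation error. The inequality \eqref{eq:learned-one-step} is exactly this recursion with $(q_t,E_t)$ replaced by $(\widetilde q_t,E^\pi_t)$.

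Unrolling it by induction on $k$ gives
\[
W_2(\rho_n,\pi)\le \widetilde Q_{1,n}W_2(\rho_0,\pi)+\sum_{k}\widetilde Q_{k+1,n}E^\pi_{t_k},
\]
where $\widetilde Q_{k,j}:=\prod_{i=k}^{j}\widetilde q_{t_i}$ and $\widetilde Q_{j+1,j}=1$. The induction uses only nonnegativity of the factors, which makes multiplying inequalities legitimate. The geometric bound under the balance condition, and the statements on repeated levels and added steps, are purely algebraic consequences of this recursion, so they carry over verbatim.

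The only point needing care is this last one: confirming that each downstream statement in \cref{sec:gen-noise-CM} depends on the assumptions solely through the one-step recursion. That holds because, once \eqref{eq:learned-one-step} is in hand, every later argument manipulates only the scalar inequality. So we would restate the proof of \cref{thm:main} as a lemma about sequences $d_k\le a_kd_{k-1}+b_k$ with nonnegative $a_k,b_k$, and apply that lemma in both settings.
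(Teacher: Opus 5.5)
Your proof is correct and follows the paper's argument: insert $\CMstepA{t}(\pi)$ via the triangle inequality, bound the first term by \cref{ass:approx-denoise} and then \cref{ass:noising} and the second by \cref{ass:approx-oneshot}, and unroll the resulting scalar recursion as in the proof of \cref{thm:main}. The only detail the paper states that you leave implicit is that every iterate $\rho_k$ stays in $\mathcal P_2(\Xspace)$, by the standing hypothesis that the learned step preserves $\mathcal P_2(\Xspace)$; this is what allows \eqref{eq:learned-one-step} to be applied at each step of the induction.
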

The proof is in \cref{sec:proof-gause-noise-CM}.

\begin{deletedblock}
The pointwise bound in \citet[Assumption~3.5]{Lyu2024SamplingEasyCM}
implies \cref{ass:approx-denoise} by coupling and Minkowski's inequality.
\Cref{sec:app_compars} gives their OU factors and training-error budgets.

Outputs in a ball of radius $R$, as in
\citet[Theorem~2]{chen2025convergenceconsistencymodelmultistep}, give
the trivial choice $\widetilde L_t=0$, $\beta_t=2R$ in
\cref{ass:approx-denoise}; a sharper certificate requires finer
stability estimates.
\end{deletedblock}
\begin{addedblock}
\begin{remark}[Affine stability]\label{rem:affine-stability}
\Cref{ass:approx-denoise} may be relaxed to
$W_2(\DEnoiseA t(\rho_1),\DEnoiseA t(\rho_2))\le\widetilde L_tW_2(\rho_1,\rho_2)+\beta_t$,
with an additive defect $\beta_t\ge0$. This covers maps that are Lipschitz
only up to an offset: the pointwise bound of
\citet[Assumption~3.5]{Lyu2024SamplingEasyCM} gives it by coupling and
Minkowski's inequality (\cref{sec:app_compars}), and outputs in a ball of
radius $R$, as in \citet[Theorem~2]{chen2025convergenceconsistencymodelmultistep},
give $\widetilde L_t=0$ and $\beta_t=2R$. In this relaxation, $\widetilde L_t$
is only the multiplicative constant of the inequality, not a Lipschitz
constant of the learned map: bounded outputs, for instance, are not
$0$-Lipschitz. The proof of
\cref{prop:learned-recursion} then gives \eqref{eq:learned-one-step} with
$E^\pi_t+\beta_t$ in place of $E^\pi_t$, and the recursion, the balance
condition and the plateau follow as before. What is lost is the fixed point
of \cref{rem:fixed-point}: for $\beta_t>0$ the learned step need not be a
contraction, and its iterates need not converge.
\end{remark}
\end{addedblock}

When both sets of assumptions hold, the certificates combine level by
level (\cref{cor:mixed}). For $\widetilde q_t<1$, the repeated-level
bound plateaus at $\replaced{E^\pi_t}{\bar E_t}/(1-\widetilde q_t)$\replaced{, and $\CMstepA{t}$ has a
unique fixed point within this distance of $\pi$
(\cref{rem:fixed-point}).}{. If also $\beta_t=0$,
$\CMstepA{t}$ has a unique fixed point within this distance of $\pi$
(\cref{rem:fixed-point}); for $\beta_t>0$, \cref{ass:approx-denoise}
alone guarantees neither a fixed point nor convergence of the iterates.}

\begin{movedblock}
\paragraph{From the \replaced{approximate-score}{learned-score} flow to a trained consistency model.}
We transfer \replaced{the bounds of \cref{sec:gauss-noise-CM}}{these bounds} to a trained CM\replaced{, whose learned map
  $\PhiA{t\to0}$ need not be the flow of an ODE}{ $f_\theta$}. Let $\PhiS{t\to s}$ and
$\Phi_{t\to s}$ denote the \replaced{approximate-score}{learned-score} and exact
flow maps from $t$ to $s$, respectively\added{. Here the approximate-score ODE
  uses, in place of $s_\theta$, the score approximation of the solver used in
  training, such as a pretrained score model in consistency distillation, let
  $e_r$ in \eqref{eq:approx-assumption} denote its error}. On a grid
$0=\tau_0<\cdots<\tau_N=T$, consistency training compares
\replaced{$\PhiA{\tau_{i+1}\to0}(x)$}{$f_\theta(x,\tau_{i+1})$} with
\replaced{$\PhiA{\tau_i\to0}(\PhiS{\tau_{i+1}\to\tau_i}(x))$}{$f_\theta(\widetilde\Phi_{\tau_{i+1}\to\tau_i}(x),\tau_i)$}
for $x\sim\varpi_{\tau_{i+1}}$~\citep{song2023consistency,Lyu2024SamplingEasyCM}; here we
idealize the numerical solver by the
\replaced{approximate-score}{learned-score} flow.  \deleted{Set
  $\CMstep{\tau_k}^\theta(\rho):=(f_\theta(\cdot,\tau_k))_\#\OPnoise{\tau_k}(\rho)$.}  Define the cell-wise
error
\begin{equation}\label{eq:cell-error}
  E_{s,t}:=\alpha_s\int_{\gamma_s}^{\gamma_t}re_r
   \exp\Big(-\int_{\gamma_s}^{r}u\mu_u\D u\Big)\D r,\qquad 0\le s\le t.
\end{equation}
\replaced{For $s=0$ it is the profile \eqref{eq:general_Lt_Et} of this score
approximation}{This equals $E_t$ for $s=0$} and bounds
$\|\PhiS{t\to s}(x)-\Phi_{t\to s}(x)\|$ for every $x$
(\cref{lem:lip-error-ft})\added{: it controls the solver's flow, not the
  network}.
For each training cell, define the root-mean-square consistency error
\[
  \varepsilon_i:=\Big(\E_{X\sim\varpi_{\tau_{i+1}}}
    \big\|\replaced{\PhiA{\tau_{i+1}\to0}(X)}{f_\theta(X,\tau_{i+1})}
    -\replaced{\PhiA{\tau_i\to0}(\PhiS{\tau_{i+1}\to\tau_i}(X))}{f_\theta(\widetilde\Phi_{\tau_{i+1}\to\tau_i}(X),\tau_i)}\big\|^2\Big)^{\nicefrac{1}{2}},
  \qquad 0\le i<N.
\]

\begin{theorem}[Error of a trained CM]\label{thm:trained-cm}
Assume the hypotheses of \cref{theo:lip-error-ft}\added{ for the score
  approximation of the solver}, with
$(\Phi_{t\to s})_\#\varpi_t=\varpi_s$ for $0\le s\le t$.
Suppose \replaced{$\PhiA{0\to0}$}{$f_\theta(\cdot,0)$} is the identity\replaced{, so that $\widetilde L_{\tau_0}=1$,
and $\PhiA{\tau_i\to0}$ is $\widetilde L_{\tau_i}$-Lipschitz with
$\widetilde L_{\tau_i}<\infty$ for $1\le i\le N$}{ and
$f_\theta(\cdot,\tau_i)$ is $\ell_i$-Lipschitz with $\ell_i<\infty$
for $1\le i\le N$, and set $\ell_0:=1$}. Then, for $1\le k\le N$,
\begin{equation}\label{eq:trained-cm-target}
  W_2\big(\pi,\replaced{\CMstepA{\tau_k}}{\CMstep{\tau_k}^\theta}(\pi)\big)
  \le\sum_{i=0}^{k-1}\varepsilon_i
    +\sum_{i=0}^{k-1}\replaced{\widetilde L_{\tau_i}}{\ell_i}\,E_{\tau_i,\tau_{i+1}}.
\end{equation}
\end{theorem}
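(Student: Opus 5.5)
The plan is to bound $W_2(\pi,\CMstepA{\tau_k}(\pi))$ by a single coupling and a telescoping sum over the grid. Take $X\sim\varpi_{\tau_k}$, so that $\PhiA{\tau_k\to0}(X)\sim\CMstepA{\tau_k}(\pi)$. Since the hypothesis $(\Phi_{t\to s})_\#\varpi_t=\varpi_s$ holds, also $\Phi_{\tau_k\to0}(X)\sim\pi$. Hence
\[
W_2\big(\pi,\CMstepA{\tau_k}(\pi)\big)\le\Big(\E\big\|\PhiA{\tau_k\to0}(X)-\Phi_{\tau_k\to0}(X)\big\|^2\Big)^{1/2}.
\]
For $0\le i\le k$, set $Y_i:=\PhiA{\tau_i\to0}(\Phi_{\tau_k\to\tau_i}(X))$. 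Then $Y_k=\PhiA{\tau_k\to0}(X)$, and $Y_0=\Phi_{\tau_k\to0}(X)$ because $\PhiA{0\to0}$ is the identity. Minkowski's inequality in $L^2$ then gives $\|Y_k-Y_0\|_{L^2}\le\sum_{i=0}^{k-1}\|Y_{i+1}-Y_i\|_{L^2}$.

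Each increment is treated through the intermediate point $X_{i+1}:=\Phi_{\tau_k\to\tau_{i+1}}(X)$. By the marginal-preservation hypothesis $X_{i+1}\sim\varpi_{\tau_{i+1}}$, and the semigroup property of the exact flow gives $\Phi_{\tau_k\to\tau_i}(X)=\Phi_{\tau_{i+1}\to\tau_i}(X_{i+1})$. Inserting $\PhiA{\tau_i\to0}(\PhiS{\tau_{i+1}\to\tau_i}(X_{i+1}))$ splits the increment:
\[
\|Y_{i+1}-Y_i\|_{L^2}\le\big\|\PhiA{\tau_{i+1}\to0}(X_{i+1})-\PhiA{\tau_i\to0}(\PhiS{\tau_{i+1}\to\tau_i}(X_{i+1}))\big\|_{L^2}+\big\|\PhiA{\tau_i\to0}(\PhiS{\tau_{i+1}\to\tau_i}(X_{i+1}))-\PhiA{\tau_i\to0}(\Phi_{\tau_{i+1}\to\tau_i}(X_{i+1}))\big\|_{L^2}.
\]
Since $X_{i+1}\sim\varpi_{\tau_{i+1}}$, the first term is exactly $\varepsilon_i$. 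For the second term, use the $\widetilde L_{\tau_i}$-Lipschitz property of $\PhiA{\tau_i\to0}$, with $\widetilde L_{\tau_0}=1$ for the identity. Combined with the pointwise bound $\|\PhiS{t\to s}(x)-\Phi_{t\to s}(x)\|\le E_{s,t}$ from \cref{lem:lip-error-ft}, this bounds the second term by $\widetilde L_{\tau_i}E_{\tau_i,\tau_{i+1}}$. Summing over $i$ yields \eqref{eq:trained-cm-target}.

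The main obstacle is the cellwise pointwise flow estimate. It is a generalization of \cref{theo:lip-error-ft}, which starts the flow from an arbitrary time $s$ rather than $0$ and carries the factor $\alpha_s$. I would rely on the already-cited \cref{lem:lip-error-ft} for this.

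If that lemma is not available in this form, I would rederive it directly. Write both ODEs in the rescaled variable $y=x/\alpha$ and time $r=\gamma$, where the drift becomes $-r\nabla\log\pi_r$ versus $-r\widetilde s_\theta(\cdot,r)$. Differentiate $\|y_r-\tilde y_r\|$ backward in $r$. The semiconvexity bound $\nabla^2 f_r\succeq\mu_r\eye$ gives a one-sided Lipschitz bound $-r\mu_r$ on the exact drift, and the score error contributes at most $re_r$. Grönwall from $\gamma_t$ down to $\gamma_s$ produces the integral in \eqref{eq:cell-error}, and multiplying by $\alpha_s$ returns to the original variables. The technical points are absolute continuity of the trajectories and the improper integrals near $r=0$ when $s=0$; these are handled exactly as in \cref{theo:lip-error-ft}, whose hypotheses are assumed here.

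Measurability of the compositions, needed for the expectations to make sense, follows from the Borel flow-map assumption.
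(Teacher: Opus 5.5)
Your proposal is correct and follows the paper's proof essentially step by step. The shared steps are the coupling through $X\sim\varpi_{\tau_k}$, the telescoping along the exact trajectory using that $\PhiA{0\to0}$ is the identity, the split of each increment through $\PhiS{\tau_{i+1}\to\tau_i}(X_{i+1})$, Minkowski's inequality in $L^2$, and \cref{lem:lip-error-ft} for the cellwise flow error. The only difference is that the paper applies \cref{lem:lip-error-ft} directly to the restriction of $u\mapsto\Phi_{\tau_k\to u}(x)$ to $[\tau_i,\tau_{i+1}]$, rather than invoking a semigroup property. Your semigroup identity is justified anyway by the same lemma applied to two exact trajectories with equal terminal state, since $e_r=0$ forces them to coincide.
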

\added{Thus, on the grid, \cref{ass:approx-oneshot} holds with $E^\pi_{\tau_k}$
  the right-hand side of \eqref{eq:trained-cm-target}, and
  \cref{ass:approx-denoise} holds with the Lipschitz constants
  $\widetilde L_{\tau_k}$ of the network, without requiring uniformly bounded
  consistency residuals.}
\replaced{\Cref{ass:approx-oneshot} can also be obtained without the theorem:}{Alternatively,}
\cref{ass:oneshot,ass:oneshot-approximation} give $E^\pi_t:=E_t$\replaced{, since
then $W_2(\pi,\CMstepA{t}(\pi))=W_2(\CMstep{t}(\pi),\CMstepA{t}(\pi))$, and}{;}
the target error can \deleted{also }be estimated from samples
(\cref{sec:experiments}).
The full statement, including the uniform bound \replaced{with}{based on} sup-norm
consistency errors, and its proof are \deleted{given }in \cref{thm:trained-cm-full}.

\replaced{In both \eqref{eq:trained-cm-target} and its uniform version
\eqref{eq:trained-cm-uniform}}{In both bounds}, the factor $\replaced{\widetilde L_{\tau_i}}{\ell_i}$ replaces the contraction
of the exact flow built into $\replaced{E_{0,t}}{E_t}$: if $\replaced{\widetilde L_{\tau_i}}{\ell_i}=L_{\tau_i}$ the second sum
equals $\replaced{E_{0,\tau_k}}{E_{\tau_k}}$ exactly. Under the assumptions of
\cref{coro:strong-log-concave},
$E_{s,t}\le\alpha_s\sqrt{1+m\gamma_s^2}\,\big(B_m(\gamma_t)-B_m(\gamma_s)\big)$.
The uniform form \eqref{eq:trained-cm-uniform}, needed for the exact-map
route of \cref{thm:main}, requires finite sup-norm consistency errors.
The target bound \eqref{eq:trained-cm-target} instead \replaced{gives
\cref{ass:approx-oneshot} directly}{leads to multistep
bounds through learned-denoiser stability (\cref{sec:relaxed-noise-CM})}.
Using a numerical solver in place of $\PhiS{\tau_{i+1}\to\tau_i}$
adds $\replaced{\widetilde L_{\tau_i}}{\ell_i}$ times its local error to the \deleted{corresponding }consistency
budget: this error must be measured in $L^2(\varpi_{\tau_{i+1}})$ for
\eqref{eq:trained-cm-target} and in the sup norm for
\eqref{eq:trained-cm-uniform}.
\end{movedblock}

\begin{deletedblock}
\paragraph{An exact example of the effect of iteration.}
The preceding results compare error certificates. The same balance can
govern the actual error, as the following example shows. Take
$\pi=\mathcal N(0,1)$ with VE noising and define, for $t>0$,
$q_t=(1+t^2)^{-\nicefrac{1}{2}}$, $\Phi_{t\to0}(u)=q_tu$ and
$\widetilde\Phi_{t\to0}(u)=q_tu+b_t$, with $b_t\ge0$.
The exact map is the Gaussian probability-flow denoiser. The approximate
output is its translation by $b_t$, so its uniform one-step error is exactly
$E_t=b_t$. Starting from $U_0\sim\mathcal N(m_0,1)$ with $m_0\ge0$, the
sampler \eqref{eq:iterative-cm-sampler} satisfies
\begin{equation}\label{eq:gaussian-iteration-error}
  U_k\sim\mathcal N(m_k,1),\qquad
  m_k=q_{t_k}m_{k-1}+b_{t_k},\qquad
  W_2(\Law(U_k),\pi)=m_k.
\end{equation}
Indeed, noising and denoising preserve the unit variance and the
translation shifts the mean, so the recursion of \cref{thm:main} is
attained with equality at every step.

At fixed $t$, the error is
$q_t^nm_0+b_t(1-q_t^n)/(1-q_t)$. It decreases strictly if
$m_0>b_t/(1-q_t)$, increases strictly if $m_0<b_t/(1-q_t)$, and remains
constant at equality. The threshold $m_0$ versus $b_t/(1-q_t)$ is the
case distinction in the proof of
\citet[Corollary~3.11]{Lyu2024SamplingEasyCM}, here realized with
equality. Consequently, the same approximate denoiser can
benefit a poor initialization and worsen an accurate one. A decreasing
noise schedule can also first improve and then degrade the actual error:
\cref{tab:gaussian-iterations} evaluates \eqref{eq:gaussian-iteration-error}
with $b_t=0.05t$ and $t_k=2(0.85)^{k-1}$. Although the error added by each
step decreases, its contraction weakens; eventually the added error
outweighs the attenuation. Neither effect is in tension with
\citet{Lyu2024SamplingEasyCM}: their constant-level majorant (their
Eq.~(34)) is monotone toward the plateau while the actual error moves
toward it from either side, and along a decreasing schedule the example
shows that a certificate of the form of their Corollary~3.10 iterated can
be attained while the error first decreases and then increases.

\begin{table}[t]
  \centering
  \begin{tabular}{lrrrrr}
\toprule
\deleted{Initial mean / steps $n$} & \deleted{0} & \deleted{1} & \deleted{3} & \deleted{9} & \deleted{24} \\
\midrule
\deleted{$m_0=2$} & \deleted{2.0000} & \deleted{0.9944} & \deleted{0.4075} & \deleted{0.2298} & \deleted{0.2865} \\
\deleted{$m_0=0.05$} & \deleted{0.0500} & \deleted{0.1224} & \deleted{0.1559} & \deleted{0.1813} & \deleted{0.2527} \\
\bottomrule
\end{tabular}
  \caption{\deleted{Exact $W_2$ errors for the Gaussian sampler with
  $t_k=2(0.85)^{k-1}$ and $b_t=0.05t$, rounded to four decimals
  (analytic recursion, no sampling).}}
  \label{tab:gaussian-iterations}
\end{table}
\end{deletedblock}

\begin{addedblock}
\subsection{A Gaussian example, and tightness of certificates}\label{sec:gauss-example}
For a Gaussian target and an affine learned map, the CM iterates and their
$W_2$ errors are explicit. This case shows that the certificate is attained
for errors in the mean, overstates errors in the variance, and ignores
cancellations between variance errors of opposite signs. Derivations and
further regimes are given in \cref{sec:gauss-verification}. Let
$\pi=\mathcal N(0,m^{-1}\eye_d)$. The profiles of \cref{rem:VE-TrigFlow} are
then attained: $\Phi_{t\to0}(x)=L_tx$ and
$q_t=(1+m\gamma_t^2)^{-\nicefrac{1}{2}}$. Let the learned map be
$\PhiA{t\to0}(x)=\widetilde L_tx+\omega_tv$, where $\widetilde L_t$ is $L_t$
with a precision $\widetilde m>0$ in place of $m$, $v$ is a unit vector and
$\omega_t\ge0$. These parameters play the role of $\theta$: $\widetilde m$ sets the
variance the model believes in, and $\omega_t$ shifts its mean. The score it
implies, $\widetilde s_\theta(x,r)=-\widetilde\mu_rx+\eta_rv$ with
$\widetilde\mu_r=\widetilde m/(1+\widetilde mr^2)$ and an offset $\eta_r$
determined by $\omega_t$, is also affine (\cref{sec:gauss-verif-model}): the
example specifies the map, and the score is derived from it. Gaussian laws
remain Gaussian: from $\rho_0=\mathcal N(\zeta_0v,s_0^2\eye)$ and with
$\widetilde q_t=\alpha_t\widetilde L_t$, the iterates are
$\rho_k=\mathcal N(\zeta_kv,s_k^2\eye)$, where
\(\zeta_k=\widetilde q_{t_k}\zeta_{k-1}+\omega_{t_k}\),
\(s_k=\widetilde
L_{t_k}\big(\alpha_{t_k}^2s_{k-1}^2+\sigma_{t_k}^2\big)^{\nicefrac{1}{2}}\) and
$W_2(\rho_k,\pi)^2=\zeta_k^2+d\,(s_k-m^{-\nicefrac{1}{2}})^2$.

Errors in the mean attain the bounds exactly: if $\widetilde m=m$, the learned
map is the exact one plus the shift $\omega_tv$, so $E_t=\omega_t$ is the best
constant in \cref{ass:oneshot-approximation}; if also
$s_0=m^{-\nicefrac{1}{2}}$, the variance stays exact and
$W_2(\rho_k,\pi)=\zeta_k$ follows $\zeta_k=q_{t_k}\zeta_{k-1}+E_{t_k}$:
\eqref{eq:multi-stage-cm-err-bound} holds with equality for every
schedule. The trade-off of \cref{sec:gen-noise-CM} then governs the actual
error. Along a decreasing schedule it first falls and then rises, since
$q_{t_k}\to1$ faster than $E_{t_k}\to0$; at a repeated level it approaches
the plateau $E_t/(1-q_t)$ from either side, so one denoiser improves a poor
initialization and degrades a good one.

\begin{wrapfigure}{R}{0.45\linewidth}
  \vspace{-\baselineskip}
  \centering
  \definecolor{gaussblue}{HTML}{1F4E79}
\definecolor{gaussorange}{HTML}{B5651D}
\definecolor{gaussred}{HTML}{8B1A1A}
\definecolor{gaussgrey}{HTML}{7A7A7A}
\definecolor{gaussbrown}{HTML}{C26A2A}
\begin{tikzpicture}
\begin{axis}[scale only axis, clip mode=individual, grid=major, grid style={gray!20}, yticklabel style={/pgf/number format/1000 sep={}}, tick label style={font=\scriptsize}, label style={font=\small}, title style={font=\small}, legend style={font=\scriptsize, draw=none, fill=none}, legend cell align=left, width=0.76\linewidth, height=\mainfigheight, xmin=0, xmax=25,
  ymode=log, ymin=0.8, ymax=20, log ticks with fixed point, ytick={1,2,5,10,20},
  xlabel={number of steps $n$}, ylabel={certificate / exact error},
  legend columns=4, transpose legend,
  legend style={at={(0.42,-0.34)}, anchor=north, font=\tiny,
    /tikz/every even column/.append style={column sep=0.6em}}]
\addplot[gaussred, thick] table[x=n, y=r0] {Figures/gaussian/misses.dat};
\addlegendentry{$\widetilde m$ exact, no offset}
\addplot[gaussbrown, thick] table[x=n, y=r1] {Figures/gaussian/misses.dat};
\addlegendentry{$\widetilde m$ exact, $c=0.1$}
\addplot[gaussblue, thick] table[x=n, y=r2] {Figures/gaussian/misses.dat};
\addlegendentry{$\widetilde m$ exact, $c=0.3$}
\addplot[gaussgrey, thick] table[x=n, y=r3] {Figures/gaussian/misses.dat};
\addlegendentry{$\widetilde m$ exact, $c=1$}
\addplot[black, thick, dash dot] table[x=n, y=wrong] {Figures/gaussian/misses.dat};
\addlegendentry{$\widetilde m$ inexact, no offset}
\addplot[gaussgrey, densely dotted, thick] table[x=n, y=qref] {Figures/gaussian/misses.dat};
\addlegendentry{$q_t^{-n}$}
\addplot[black, thin] coordinates {(0,1) (25,1)};
\addlegendentry{error in the mean only}
\end{axis}
\end{tikzpicture}
  \caption{Certificate over exact error at the repeated VE
  level $t=1$, from $\mathcal N(0,2.5^2\eye)$, with $m=0.8$ and, when
  inexact, $\widetilde m=0.55$; offsets are $\omega_t=c\,B_{\widetilde m}(\gamma_t)$
  with $M_0=0.03$ and $M_1=0.05$.}
  \label{fig:gauss-verif-misses}
  \vspace{-\baselineskip}
\end{wrapfigure}
Errors in the variance are overstated. The constant $R_t=\alpha_t$ is attained by
two laws that differ by a shift, but adding independent noise also compresses
differences of standard deviation: for $\nu_s=\mathcal N(0,s^2\eye_d)$,
$W_2(\OPnoise t(\nu_s),\OPnoise t(\nu_{m^{-\nicefrac{1}{2}}}))
=\alpha_tW_2(\nu_s,\nu_{m^{-\nicefrac{1}{2}}})/\kappa_t(s)$ with $\kappa_t\ge1$, and $\kappa_t=q_t^{-1}$
at $s=m^{-\nicefrac{1}{2}}$. The certificate misses this factor, and the loss
compounds over steps: from a wrong variance and with an exact CM, it decays
like $q_t^n$ at a repeated level while the error eventually decays like
$q_t^{2n}$ (\cref{fig:gauss-verif-misses}). An error in the mean, being
certified exactly, hides this loss.

When a trained model does not reproduce the variance of $\pi$ exactly, i.e.,
$\widetilde m\ne m$, the learned and exact maps differ by $(\widetilde
L_t-L_t)x$, which is unbounded in $x$, so no uniform $E_t$ satisfies
\cref{ass:oneshot-approximation}. \Cref{prop:learned-recursion} still
applies, with $\widetilde L_t$ and the target error $E^\pi_t$ of
\eqref{eq:gauss-verif-target-error}, both attained. As before, the
certificate overstates the inherited variance error by $\kappa_t$, which an
error in the mean partly hides; at a repeated level its plateau
$E^\pi_t/(1-\widetilde q_t)$ exceeds the limiting error by a factor that
decreases to one as $\omega_t$ grows (\cref{fig:gauss-verif-misses} shows the case
without offset). Finally, the certificate cannot exploit cancellation: when the initialization
and the trained model err in the variance in opposite directions, their errors
partly cancel in the law but add up in the certificate
(cf. \cref{sec:gauss-verif-regimes}).
\par 
\end{addedblock}

\section{Numerical validation on Stable Diffusion (SD)~1.5}\label{sec:experiments}
We now show empirically that the contraction and approximation profiles entering our bounds can be measured reliably on trained CMs and closely match the functional forms predicted by the theory, indicating that our assumptions hold for real CMs and that we can quantify the constants involved.

We conduct this validation on a latent consistency model (LCM) distilled from Stable Diffusion (SD)~1.5~\citep{rombach2022ldm} by discrete-time consistency distillation, using a variance-exploding (VE) parametrization.\footnote{See \Cref{sec:estimators} for details \added{and implementation link}.} To cover the continuous-time setting, we also report results for an LCM trained on CelebA~\citep{liu2015deep} with the TrigFlow parametrization~\citep{lu2025simplifyingstabilizingscalingcontinuoustime} in the latent space of a KL-regularized VAE~\citep{rombach2022ldm}. For each model, we \replaced{approximate the target error $E^\pi_t$ and the contraction profile $\tilde q_t = R_t\tilde L_t$ of the learned map}{empirically estimate $E_t$ and the contraction profile $q_t = R_tL_t$} from \cref{ass:approx-oneshot,ass:noising,ass:approx-denoise} by using optimal transport $W_2$ estimators (see \cref{sec:estimators}), and compare them with the predictions of \cref{coro:strong-log-concave} (see \cref{rem:VE-TrigFlow}) for strongly log-concave targets. We focus on LCMs for three reasons: LCMs underlie state-of-the-art few-step image generators~\citep{luo2023lcm}; the reduced dimensionality of the latent space makes empirical estimation of $W_2$ distances and Lipschitz constants more reliable; and the KL-regularized VAE of latent models~\citep{rombach2022ldm} pushes the latent distribution $\pi$ towards a Gaussian, so the functional forms derived for the strongly log-concave setting of \cref{coro:strong-log-concave} may suffice.

\begin{figure}[h]
    \centering
    \begin{subfigure}[t]{0.48\linewidth}
        \centering
        \begin{tikzpicture}
\begin{axis}[
  theoryplot,
  height=\mainfigheight, width=0.8\linewidth,
  xmin=-0.15, xmax=15.15,
  ymin=0.05, ymax=1.045,
  xtick={0,2,4,6,8,10,12,14},
  ytick={0.2,0.4,0.6,0.8,1.0},
  xlabel={VE time $t=\sigma$},
  ylabel={$\tilde q_t$ vs $q_t$},
  legend style={at={(0.97,0.72)}, anchor=north east},
]
\addplot[gaussblue, thick, mark=*, mark size=1.1pt, mark repeat=3]
  table[x=t,y=empirical] {Figures/theory/data/ve_q.dat};
\addlegendentry{approximate $\tilde q_t$}

\addplot[gaussorange, thick, dashed]
  table[x=t,y=analytical] {Figures/theory/data/ve_q.dat};
\addlegendentry{$q_t$}

\node[anchor=north east, font=\scriptsize,
  fill=white, draw=gray!35, rounded corners=1pt, inner sep=2pt]
  at (rel axis cs:0.98,0.97) {$m=0.801$};
\end{axis}
\end{tikzpicture}%
        \caption{SD~1.5 LCM in VE coordinates, $m=0.801$.}
        \label{fig:q-ve}
    \end{subfigure}
        \hfill
        \begin{subfigure}[t]{0.48\linewidth}
        \centering
        \begin{tikzpicture}
\begin{axis}[
  theoryplot,
  height=\mainfigheight, width=0.8\linewidth,
  xmin=-0.03, xmax=1.58,
  ymin=0, ymax=1.045,
  xtick={0,0.2,0.4,0.6,0.8,1.0,1.2,1.4},
  ytick={0,0.2,0.4,0.6,0.8,1.0},
  xlabel={TrigFlow time $t$},
  ylabel={$\tilde q_t$ vs $q_t$},
  legend style={at={(0.02,0.03)}, anchor=south west},
]
\addplot[gaussblue, thick, mark=*, mark size=1.1pt, mark repeat=3]
  table[x=t,y=empirical] {Figures/theory/data/trigflow_q.dat};
\addlegendentry{approximate $\tilde q_t$}

\addplot[gaussorange, thick, dashed]
  table[x=t,y=analytical] {Figures/theory/data/trigflow_q.dat};
\addlegendentry{$q_t$}

\node[anchor=north east, font=\scriptsize,
  fill=white, draw=gray!35, rounded corners=1pt, inner sep=2pt]
  at (rel axis cs:0.98,0.97) {$m=0.5$};
\end{axis}
\end{tikzpicture}%
        \caption{TrigFlow LCM on CelebA, $m=0.5$.}
        \label{fig:q-trigflow}
    \end{subfigure}
    \caption{\replaced{Approximate $\tilde q_t$}{Empirical estimate $\widehat{\tilde{q}}_t$}
    and the predicted profile $q_t$ of \cref{coro:strong-log-concave}.  The analytical
    curve is used as a fitted functional form; it is not a certificate of the
    learned map's global contraction factor.}
    \label{fig:q-profiles}
\end{figure}

\begin{figure}[h]
    \centering
    \ref*{legend:E-target}\par
    \begin{subfigure}[t]{0.48\linewidth}
    \begin{tikzpicture}
\begin{axis}[
  theoryplot,
  height=\mainfigheight, width=0.77\linewidth,
  xmode=log,
  xmin=0.13, xmax=15.2,
  ymode=log,
  ymin=10, ymax=1000,
  xlabel={VE time $t=\sigma$},
  ylabel={$E_t^{\pi}$ vs $E_t$},
  legend columns=3,
  legend style={/tikz/every even column/.append style={column sep=1em}},
  legend to name=legend:E-target,
]
\addplot[draw=none, name path=veemp, forget plot]
  table[x=t,y=empirical] {Figures/theory/data/ve_E_bound.dat};
\addplot[draw=none, name path=vebound, forget plot]
  table[x=t,y=bound] {Figures/theory/data/ve_E_bound.dat};
\addplot[gaussorange!14, draw=none, forget plot]
  fill between[of=vebound and veemp];

\addplot[gaussblue, thick, mark=*, mark size=1.0pt, mark repeat=3]
  table[x=t,y=empirical] {Figures/theory/data/ve_E_bound.dat};
\addlegendentry{Approximate $E_t^{\pi}$}

\addplot[gaussorange, thick, dashed]
  table[x=t,y=bound] {Figures/theory/data/ve_E_bound.dat};
\addlegendentry{Analytical upper bound}

\addlegendimage{area legend, fill=gaussorange!14, draw=gaussorange!20}
\addlegendentry{Upper-bound gap}
\node[anchor=north west, font=\scriptsize,
  fill=white, draw=gray!35, rounded corners=1pt, inner sep=2pt, align=left]
  at (rel axis cs:0.02,0.96) {$m=0.801$\\$M_0=0$, $M_1=110.19$};
\end{axis}
\end{tikzpicture}%
    \caption{VE SD~1.5 LCM on AFHQ}
    \end{subfigure}
    \hfill
    \begin{subfigure}[t]{0.48\linewidth}
    \begin{tikzpicture}
\begin{axis}[
  theoryplot,
  height=\mainfigheight, width=0.77\linewidth,
  xmode=log,
  xmin=0.002, xmax=1.58,
  ymode=log,
  ymin=0.05, ymax=50,
  xlabel={TrigFlow time $t$},
  ylabel={$E_t^{\pi}$ vs $E_t$},
]
\addplot[draw=none, name path=tfemp, forget plot]
  table[x=t,y=empirical] {Figures/theory/data/trigflow_E_bound.dat};
\addplot[draw=none, name path=tfbound, forget plot]
  table[x=t,y=bound] {Figures/theory/data/trigflow_E_bound.dat};
\addplot[gaussorange!14, draw=none, forget plot]
  fill between[of=tfbound and tfemp];

\addplot[gaussblue, thick, mark=*, mark size=1.0pt, mark repeat=3]
  table[x=t,y=empirical] {Figures/theory/data/trigflow_E_bound.dat};

\addplot[gaussorange, thick, dashed]
  table[x=t,y=bound] {Figures/theory/data/trigflow_E_bound.dat};

\node[anchor=north west, font=\scriptsize,
  fill=white, draw=gray!35, rounded corners=1pt, inner sep=2pt, align=left]
  at (rel axis cs:0.02,0.96) {$m=0.5$\\$M_0=0$, $M_1=6.10$};
\end{axis}
\end{tikzpicture}%
    \caption{TrigFlow LCM on CelebA.}
    \end{subfigure}
    \caption{\replaced{Approximate target-only one-step error $E^\pi_t$}{Target-only one-step error diagnostic $\hat{\tilde E}_t^\pi$} for the
    TrigFlow and VE LCMs, compared with the analytical VE shape of
    \cref{coro:strong-log-concave} used as an upper envelope.}
    \label{fig:E-target-profile}
\end{figure}

\Cref{fig:q-profiles} shows the \replaced{approximate}{estimated} contraction profile \replaced{$\tilde q_t$}{$\hat{\tilde{q}}_t = R_t \tilde{L}_t$} for both LCMs, alongside those predicted by \cref{coro:strong-log-concave} for a strongly log-concave target. They are contractive (\replaced{$\tilde q_t<1$}{$\hat{\tilde{q}}_t < 1$}) and closely follow the predicted forms, with fitted strong log-concavity constants $m \approx 0.8$ for the SD~1.5 LCM and $m \approx 0.5$ for the CelebA LCM. These agree with independent estimates from the covariance of the latent target, which give $m > 0.6$ and $m > 0.3$, respectively. This agreement supports the conclusion that our assumptions hold for these LCMs and that $\tilde{L}_t \approx L_t$. Moreover, \Cref{fig:E-target-profile} shows the \replaced{approximate}{estimated} error profile \replaced{$E^\pi_t$}{$\hat{E}^\pi_t$} for both LCMs, alongside the prediction $E_t$ of \cref{coro:strong-log-concave}. Since $E_t$ bounds the approximation error uniformly over inputs, whereas $E^\pi_t$ measures it only at the target, $E_t \geq E^\pi_t$ by definition, \replaced{so}{and} the predicted $E_t$ should \replaced{exceed the approximate $E^\pi_t$}{lie above $\hat{E}^\pi_t$}. Lastly, \Cref{app:one-step-bound} combines the estimated SD~1.5 profiles into the plug-in one-step $W_2$ bound and compares it with the empirical $W_2$ error on held-out data. The obtained bound lies above the error and tracks its variation with $t$.

\section{Conclusion}\label{sec:conclusion}
This paper derived a finite-horizon $W_2$ recursion for multistep CM sampling that separates contraction of the initialization error from accumulation of approximation error. Placing stability on the exact denoiser makes the contraction factor $q_t$ a property of the target, explicit under strong log-concavity and, above a noise threshold, under semi-log-concavity. Under a balance condition, the recursion yields a sharp geometric bound, and a complementary guarantee extends it to trained CMs under estimable assumptions. The experiments corroborate this analysis, as the estimated profiles closely follow the predicted functional forms. Future work includes extending the analysis to zero-shot samplers with inference-time guidance or rewards~\cite{Spagnoletti_2025_ICCV}, and optimizing schedules directly from the bound.

\section{Acknowledgments}

MP gratefully acknowledges support from UKRI Engineering and Physical Sciences Research Council (EPSRC) (EP/Z534481/1). ALHA acknowledges support from the the Alexander von Humboldt Foundation through an Alexander von Humboldt Research Fellowship for Experienced Researchers. AS and AA acknowledge support from the France 2030 research program on artificial intelligence via the PEPR PDE-AI grant (ANR-23-PEIA-0004). 
AD a reçu un financement de la Fondation de l'École polytechnique dans le cadre de sa campagne Servir la science.
AD is supported by the France 2030 program with the reference ANR-25-PEIA-0001 (THEOREM project).
AD is supported by Hi! Paris and Agence Nationale de la Recherche (Grant 11-LABX-0047).
This work received government funding administered by the National Research Agency (ANR) under the France 2030 program "Hi! PARIS", grant number ANR-23-IACL-0005.
EM was supported by the European Union (ERC-2022-SYGOCEAN-101071601).
The authors heavily rely on the \href{https://nova.mi.parisdescartes.fr/}{Nova HPC platform} of UFR Math-Info and MAP5 lab at Université Paris Cité. We are grateful to Azedine Mani and to Arnaud Meunier from the IT support unit for their technical help in maintaining the cluster nodes and assisting with software environment configuration. Additional HPC resources provided by \href{https://www.genci.fr/en/institut-du-developpement-et-des-ressources-en-informatique-scientifique-idris}{GENCI-IDRIS Jean-Zay} (Grants 
2024-AD011014557R3 and 2026-AD011017639).
We are grateful to Jean-Zay's IT support team for their assistance.

\bibliography{references}
\bibliographystyle{paperstyle}

\appendix
\crefalias{section}{appendix}
\section{Curvature under Gaussian smoothing and stability of the probability flow}\label{sec:app_proofs}
The \replaced{approximate-score}{learned-score} ODE introduced in \cref{sec:gauss-noise-CM} is
\begin{equation}\label{eq:grad-flow-approx}
  \dot x_t=a_tx_t-\tfrac12 b_t^2\replaced{s_\theta(x_t,t)}{s_t(x_t)}.
\end{equation}

\begin{lemma}\label{lem:lip-error-ft}
  Under the assumptions of \cref{theo:lip-error-ft}, let $x_u,y_u$ solve
  \eqref{eq:grad-flow} and \eqref{eq:grad-flow-approx}, respectively.
  For $0\lec s\lec t$,
  \[
    \norm{x_s-y_s}\le L_{s,t}\norm{x_t-y_t}+E_{s,t},
  \]
  where
  \[
    \begin{aligned}
      L_{s,t}&=\frac{\alpha_s}{\alpha_t}
        \exp\p*{-\int_{\gamma_s}^{\gamma_t}r\mu_r\D r},\\
      E_{s,t}&=\alpha_s\int_{\gamma_s}^{\gamma_t}r e_r
        \exp\p*{-\int_{\gamma_s}^{r}u\mu_u\D u}\D r.
    \end{aligned}
  \]
  Both constants are finite, with the improper-integral convention of
  \cref{theo:lip-error-ft} at $s=0$.
\end{lemma}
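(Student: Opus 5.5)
The plan is to change variables so that both ODEs live in the rescaled frame where the noise level is $\gamma$. Then I compare the two trajectories with a Grönwall-type argument. The convexity lower bound $\nabla^2 f_r\succeq\mu_r\eye$ gives the contraction factor, and the score error $e_r$ gives the forcing term.

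\textbf{Rescaling.} Set $\bar x_u:=x_u/\alpha_u$ and $\bar y_u:=y_u/\alpha_u$. Since $\varpi_u(x)=\alpha_u^{-d}\pi_{\gamma_u}(x/\alpha_u)$, we have $\nabla\log\varpi_u(x)=\alpha_u^{-1}\nabla\log\pi_{\gamma_u}(x/\alpha_u)=-\alpha_u^{-1}\nabla f_{\gamma_u}(x/\alpha_u)$. Using $a_u=\dot\alpha_u/\alpha_u$ and $b_u^2=2\alpha_u\sigma_u\dot\gamma_u=2\alpha_u^2\gamma_u\dot\gamma_u$, the term $a_ux_u$ cancels in $\frac{\D}{\D u}(x_u/\alpha_u)$, and \eqref{eq:grad-flow} becomes
\[
\dot{\bar x}_u=\gamma_u\dot\gamma_u\,\nabla f_{\gamma_u}(\bar x_u).
\]
By the assumed parametrization $s_\theta(x,u)=\alpha_u^{-1}\widetilde s_\theta(x/\alpha_u,\gamma_u)$, \eqref{eq:grad-flow-approx} becomes
\[
\dot{\bar y}_u=-\gamma_u\dot\gamma_u\,\widetilde s_\theta(\bar y_u,\gamma_u).
\]
These identities hold almost everywhere, because the trajectories are locally absolutely continuous and $\alpha,\gamma$ are absolutely continuous.

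\textbf{Differential inequality.} Let $\delta_u:=\bar x_u-\bar y_u$. Then
\[
\dot\delta_u=\gamma_u\dot\gamma_u\bigl[\nabla f_{\gamma_u}(\bar x_u)-\nabla f_{\gamma_u}(\bar y_u)\bigr]+\gamma_u\dot\gamma_u\bigl[\nabla f_{\gamma_u}(\bar y_u)+\widetilde s_\theta(\bar y_u,\gamma_u)\bigr].
\]
We have $\langle\nabla f_r(a)-\nabla f_r(b),a-b\rangle\ge\mu_r\|a-b\|^2$ by integrating the Hessian bound along the segment (this uses that $f_r$ is smooth for $r>0$, by Gaussian convolution). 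We also have $\|\nabla f_r+\widetilde s_\theta(\cdot,r)\|\le e_r$, and $\dot\gamma_u\ge0$. Together these give, a.e.,
\[
\frac{\D}{\D u}\|\delta_u\|\ge\gamma_u\dot\gamma_u\mu_{\gamma_u}\|\delta_u\|-\gamma_u\dot\gamma_u e_{\gamma_u}.
\]
Where $\delta_u=0$, I would either work with $\sqrt{\|\delta_u\|^2+\epsilon}$ and let $\epsilon\to0$, or use the a.e. derivative of a Lipschitz norm.

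\textbf{Integration.} Introduce the integrating factor $G(u):=\exp\bigl(-\int_{\gamma_s}^{\gamma_u}r\mu_r\D r\bigr)$. The change of variables $r=\gamma_u$ is valid because $\gamma$ is absolutely continuous and nondecreasing and $r\mu_r$ is locally integrable on $(0,\infty)$. Then $\frac{\D}{\D u}\bigl(G(u)\|\delta_u\|\bigr)\ge -G(u)\gamma_u\dot\gamma_ue_{\gamma_u}$. Integrating over $[s,t]$ gives
\[
\|\delta_s\|\le G(t)\|\delta_t\|+\int_{\gamma_s}^{\gamma_t}re_r\exp\Bigl(-\int_{\gamma_s}^{r}u\mu_u\D u\Bigr)\D r.
\]
Multiplying by $\alpha_s$ and using $\|x_s-y_s\|=\alpha_s\|\delta_s\|$ and $\|\delta_t\|=\|x_t-y_t\|/\alpha_t$ yields exactly $L_{s,t}$ and $E_{s,t}$.

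\textbf{The endpoint $s=0$ (main obstacle).} When $s=0$ the integrals start at $\gamma_0=0$, and $\mu_r$ may be unbounded near $0$ (either sign). I would first prove the bound for $s'>0$, then let $s'\downarrow0$. The left side converges by the continuous extension of the trajectories to $[0,t]$. On the right, write $\exp(-\int_{\gamma_{s'}}^{r}u\mu_u\D u)=\exp(\int_0^{\gamma_{s'}})\cdot\exp(-\int_0^{r})$ in the improper sense. Integrability of $r\mu_r^-$ keeps the negative part bounded. If the positive part diverges, the factor tends to $0$, consistent with the stated convention; monotone or dominated convergence, using $re_r\in L^1_{\mathrm{loc}}$, then handles the $E$ term. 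Finiteness of both constants follows from $r\mu_r^-,re_r\in L^1_{\mathrm{loc}}$, since the exponentials are bounded by $\exp(\int_0^{\gamma_t}u\mu_u^-\D u)<\infty$.
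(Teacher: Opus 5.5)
Your proposal is correct and is essentially the paper's own proof. It uses the same rescaling by $\alpha_u$ and the same a.e.\ inequality $\frac{\D}{\D u}\norm{\delta_u}\ge\gamma_u\dot\gamma_u\bigl(\mu_{\gamma_u}\norm{\delta_u}-e_{\gamma_u}\bigr)$. It then applies a backward integrating factor with the substitution $r=\gamma_u$. At $s=0$ it uses dominated convergence with the bound $\exp\bigl(\int_0^{\gamma_t}u\mu_u^-\D u\bigr)$. Two details need tightening.

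\emph{Integrability of $\mu_r^+$.} You assert that $r\mu_r$ is locally integrable on $(0,\infty)$, but the hypotheses only give this for $\mu_r^-$. You need the positive part too, for $G$ to be absolutely continuous on $[s,t]$ so that integrating $\frac{\D}{\D u}\bigl(G(u)\norm{\delta_u}\bigr)$ is legitimate. The paper supplies it by noting that $\mu_r\le\lambda_{\min}(\nabla^2 f_r(0))$, which is continuous in $r>0$ by Gaussian smoothing. Hence $\mu_r^+$ is bounded on compact subsets of $(0,\infty)$. Alternatively, run your argument with $\mu\wedge N$, which is legitimate since $\norm{\delta_u}\ge0$, and let $N\to\infty$ by monotone convergence.

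\emph{The limit $s\downarrow0$.} Your factorization $\exp\bigl(\int_0^{\gamma_{s'}}\bigr)\cdot\exp\bigl(-\int_0^r\bigr)$ becomes the undefined form $\infty\cdot 0$ precisely in the case where the convention matters. Argue directly instead. As $s'\downarrow0$, $\int_{\gamma_{s'}}^r u\mu_u\D u$ converges in $(-\infty,+\infty]$, because the positive part increases and the negative part converges. So the integrand of $E_{s',t}$ converges pointwise and is dominated by $re_r\exp\bigl(\int_0^{\gamma_t}u\mu_u^-\D u\bigr)$, and dominated convergence finishes as you intended.
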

\begin{proof}
  Fix $0<s<t$. On this interval set $z_u=x_u/\alpha_u$,
  $w_u=y_u/\alpha_u$ and $c_u=\gamma_u\dot\gamma_u\ge0$.
  The identity
  $\nabla\log\varpi_u(x)=-\alpha_u^{-1}\nabla f_{\gamma_u}(x/\alpha_u)$
  and $b_u^2=2\alpha_u^2c_u$ give, almost everywhere,
  \[
    \dot z_u=c_u\nabla f_{\gamma_u}(z_u),\qquad
    \dot w_u=-c_u\replaced{\widetilde s_\theta(w_u,\gamma_u)}{\widetilde s_{\gamma_u}(w_u)}.
  \]
  By the curvature bound and \eqref{eq:approx-assumption}, the absolutely
  continuous function $h_u=\norm{z_u-w_u}$ satisfies
  \[
    \dot h_u\ge c_u\mu_{\gamma_u}h_u-c_u e_{\gamma_u}
    \quad\text{almost everywhere}.
  \]
  For $h_u>0$, this follows by taking the scalar product of
  $\dot z_u-\dot w_u$ with $(z_u-w_u)/h_u$; on $\{h_u=0\}$,
  $\dot h_u=0$ almost everywhere, so the inequality still holds.
  Gaussian smoothing makes $\nabla^2f_r(0)$ continuous for $r>0$;
  hence $\mu_r\le\lambda_{\min}(\nabla^2f_r(0))$ bounds $\mu_r^+$ on compact
  positive-noise intervals. Together with the integrability assumptions
  and substitution for an absolutely continuous nondecreasing $\gamma$,
  this ensures that
  $c_u\mu_{\gamma_u}$ and $c_u e_{\gamma_u}$ are integrable on $[s,t]$.
  A backward integrating factor therefore gives
  \[
    h_s\le e^{-\int_s^t c_u\mu_{\gamma_u}\D u}h_t
      +\int_s^t c_v e_{\gamma_v}
          e^{-\int_s^v c_u\mu_{\gamma_u}\D u}\D v.
  \]
  Substituting $r=\gamma_v$ and multiplying by $\alpha_s$ gives the stated
  constants. This substitution needs no inverse of $\gamma$ and allows
  constant positive-noise intervals.

  To pass to $s=0$, let $s\downarrow0$. For $0<a\lec r\lec\gamma_t$,
  \[
    \exp\p*{-\int_a^r u\mu_u\D u}
      \le C_t:=\exp\p*{\int_0^{\gamma_t}u\mu_u^-\D u}<\infty.
  \]
  Since $\int_0^{\gamma_t}r e_r\D r<\infty$, dominated convergence yields
  the stated $E_{0,t}$, even if $\int_0^r u\mu_u\D u=+\infty$.
  The same improper-integral limit gives $L_{0,t}$; continuity of the
  trajectories and $\alpha_0=1$ give the left-hand side.
  These bounds also prove finiteness. The case $s=t$ is immediate.
\end{proof}

\begin{lemma}[Strongly log-concave case, pathwise form]\label{lem:strong-log-concave-pathwise}
Under the parametrization and flow-existence assumptions of
\cref{theo:lip-error-ft}, let $f=-\log\pi$ be $C^2$ and
$m$-strongly convex, $m>0$. Let $x_u,y_u$ be the exact and approximate
trajectories of \cref{lem:lip-error-ft}. Assume that
  there exist \(M_0, M_1 \geq 0\) such that
  \[
    e_r=\sup_{x\in\Xspace}\norm{\replaced{\widetilde s_\theta(x,r)}{\widetilde s_r(x)}-\nabla\log\pi_r(x)}
    \le M_0+M_1/r
  \]
  for all $r>0$, with $e_r$ measurable. Then
  \[
    \norm{x_{0} - y_{0}} \le L_{t} \norm{x_{t} - y_{t}} + E_{t}
  \]
  where
  \[
    L_t
    = \alpha_{t}^{-1} \p*{1 + m \gamma_{t}^{2} }^{-\nicefrac{1}{2}}
  \]
  and
  \[
    \begin{aligned}
      E_{t}
      &= \frac{M_{0}}{m}
        \p*{\sqrt{1+m \gamma_{t}^{2}} - 1}
        +
        \frac{M_{1}}{\sqrt{m}}
        \, \arcsinh\p*{\sqrt{m} \gamma_{t} }\,.
    \end{aligned}
  \]
\end{lemma}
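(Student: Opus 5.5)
The plan is to apply \cref{lem:lip-error-ft} with $s=0$ and compute the two constants in closed form. This requires a lower curvature profile $\mu_r$ for the smoothed potentials $f_r=-\log\pi_r$ and checking the integrability hypotheses of \cref{theo:lip-error-ft}.

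The first and main step is to show that Gaussian smoothing preserves strong log-concavity in the quantitative form
\[
  \nabla^2 f_r\succeq \mu_r\eye,\qquad \mu_r=\frac{m}{1+mr^2},\qquad r>0.
\]
I would argue via the Brascamp--Lieb (Poincaré-type) inequality for the posterior. Write $\pi_r(x)=\int \pi(y)\kappa_r(x-y)\D y$. Differentiating twice gives
\[
  \nabla^2 f_r(x)=r^{-2}\eye-r^{-4}\operatorname{Cov}_{p_x}(Y),
\]
where $p_x(y)\propto \exp(-f(y)-\|x-y\|^2/(2r^2))$. The posterior $p_x$ is $(m+r^{-2})$-strongly log-concave, so Brascamp--Lieb yields $\operatorname{Cov}_{p_x}\preceq (m+r^{-2})^{-1}\eye$. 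Hence
\[
  \nabla^2 f_r\succeq r^{-2}-r^{-4}\frac{r^2}{1+mr^2}=\frac{m}{1+mr^2}.
\]
The $C^2$ hypothesis on $f$, together with strong convexity, justifies differentiating under the integral; Gaussian tails of the posterior dominate. This is the step I expect to need the most care, namely the differentiation and the Brascamp--Lieb citation. Alternatively one could use Prékopa--Leindler for the convolution of an $m$-log-concave density with a $r^{-2}$-log-concave Gaussian, which gives the same harmonic-type constant $(m^{-1}+r^2)^{-1}$.

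Next come the integrals. We have $\int_0^{r}u\mu_u\D u=\tfrac12\log(1+mr^2)$, which is finite, so the improper-integral convention is not needed. Then
\[
  \exp\p*{-\int_0^{\gamma_t}u\mu_u\D u}=(1+m\gamma_t^2)^{-\nicefrac12},
\]
and $L_{0,t}=\alpha_t^{-1}(1+m\gamma_t^2)^{-\nicefrac12}=L_t$, using $\alpha_0=1$. For the error term, substituting $e_r\le M_0+M_1/r$ gives
\[
  E_{0,t}\le\int_0^{\gamma_t}\frac{M_0r+M_1}{\sqrt{1+mr^2}}\D r
  =\frac{M_0}{m}\p*{\sqrt{1+m\gamma_t^2}-1}+\frac{M_1}{\sqrt m}\arcsinh\p*{\sqrt m\,\gamma_t}.
\]
This uses the antiderivatives $\sqrt{1+mr^2}/m$ and $\arcsinh(\sqrt m r)/\sqrt m$.

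Finally, I would check the hypotheses of \cref{lem:lip-error-ft}. Since $\mu_r>0$, we have $\mu_r^-=0$. Also $re_r\le M_0r+M_1$ is locally integrable, and $\mu_r$ is continuous, hence measurable. Therefore the lemma applies, and the bound with the larger constant $E_t$ follows because $E_{0,t}\le E_t$.
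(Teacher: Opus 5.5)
Your proposal is correct and follows the paper's proof: apply \cref{lem:lip-error-ft} at $s=0$ with the curvature profile $\mu_r=m/(1+mr^2)$ and integrate, which gives exactly $L_t$ and the stated $E_t$. The only difference is that you derive the smoothing bound yourself, via the posterior-covariance identity and Brascamp--Lieb, whereas the paper cites it as \cref{lem:gauss-smoothing} (Saumard--Wellner). One small correction: differentiating under the integral uses only the smoothness of the Gaussian kernel; the $C^2$ assumption on $f$ is needed for Brascamp--Lieb, not for that step.
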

\begin{proof}
  By \cref{lem:gauss-smoothing}, \(f_{r}\) is \(\mu_{r}\)-strongly convex for
  \(\mu_{r} = \frac{m}{m r^{2} + 1}\), hence by \cref{lem:lip-error-ft}, we have
  the result once we substitute \(\mu_{r}\), \(\alpha_{0}=1, \sigma_{0}=0\) and integrate.
\end{proof}

\begin{lemma}[{Curvature under Gaussian smoothing, \citep[Theorem 3.7-(b)]{saumard2014log}}]\label{lem:gauss-smoothing}
Suppose that $\pi$ is $m$-strongly log-concave, for $m \geq 0$. Then for any $r >0$, $\pi_r$ is $\mu_r$-strongly log-concave with $\mu_r = m/(1+mr^2)$. Since $\pi_r=\kappa_r*\pi$ is positive and $C^\infty$, this means that $f_r=-\log\pi_r\in C^\infty(\Xspace)$ satisfies $\nabla^2 f_r\succeq\mu_r\eye$ on $\Xspace$.

\end{lemma}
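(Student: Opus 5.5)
Write $\pi_r$ as the law of $X+rZ$ with $X\sim\pi$ and $Z\sim\mathcal N(0,\eye)$ independent. Its density is then $\pi_r(y)=\int \kappa_r(y-x)\pi(x)\D x$. The plan is to show that $\pi_r$ is the marginal of a strongly log-concave joint density, and then invoke the Prékopa--Leindler preservation of strong log-concavity under marginalization. This is the standard route behind \citep[Theorem~3.7-(b)]{saumard2014log}.

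First, the joint density of $(X,Y)$ with $Y=X+rZ$ is $p(x,y)\propto\exp\big(-f(x)-\|y-x\|^2/(2r^2)\big)$. Its Hessian in $(x,y)$ is
\[
\begin{pmatrix}\nabla^2 f+r^{-2}\eye & -r^{-2}\eye\\ -r^{-2}\eye & r^{-2}\eye\end{pmatrix}
\succeq
\begin{pmatrix}(m+r^{-2})\eye & -r^{-2}\eye\\ -r^{-2}\eye & r^{-2}\eye\end{pmatrix}.
\]
When $f$ is not $C^2$, the inequality is read in the sense of convexity of $f-\tfrac m2\|\cdot\|^2$.

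Second, I use the marginal version of Brascamp--Lieb/Prékopa. If $V(x,y)$ is such that $V-\tfrac12\langle (x,y),H(x,y)\rangle$ is convex, then the marginal potential $-\log\int e^{-V(x,y)}\D x$ is convex after subtracting the quadratic form associated with the Schur complement $H_{yy}-H_{yx}H_{xx}^{-1}H_{xy}$. The reason is that partial minimization of a convex quadratic leaves this Schur complement, and Prékopa's theorem preserves convexity of the remainder under marginalization. Here the Schur complement is
\[
r^{-2}-\frac{r^{-4}}{m+r^{-2}}=\frac{m}{1+mr^2}=\mu_r .
\]
This gives the claimed $\mu_r$. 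For $m=0$ the statement reduces to plain Prékopa, since $\mu_r=0$.

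The main obstacle is making the second step rigorous: one must show that the marginal of $e^{-\frac12 q_H-W}$, with $W$ convex, is $e^{-\frac12 q_S-\widetilde W}$ with $\widetilde W$ convex. I would handle it by a change of variables $x=x'+(m+r^{-2})^{-1}r^{-2}y$, which block-diagonalizes $H$. After that, the quadratic part in $y$ is exactly $\mu_r\|y\|^2/2$, and the remaining factor is $e^{-\widetilde V(x',y)}$ with $\widetilde V$ jointly convex. Prékopa's theorem then gives log-concavity of $\int e^{-\widetilde V(x',y)}\D x'$ in $y$.

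Finally, positivity and smoothness of $\pi_r$ follow from convolution with the Gaussian $\kappa_r$, by differentiation under the integral, using finite second moments for domination. So the $\mu_r$-strong convexity of $f_r$ is equivalent to $\nabla^2 f_r\succeq\mu_r\eye$.
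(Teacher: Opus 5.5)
Your proof is correct and takes essentially the paper's approach. The paper gives no proof of its own and cites \citep[Theorem~3.7-(b)]{saumard2014log}, and your argument is the standard one behind that result: the change of variables $x=x'+(1+mr^2)^{-1}y$ is the same completion of the square the paper uses in Step~3 of the proof of \cref{lem:curvature_weakly_convex} (which recovers this lemma at $\ell=0$), followed by Pr\'ekopa's theorem applied to the jointly convex potential $\tfrac{m+r^{-2}}{2}\|x'\|^2+g\big(x'+(1+mr^2)^{-1}y\big)$ with $g=f-\tfrac m2\|\cdot\|^2$. As a minor simplification, smoothness of $\pi_r$ needs only that the derivatives of $\kappa_r$ are bounded and that $\pi$ is a probability measure, not finite second moments.
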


We use $\psi_\ell$ and the relaxed curvature condition from
\eqref{eq:psi-def}--\eqref{eq:weakly-convex}, allowing $m=0$ here.
For $m,\ell\ge0$, a positive density $\pi$ with $f=-\log\pi\in C^1$
satisfying \eqref{eq:weakly-convex} is called
\emph{$(m,\ell)$-strongly log-concave outside a ball} below.
This terminology refers to the condition on pairs of points; a Hessian
bound outside a spatial ball is a sufficient condition, as shown below.
The following lemmas establish the curvature bounds used in
\cref{thm:weak-curvature}.

\begin{lemma}[Elementary bounds on \(\psi_\ell\)]\label{lem:psi-domination}
  Let \(\psi_\ell\) be as in \eqref{eq:psi-def}.
  \begin{enumerate}[label=(\alph*)]
  \item\label{psi:upper} For all \(\ell\ge0\) and \(u\ge0\),
    \(\psi_\ell(u)\le\min\{\ell u^2,\,2\ell^{\nicefrac{1}{2}}u\}\).
  \item\label{psi:dom} Let \(\beta',B'\ge0\) and let
    \(\Delta:[0,\infty)\to\rset\) satisfy
    \(\Delta(u)\le\min\{\beta'u^2,\,2B'u\}\) for all \(u\ge0\).  Then
    \(\Delta\lec\psi_\ell\) on \([0,\infty)\) for
    \(\ell=2\max\{\beta',\,B'^2\}\).
  \end{enumerate}
\end{lemma}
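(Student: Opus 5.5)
Part \ref{psi:upper} reduces to two elementary inequalities for $\tanh$. Set $v=\sqrt\ell\,u/2\ge0$, so that $\psi_\ell(u)=4\sqrt\ell\,u\cdot\tfrac12\tanh v = 2\sqrt\ell\,u\tanh v$. Since $0\le\tanh v\le1$, we get $\psi_\ell(u)\le 2\sqrt\ell\,u$ at once. Since also $\tanh v\le v$ for $v\ge0$ (concavity of $\tanh$ on $[0,\infty)$ with $\tanh'(0)=1$), we get $\psi_\ell(u)\le 2\sqrt\ell\,u\cdot\sqrt\ell\,u/2=\ell u^2$. The case $\ell=0$ is trivial.

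For part \ref{psi:dom}, it suffices to show $\min\{\beta'u^2,2B'u\}\le\psi_\ell(u)$ for $\ell=2\max\{\beta',B'^2\}$. Because $\psi_\ell$ is nondecreasing in $\ell$ (both $\sqrt\ell$ and $\tanh(\sqrt\ell u/2)$ increase), we may replace $\beta'$ and $B'^2$ by $c:=\max\{\beta',B'^2\}$, so $\ell=2c$ and the left side is at most $\min\{cu^2,2\sqrt c\,u\}$. If $c=0$ both sides vanish. Otherwise rescale by $w=\sqrt c\,u$. The claim becomes
\[
\min\{w^2,2w\}\le 2\sqrt2\,w\tanh(w/\sqrt2),\qquad w\ge0,
\]
after dividing out a common factor $c$; indeed $\psi_{2c}(u)=2\sqrt{2c}\,u\tanh(\sqrt{2c}\,u/2)=2\sqrt2\,w\tanh(w/\sqrt2)$. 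Dividing by $w>0$, we need $\min\{w,2\}\le 2\sqrt2\tanh(w/\sqrt2)$.

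We split at $w=2$. For $w\ge2$, monotonicity of $\tanh$ reduces the claim to $2\sqrt2\tanh(\sqrt2)\ge2$, i.e.\ $\tanh(\sqrt2)\ge1/\sqrt2\approx0.707$. This holds since $\tanh(\sqrt2)\approx0.888$. A clean certificate uses $\tanh x\ge x/(1+x^2/3)$ for $x\ge0$, which follows from the continued fraction of $\tanh$ or by comparing derivatives; at $x=\sqrt2$ it gives $\sqrt2/(5/3)\approx0.85$. For $0\le w\le 2$, the function $g(w)=2\sqrt2\tanh(w/\sqrt2)$ is concave on $[0,\infty)$, so on $[0,2]$ it lies above its chord from $(0,0)$ to $(2,g(2))$. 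Since $g(2)\ge2$ by the previous case, that chord dominates the line $w$, and therefore $g(w)\ge w$ on $[0,2]$.

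The only delicate step is the numerical inequality $\tanh(\sqrt2)\ge1/\sqrt2$. It is equivalent to $e^{2\sqrt2}\ge(\sqrt2+1)/(\sqrt2-1)=3+2\sqrt2\approx5.83$, and this holds because $e^{2\sqrt2}\ge1+2\sqrt2+4+\tfrac{(2\sqrt2)^3}{6}>5.83$. This gives a fully elementary proof, with no numerics needed beyond this one line.
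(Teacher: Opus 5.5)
Your proof is correct and follows essentially the paper's route: (a) is exactly $\tanh z\le\min\{z,1\}$, and (b) combines the concavity--chord lower bound for $\tanh$ with a single numerical constant check. The differences are only bookkeeping. You normalize by $c=\max\{\beta',B'^2\}$ and split at the kink $w=2$ (i.e.\ $z=\sqrt2$, which needs $\tanh\sqrt2\ge1/\sqrt2$), whereas the paper splits at $z=1$, records the sandwich $\tanh(1)\min\{\ell u^2,2\sqrt\ell\,u\}\le\psi_\ell(u)$ for every $\ell$, and then uses $2\tanh1>1$ and $\sqrt2\tanh1>1$; also, your appeal to monotonicity of $\psi_\ell$ in $\ell$ is never needed, since $\ell=2c$ exactly and only $\beta'\le c$ and $B'\le\sqrt c$ are used.
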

\begin{proof}
  \ref{psi:upper} is \(\tanh z\le\min\{z,1\}\) with \(z=u\ell^{\nicefrac{1}{2}}/2\) (and is
  trivial when \(\ell=0\)).

  \ref{psi:dom} The function \(\tanh\) is concave on \([0,\infty)\) with
  \(\tanh0=0\), so \(\tanh z\ge z\tanh1\) for \(z\in[0,1]\) and
  \(\tanh z\ge\tanh1\) for \(z\ge1\), i.e.\ \(\tanh z\ge\tanh(1)\min\{z,1\}\).
  With \(z=u\ell^{\nicefrac{1}{2}}/2\) this gives the companion lower bound to
  \ref{psi:upper},
  \begin{equation}\label{eq:psi-sandwich}
    \tanh(1)\,\min\{\ell u^2,\,2\ell^{\nicefrac{1}{2}}u\}\;\le\;\psi_\ell(u),
    \qquad u\ge0,
  \end{equation}
  and, \(\min\) being nondecreasing in each argument,
  \(\Delta(u)\le\min\{\beta'u^2,2B'u\}\le\psi_\ell(u)\) for all \(u\ge0\) as
  soon as \(\tanh(1)\ell\ge\beta'\) and \(\tanh(1)\ell^{\nicefrac{1}{2}}\ge B'\).  Both hold for
  \(\ell=2\max\{\beta',B'^2\}\), because \(2\tanh1>1\) and
  \(\sqrt2\,\tanh1>1\).
\end{proof}

\paragraph{A sufficient condition for the relaxed curvature bound.}
Suppose $f\in C^2$, with $\nabla^2 f\succeq m\eye$ outside a ball of
radius $\varrho>0$ and $\nabla^2 f\succeq-\beta\eye$ inside, where
$m>0$ and $\beta\ge0$. For $u=\norm{x-y}>0$, the part of the segment
$[y,x]$ inside the ball has length at most $\min(u,2\varrho)$.
Integrating the Hessian along this segment therefore gives
\[
  \psAl{\nabla f(x)-\nabla f(y)}{x-y}
  \ge m u^2-(m+\beta)\min\{u^2,2\varrho u\}.
\]
The case $x=y$ is immediate. Applying \cref{lem:psi-domination}\ref{psi:dom}
with $\beta'=m+\beta$ and $B'=(m+\beta)\varrho$ proves
\eqref{eq:weakly-convex} with
$\ell=2\max\{m+\beta,(m+\beta)^2\varrho^2\}$.

\begin{lemma}[Weak curvature under Gaussian smoothing]
  \label{lem:curvature_weakly_convex}
  Suppose \deleted{that }$\pi$ is $(m,\ell)$-strongly log-concave outside a ball \replaced{as
  in}{in the
  sense of}~\eqref{eq:weakly-convex}, for some $m,\ell \geq 0$, \replaced{let}{and write}
  $f=-\log\pi$\replaced{ and}{.
  Set} $c_r = 1+mr^2$ for $r \geq 0$. Then,
  \begin{enumerate}[label=(\alph*)]
  \item \label{lem:weak_conve_heat_semigorup_a} With the convention
    $\pi_0=\pi$, for any $r\geq 0$, $\pi_r$ is
    $(m_r,\ell_r)$-strongly log-concave outside a ball, with
    \begin{equation}
      \label{eq:evolved-convexity-parameters}
      m_r=\frac{m}{c_r}=\frac{m}{1+mr^2}\, ,
      \qquad
      \ell_r=\frac{\ell}{c_r^2}=\frac{\ell}{(1+mr^2)^2}\, ;
    \end{equation}
    that is, $f_r=-\log\pi_r$ is \replaced{$C^1$}{continuously differentiable} and satisfies
    \eqref{eq:weakly-convex} with $(m_r,\ell_r)$ in place of $(m,\ell)$.
  \item \label{lem:weak_conve_heat_semigorup_b}
    For any $r>0$, $f_r$ is $C^\infty$ and
    $\nabla^2 f_r(x) \succeq (m_r-\ell_r)\eye$ for any $x\in\rset^d$.
  \end{enumerate}
\end{lemma}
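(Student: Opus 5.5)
The plan is to prove \ref{lem:weak_conve_heat_semigorup_a} by a coupling argument on the posterior, and then to obtain \ref{lem:weak_conve_heat_semigorup_b} from \ref{lem:weak_conve_heat_semigorup_a} by a second-order expansion along short segments.

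For \ref{lem:weak_conve_heat_semigorup_a}, fix $r>0$; the case $r=0$ is the hypothesis. Regularity comes first. Since $\pi\in\mathcal P_2(\Xspace)$, differentiation under the integral in $\pi_r(x)=\int\kappa_r(x-z)\pi(z)\D z$ is justified by Gaussian domination. Hence $\pi_r$ is positive and $C^\infty$, and Tweedie's identity holds:
\[
\nabla f_r(x)=\frac{x-\E[Y\mid x]}{r^2},
\]
where $Y\mid x$ has the posterior law $p_x(\D z)\propto\exp\big(-f(z)-\norm{x-z}^2/(2r^2)\big)\D z$. Writing $u=\norm{x-y}$, this gives
\[
\psAl{\nabla f_r(x)-\nabla f_r(y)}{x-y}=\frac{u^2}{r^2}-\frac{1}{r^2}\psAl{\E_{p_x}Y-\E_{p_y}Y}{x-y}.
\]
Everything therefore reduces to an upper bound on the displacement of posterior means. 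Each posterior has potential $V_x(z)=f(z)+\norm{x-z}^2/(2r^2)$. By \eqref{eq:weakly-convex} this potential satisfies the same relaxed curvature inequality with constant $m+r^{-2}$ and the same defect $\psi_\ell$. Moreover $\nabla V_x-\nabla V_y\equiv(y-x)/r^2$ is a constant vector.

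To control the mean displacement, I would couple two Langevin diffusions with invariant laws $p_x$ and $p_y$. I would use reflection coupling when the two processes are close and synchronous coupling when they are far apart, as in Eberle's and Conforti's arguments. The distance would be measured through a concave function adapted to $\psi_\ell$. The $\tanh$ profile is chosen so that the differential inequality for the coupled distance closes: $\psi_\ell$ is exactly the defect for which the one-dimensional comparison ODE has a stationary solution. This is the structure behind the propagation of weak semiconvexity in Conforti's work on Schrödinger potentials and score diffusions. Letting time go to infinity, the coupling should give
\[
\psAl{\E_{p_x}Y-\E_{p_y}Y}{x-y}\le \frac{u^2}{1+mr^2}\cdot\frac{1}{r^{-2}}\cdot(\text{correction}),
\]
with the correction producing $\psi_{\ell_r}(u)$. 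Substituting back should yield $m_ru^2-\psi_{\ell_r}(u)$ with $m_r=m/c_r$ and $\ell_r=\ell/c_r^2$. As a consistency check, the case $\ell=0$ must reduce to \cref{lem:gauss-smoothing}. Alternatively, I would rescale so that the statement matches a cited propagation theorem, the heat semigroup at time $r^2$, and verify that its parameter map is $(m,\ell)\mapsto(m/c_r,\ell/c_r^2)$. The scalings $m\mapsto m/c_r$ and $\ell\mapsto\ell/c_r^2$ are consistent with $\psi_\ell(u)$ behaving as $\ell u^2$ near zero and as $2\sqrt\ell u$ at infinity. I would also sanity-check the result on a two-component Gaussian mixture.

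For \ref{lem:weak_conve_heat_semigorup_b}, take $y=x+\varepsilon v$ with $\norm v=1$ in \ref{lem:weak_conve_heat_semigorup_a}. Since $f_r\in C^2$, the left-hand side equals $\varepsilon^2\psAl{\nabla^2f_r(x)v}{v}+o(\varepsilon^2)$. By \cref{lem:psi-domination}\ref{psi:upper}, $\psi_{\ell_r}(\varepsilon)\le\ell_r\varepsilon^2$. Dividing by $\varepsilon^2$ and letting $\varepsilon\to0$ gives $\nabla^2f_r\succeq(m_r-\ell_r)\eye$.

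The main obstacle is the coupling step in \ref{lem:weak_conve_heat_semigorup_a}. I must show that the specific $\tanh$ defect is reproduced exactly with the parameters $\ell_r$, and not merely up to a constant. I would first try to verify directly that $\psi_\ell$ is the stationary solution of the comparison ODE, and then track how the tilt $(x-y)/r^2$ enters that ODE.
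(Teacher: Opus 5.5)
Your part (b) is correct and is the paper's argument. Part (a), however, has a genuine gap at the one step that carries content. After Tweedie's identity you must show
\[
  \psAl{\E_{p_x}Y-\E_{p_y}Y}{x-y}\le\frac{u^2}{c_r}+r^2\,\psi_{\ell_r}(u),
\]
with an additive defect of exactly this size. Your displayed target instead carries an extra factor $r^2$ on the main term, so even at $\ell=0$ it would not return \cref{lem:gauss-smoothing}, and the ``correction'' is never computed. The Langevin coupling you describe does not supply the inequality. Reflection or synchronous coupling with a concave distance controls $W_1(p_x,p_y)$ only up to constants fixed by the chosen distance function. Nothing in the sketch shows that the resulting defect is $\psi_{\ell_r}$ and not merely something comparable to it, and this is exactly the obstacle you leave open. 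Local tools cannot close it either. From the semiconvexity $\psi_\ell(u)\le\ell u^2$, Brascamp--Lieb bounds $\operatorname{Cov}(p_x)$ by $\frac{r^2}{c_r-\ell r^2}\eye$. This is strictly more than the $\frac{r^2(c_r+\ell r^2)}{c_r^2}\eye$ needed even for (b), and it gives nothing once $(\ell-m)r^2\ge1$. The sharper constants of the lemma come from the linear growth of $\psi_\ell$ at large separations, which local arguments do not see.

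The missing idea is to peel off the quadratic part. Set $h:=f-\frac{m}{2}\norm{\cdot}^2$. Then \eqref{eq:weakly-convex} says precisely that $h$ belongs to the class $\mathcal F_\ell$ of \citep[Theorem~2.1]{conforti2024weak}. That class involves no strong convexity and is invariant, with the same $\ell$, under $g\mapsto-\log P_te^{-g}$. Complete the square, $\frac{\norm{x-y}^2}{2r^2}+\frac{m}{2}\norm{y}^2=\frac{\norm{y-x/c_r}^2}{2\tau_r^2}+\frac{m}{2c_r}\norm{x}^2$ with $\tau_r^2=r^2/c_r$. This gives $f_r(x)=\frac{d}{2}\log c_r+\frac{m}{2c_r}\norm{x}^2+h_{\tau_r}(x/c_r)$, where $h_s:=-\log(\kappa_s*e^{-h})$. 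The identity $\psi_\ell(u/c)=\psi_{\ell/c^2}(u)$ then turns the invariance into the parameters $(m_r,\ell_r)$ with no loss.

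In your own language, the same square completion shows that $p_x$ is the posterior of $e^{-h}$ at noise level $\tau_r$ and observation $x/c_r$. Your mean-displacement inequality is therefore exactly Conforti's invariance rewritten through Tweedie, and a coupling proof would have to reprove that theorem. Your ``alternative'' points in this direction but is not usable as stated. The cited result carries no $m$ and does not change $\ell$; the parameter map comes entirely from the peel-off and the rescaling. Also, the heat time is $r^2/c_r$ at the point $x/c_r$, not $r^2$. On this route you also need $h_s$ to be finite and smooth, although $e^{-h}=\pi\,e^{m\norm{\cdot}^2/2}$ need not be integrable. Here \eqref{eq:weakly-convex} with $\psi_\ell(u)\le2\sqrt\ell\,u$ gives at most exponential growth of $e^{-h}$, which suffices.
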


\begin{proof}
  \ref{lem:weak_conve_heat_semigorup_a} The claim is the hypothesis when
  $r=0$, so let $r>0$ and define $h(x)=f(x)-\frac{m}{2}\norm{x}^2$, which
  is continuously differentiable. For any $x,y\in\rset^d$,
  \eqref{eq:weakly-convex} yields
  \begin{equation}
    \label{eq:h-weakly-convex}
    \psAl{\nabla h(x)-\nabla h(y)}{x-y}
    =
    \psAl{\nabla f(x)-\nabla f(y)}{x-y}
    -m\norm{x-y}^2
    \geq
    -\psi_\ell(\norm{x-y})\, .
  \end{equation}

  \emph{Step 1: $e^{-h}$ grows at most exponentially.}  Fix a unit
  vector $\theta\in\rset^d$ and $t>0$. Applying \eqref{eq:h-weakly-convex}
  with $x=t\theta$ and $y=0$, dividing by $t$, and using
  $\psi_\ell(t)\leq 2\ell^{\nicefrac{1}{2}}t$
  (\cref{lem:psi-domination}\ref{psi:upper}), we obtain
  \[
    \frac{\D}{\D t}h(t\theta)
    =\psAl{\nabla h(t\theta)}{\theta}
    \geq
    \psAl{\nabla h(0)}{\theta}-\frac{\psi_\ell(t)}{t}
    \geq
    \psAl{\nabla h(0)}{\theta}-2\ell^{\nicefrac{1}{2}}\, .
  \]
  Integrating over $t\in[0,\norm{x}]$ along the ray $\theta = x/\norm{x}$
  gives, for all $x\in\rset^d$,
  \begin{equation}
    \label{eq:h-linear-lower-bound}
    h(x)\geq h(0)+\psAl{\nabla h(0)}{x}-2\ell^{\nicefrac{1}{2}}\norm{x}\, .
  \end{equation}
  Consequently $e^{-h(y)}\leq e^{-h(0)}e^{A\norm{y}}$ with
  $A=\norm{\nabla h(0)}+2\ell^{\nicefrac{1}{2}}$. Hence, for every $s>0$, the function
  \begin{equation}
    \label{eq:heat-potential-h}
    h_s \defeq -\log\p*{\kappa_s\ast e^{-h}}
  \end{equation}
  is well defined and finite: the Gaussian convolution of a positive
  function with at most exponential growth is finite, positive, and
  $C^\infty$, since differentiation under the integral sign is justified
  by the domination $\abs{\partial^k_x\kappa_s(x-y)}e^{A\norm{y}}\leq
  C_{k,s}(1+\norm{x-y})^{k}\kappa_{2s}(x-y)e^{A\norm{y}}$, which is
  integrable in $y$ locally uniformly in $x$. We also set $h_0=h$.

  \emph{Step 2: propagation of the weak-convexity bound.}  We claim that,
  for every $s\geq 0$ and $x,y\in\rset^d$,
  \begin{equation}
    \label{eq:heat-preservation-h}
    \psAl{\nabla h_s(x)-\nabla h_s(y)}{x-y}
    \geq
    -\psi_\ell(\norm{x-y})\, .
  \end{equation}
  For $s=0$ this is \eqref{eq:h-weakly-convex}, so let $s>0$. Assume first
  $\ell>0$. Let $(P_t)_{t\geq 0}$ be the heat semigroup of a standard
  $d$-dimensional Brownian motion, $P_t g=\kappa_{\sqrt{t}}\ast g$, so that
  $h_s=-\log P_{s^2}e^{-h}$. By \eqref{eq:h-weakly-convex}, $h$ belongs to
  the class $\mathcal{F}_\ell$ of \cite[Theorem 2.1]{conforti2024weak},
  namely the set of continuously differentiable $g:\rset^d\to\rset$ such
  that $\psAl{\nabla g(x)-\nabla g(y)}{x-y}\geq -\psi_\ell(\norm{x-y})$ for
  all $x,y\in\rset^d$. That theorem states that $\mathcal{F}_\ell$ is
  invariant under the flow $g\mapsto -\log P_{t}e^{-g}$ for every
  $t \in [0,\infty)$; applied with $t=s^2$, it gives $h_s\in\mathcal{F}_\ell$,
  which is \eqref{eq:heat-preservation-h}. Assume now $\ell=0$. Since
  $\psi_0=0\geq-\psi_L$\deleted{ for every $L>0$}, $h$ belongs to $\mathcal{F}_L$ for
  every $L>0$, and the previous case yields
  $\psAl{\nabla h_s(x)-\nabla h_s(y)}{x-y}\geq-\psi_L(\norm{x-y})\geq
  -L\norm{x-y}^2$ for every $L>0$,  the last step by
  \cref{lem:psi-domination}\ref{psi:upper}. Letting $L\downarrow 0$ gives \eqref{eq:heat-preservation-h}
  with $\psi_0=0$.

  \emph{Step 3: rescaling.}  Since $f(y)=h(y)+m\norm{y}^2/2$, setting
  $\tau_r\defeq r/c_r^{\nicefrac{1}{2}}$, so that $\tau_r^2=r^2/c_r$, and completing
  the square, we have for all $x,y\in\rset^d$
  \begin{equation}
    \label{eq:weak-curvature-square-completion}
    \frac{\norm{x-y}^2}{2r^2}+\frac{m}{2}\norm{y}^2
    =
    \frac{\norm{y-x/c_r}^2}{2\tau_r^2}
    +\frac{m}{2c_r}\norm{x}^2\, .
  \end{equation}
  Consequently, using $\pi=e^{-f}$ and
  $(2\pi\tau_r^2)^{d/2}/(2\pi r^2)^{d/2}=c_r^{-d/2}$,
  \[
    \begin{aligned}
      \pi_r(x)
      &=(2\pi r^2)^{-d/2}\int_{\rset^d}
        e^{-\norm{x-y}^2/(2r^2)-m\norm{y}^2/2-h(y)}\,\D y \\
      &=
        c_r^{-d/2}
        e^{-m\norm{x}^2/(2c_r)}
        \p*{\kappa_{\tau_r}\ast e^{-h}}(x/c_r)
      \\&=
      c_r^{-d/2}
      e^{-m\norm{x}^2/(2c_r)}
      e^{-h_{\tau_r}(x/c_r)}\, .
    \end{aligned}
  \]
  Taking logarithms,
  \begin{equation}
    \label{eq:fr-decomposition}
    f_r(x)
    =
    \frac{d}{2}\log c_r
    +\frac{m}{2c_r}\norm{x}^2
    +h_{\tau_r}(x/c_r)\, ,
  \end{equation}
  so that, by Step 1, $f_r$ is $C^\infty$ and
  \[
    \nabla f_r(x)
    =\frac{m}{c_r}\,x+\frac{1}{c_r}\,\nabla h_{\tau_r}(x/c_r)\, .
  \]
  Therefore, for any $x,y\in\rset^d$, applying
  \eqref{eq:heat-preservation-h} with $s=\tau_r$ at the points $x/c_r$ and
  $y/c_r$,
  \begin{align*}
    \psAl{\nabla f_r(x)-\nabla f_r(y)}{x-y}
    &=
    \frac{m}{c_r}\norm{x-y}^2
    +
    \psAl{
      \nabla h_{\tau_r}(x/c_r)-\nabla h_{\tau_r}(y/c_r)
    }{
      x/c_r-y/c_r
    }\\
    &\geq
    \frac{m}{c_r}\norm{x-y}^2
    -\psi_\ell(\norm{x-y}/c_r)
    =
    m_r\norm{x-y}^2-\psi_{\ell_r}(\norm{x-y})\, ,
  \end{align*}
  where the last equality follows from the identity
  $\psi_\ell(u/c)=\psi_{\ell/c^2}(u)$, valid for all $u\geq 0$, $c>0$ and
  $\ell\geq 0$, which is immediate from the definition of $\psi_\ell$
  since $(\ell/c^2)^{\nicefrac{1}{2}}=\ell^{\nicefrac{1}{2}}/c$. This proves
  \ref{lem:weak_conve_heat_semigorup_a}.

  \ref{lem:weak_conve_heat_semigorup_b} Let $r>0$. By
  \eqref{eq:fr-decomposition} and Step 1, $f_r$ is $C^\infty$. Fix
  $x\in\rset^d$, a unit vector $v\in\rset^d$ and $\varepsilon>0$. Applying
  \ref{lem:weak_conve_heat_semigorup_a} at the points $x+\varepsilon v$ and
  $x$, and dividing by $\varepsilon^2$, gives
  \[
    \frac{1}{\varepsilon}
    \psAl{\nabla f_r(x+\varepsilon v)-\nabla f_r(x)}{v}
    \geq
    m_r-\frac{\psi_{\ell_r}(\varepsilon)}{\varepsilon^2}\, .
  \]
  By \cref{lem:psi-domination}\ref{psi:upper},
  $\psi_{\ell_r}(\varepsilon)/\varepsilon^2\leq\ell_r$ for every
  $\varepsilon>0$ (including when $\ell_r=0$).
  Letting
  $\varepsilon\downarrow 0$, the left-hand side converges to
  $v^\top\nabla^2 f_r(x)v$ because $f_r$ is $C^2$, and we obtain
  $v^\top\nabla^2 f_r(x)v\geq m_r-\ell_r$. Since $v$ is an arbitrary unit
  vector, the claimed Hessian bound follows.
\end{proof}

\section{Proofs of the main results}\label{sec:proof-gause-noise-CM}
\begin{proof}[Proof of \cref{thm:main}]
For $\rho\in\mathcal P_2(\Xspace)$, the triangle
inequality, $\CMstep{t}(\pi)=\pi$ (\cref{ass:oneshot}) and
\cref{ass:oneshot-approximation,ass:denoise,ass:noising} give
$W_2(\CMstepA{t}(\rho),\pi)
 \le W_2(\CMstepA{t}(\rho),\CMstep{t}(\rho))
    +W_2(\CMstep{t}(\rho),\CMstep{t}(\pi))
 \le E_t+L_tR_t\,W_2(\rho,\pi)$.
With $d_k:=W_2(\rho_k,\pi)$ this yields $d_k\le q_{t_k}d_{k-1}+E_{t_k}$,
and unrolling proves \eqref{eq:multi-stage-cm-err-bound}. Under the
balance condition, $a_k:=Q_{k+1,n}E_{t_k}$ satisfies
$a_{k-1}\le\xi_na_k$ for $k=2,\ldots,n$ and $a_n=E_{t_n}$, so
$a_k\le\xi_n^{n-k}E_{t_n}$, and summing the geometric series gives
\eqref{eq:cm-geometric-error-bound}.
\end{proof}

\paragraph{Remarks on the proof of \cref{thm:main}.}
The one-step estimate $W_2(\CMstepA{t}(\rho),\pi)\le E_t+q_tW_2(\rho,\pi)$
is applied to $\rho=\rho_{k-1}$, a law produced by approximate steps; this
is where the uniformity in the input law of
\cref{ass:oneshot-approximation} is used. No division by a stability
factor occurs, so zero factors are allowed, and neither $q_{t_k}<1$ nor an
ordering of the levels is needed for \eqref{eq:multi-stage-cm-err-bound}.
The balance condition
\begin{equation}\label{eq:cm-error-balance}
  q_{t_k}E_{t_{k-1}}\le\xi_n E_{t_k},\qquad k=2,\ldots,n,
\end{equation}
enters only through
$a_{k-1}=Q_{k+1,n}q_{t_k}E_{t_{k-1}}\le\xi_nQ_{k+1,n}E_{t_k}=\xi_na_k$,
obtained by multiplying \eqref{eq:cm-error-balance} by \replaced{$Q_{k+1,n}\ge0$}{the nonnegative
product $Q_{k+1,n}$}; it covers zero error bounds, and if $E_{t_n}=0$ all
$a_k$ vanish, even when some stability factors are zero. For $n=1$ the
\deleted{backward }induction is empty and \eqref{eq:cm-geometric-error-bound} reads
$E_{t_1}\lec E_{t_1}$.

\begin{proof}[Proof of \cref{prop:learned-recursion}]
  Inserting $\CMstepA{t}(\pi)$ and using
  \cref{ass:approx-denoise,ass:approx-oneshot,ass:noising},
  $W_2(\CMstepA{t}(\rho),\pi)
   \le W_2\big(\CMstepA{t}(\rho),\CMstepA{t}(\pi)\big)
      +W_2\big(\CMstepA{t}(\pi),\pi\big)
   \le\widetilde L_t\,W_2\big(\OPnoise{t}(\rho),\OPnoise{t}(\pi)\big)\deleted{+\beta_t}+E^\pi_t
   \le\widetilde q_t\,W_2(\rho,\pi)+\replaced{E^\pi_t}{\bar E_t}$.
  All iterates have finite second moments (\cref{sec:gen-noise-CM}),
  so this bound applies at every step. The remaining claims follow
  as in the proof of \cref{thm:main}.
\end{proof}

\begin{lemma}[Fixed-point tracking and comparison of error bounds]\label{lem:fixed-point-bound}
  Let \cref{ass:noising,ass:approx-oneshot,ass:approx-denoise} hold.
  Fix $T>0$, an integer $n\ge1$ and levels $t_0,\ldots,t_n\in(0,T]$
  such that \deleted{$\beta_{t_k}=0$, }$\widetilde q_{t_k}:=R_{t_k}\widetilde L_{t_k}\in[0,1)$
  and $E^\pi_{t_k}<\infty$ for $k=0,\ldots,n$.
  Suppose finite constants $H_k\ge0$ satisfy
  \[
    W_2\big(\CMstepA{t_k}(\rho),\CMstepA{t_{k-1}}(\rho)\big)\le H_k,
    \qquad \rho\in\mathcal P_2(\Xspace),\quad k=1,\ldots,n.
  \]
  Set $F_k:=E^\pi_{t_k}/(1-\widetilde q_{t_k})$ and
  $\widetilde Q_{k,j}:=\prod_{i=k}^j\widetilde q_{t_i}$, with empty
  products equal to one. For $\rho_0\in\mathcal P_2(\Xspace)$, define
  $\rho_k:=\CMstepA{t_k}(\rho_{k-1})$ for $k=1,\ldots,n$. Then
  \begin{equation}\label{eq:fixed-point-bound}
    W_2(\rho_n,\pi)\le B_{\rm fp}
    :=\widetilde Q_{1,n}\big(W_2(\rho_0,\pi)+F_0\big)+F_n
      +\sum_{k=1}^n\frac{\widetilde Q_{k,n}}{1-\widetilde q_{t_k}}H_k .
  \end{equation}
  The recursive bound
  $B_{\rm rec}:=\widetilde Q_{1,n}W_2(\rho_0,\pi)
     +\sum_{k=1}^n\widetilde Q_{k+1,n}E^\pi_{t_k}$
  satisfies the identity
  \begin{equation}\label{eq:fixed-point-excess}
    B_{\rm fp}-B_{\rm rec}
    =\widetilde Q_{1,n}\Big(F_0+F_1+\frac{H_1}{1-\widetilde q_{t_1}}\Big)
     +\sum_{k=2}^n\widetilde Q_{k,n}
       \Big(F_k-F_{k-1}+\frac{H_k}{1-\widetilde q_{t_k}}\Big).
  \end{equation}
  In particular, if
  \[
    \widetilde q_{t_k}\ge\widetilde q_{t_{k-1}}
    \quad\text{and}\quad
    E^\pi_{t_{k-1}}\le E^\pi_{t_k}+H_k,
    \qquad k=2,\ldots,n,
  \]
  then $B_{\rm fp}\ge B_{\rm rec}+\widetilde Q_{1,n}(F_0+F_1)$.
  The error condition holds when
  $E^\pi_{t_k}=W_2(\pi,\CMstepA{t_k}(\pi))$ at every visited level;
  for arbitrary upper bounds it must be checked separately.
  At a repeated level $t_0=\cdots=t_n=t$, choosing $H_k=0$ gives
  $B_{\rm fp}-B_{\rm rec}=2\widetilde q_t^{\,n}E^\pi_t/(1-\widetilde q_t)$.
  When $n=1$, the sum over $k\ge2$ and the comparison conditions are empty.
\end{lemma}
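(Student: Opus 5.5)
The plan is a fixed-point tracking argument in the complete metric space $(\mathcal P_2(\Xspace),W_2)$, followed by a purely algebraic comparison with the recursive bound.

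\emph{Step 1: contraction and fixed points.} For each $k$, the proof of \cref{prop:learned-recursion} applied to two arbitrary inputs gives
$W_2(\CMstepA{t_k}(\rho_1),\CMstepA{t_k}(\rho_2))\le\widetilde L_{t_k}R_{t_k}W_2(\rho_1,\rho_2)$. Here \cref{ass:approx-denoise} is combined with \cref{ass:noising}. So $\CMstepA{t_k}$ is a $\widetilde q_{t_k}$-contraction with $\widetilde q_{t_k}<1$, and it maps $\mathcal P_2(\Xspace)$ into itself. Since $(\mathcal P_2(\Xspace),W_2)$ is complete, Banach's theorem gives a unique fixed point $\rho^*_k$ for each $k=0,\dots,n$.

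Applying \eqref{eq:learned-one-step} at $\rho=\rho^*_k$ gives $W_2(\rho^*_k,\pi)\le\widetilde q_{t_k}W_2(\rho^*_k,\pi)+E^\pi_{t_k}$. Hence $W_2(\rho^*_k,\pi)\le F_k$, using that the left side is finite.

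The fixed points move by a controlled amount between levels. Writing $\rho^*_{k-1}=\CMstepA{t_{k-1}}(\rho^*_{k-1})$,
\[
W_2(\rho^*_k,\rho^*_{k-1})\le W_2\big(\CMstepA{t_k}(\rho^*_k),\CMstepA{t_k}(\rho^*_{k-1})\big)+W_2\big(\CMstepA{t_k}(\rho^*_{k-1}),\CMstepA{t_{k-1}}(\rho^*_{k-1})\big)\le\widetilde q_{t_k}W_2(\rho^*_k,\rho^*_{k-1})+H_k,
\]
so $W_2(\rho^*_k,\rho^*_{k-1})\le H_k/(1-\widetilde q_{t_k})$.

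\emph{Step 2: tracking.} Set $e_k:=W_2(\rho_k,\rho^*_k)$. By contraction and the triangle inequality,
\[
e_k\le\widetilde q_{t_k}W_2(\rho_{k-1},\rho^*_k)\le\widetilde q_{t_k}\big(e_{k-1}+H_k/(1-\widetilde q_{t_k})\big).
\]
The starting value satisfies $e_0\le W_2(\rho_0,\pi)+F_0$. Unrolling the recursion gives
\[
e_n\le\widetilde Q_{1,n}e_0+\sum_{k=1}^n\widetilde Q_{k,n}\frac{H_k}{1-\widetilde q_{t_k}}.
\]
Then $W_2(\rho_n,\pi)\le e_n+F_n$, which is \eqref{eq:fixed-point-bound}. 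The auxiliary level $t_0$ serves only to place the initial reference point $\rho^*_0$.

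\emph{Step 3: the identity and comparisons.} Substitute $E^\pi_{t_k}=(1-\widetilde q_{t_k})F_k$ in $B_{\rm rec}$. This gives $\widetilde Q_{k+1,n}E^\pi_{t_k}=\widetilde Q_{k+1,n}F_k-\widetilde Q_{k,n}F_k$. Subtract this from $B_{\rm fp}$ and reindex the sum $\sum_{k=1}^{n-1}\widetilde Q_{k+1,n}F_k$ as $\sum_{k=2}^n\widetilde Q_{k,n}F_{k-1}$; using $\widetilde Q_{n+1,n}=1$, this yields \eqref{eq:fixed-point-excess}.

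For the inequality it suffices to show that each bracket with $k\ge2$ is nonnegative. Since $\widetilde q_{t_{k-1}}\le\widetilde q_{t_k}$,
\[
F_{k-1}\le\frac{E^\pi_{t_{k-1}}}{1-\widetilde q_{t_k}}\le\frac{E^\pi_{t_k}+H_k}{1-\widetilde q_{t_k}}=F_k+\frac{H_k}{1-\widetilde q_{t_k}}.
\]
When $E^\pi$ is exact, the error condition follows from the triangle inequality:
\[
W_2(\pi,\CMstepA{t_{k-1}}(\pi))\le W_2(\pi,\CMstepA{t_k}(\pi))+W_2(\CMstepA{t_k}(\pi),\CMstepA{t_{k-1}}(\pi))\le E^\pi_{t_k}+H_k.
\]
At a repeated level with $H_k=0$, all brackets with $k\ge2$ vanish, leaving $2\widetilde q_t^{\,n}F$. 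For $n=1$ the sum and the comparison conditions are empty.

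The main obstacle is Step 1, namely justifying existence of the fixed points. This needs completeness of $(\mathcal P_2,W_2)$ and a genuine contraction of the learned step. That is exactly why the pure multiplicative form of \cref{ass:approx-denoise} is required, with no additive defect $\beta_t$. Everything after that is bookkeeping.
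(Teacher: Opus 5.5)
Your proposal is correct and follows the paper's proof essentially step for step. The steps match: contraction of each learned step from the noising and learned-denoising stability assumptions, Banach fixed points with $W_2(\pi,\nu_k)\le F_k$, the drift bound $W_2(\nu_{k-1},\nu_k)\le H_k/(1-\widetilde q_{t_k})$, the unrolled tracking recursion, and summation by parts for the identity and the comparison. The only point you leave implicit is that $H_k=0$ is admissible at a repeated level because consecutive sampling operators then coincide, which the paper states explicitly.
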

\begin{proof}
  \replaced{By \cref{ass:noising,ass:approx-denoise}, each}{Since $\beta_{t_k}=0$ for $k=0,\ldots,n$,
  \cref{ass:noising,ass:approx-denoise} imply that each} $\CMstepA{t_k}$ is a
  contraction of the complete metric space
  $(\mathcal P_2(\Xspace),W_2)$. Let $\nu_k$ be its unique fixed point.
  Applying \eqref{eq:learned-one-step} to $\nu_k$ and rearranging gives
  \begin{equation}\label{eq:fixed-point-distance}
    W_2(\pi,\nu_k)\le F_k,\qquad k=0,\ldots,n.
  \end{equation}
  Inserting $\CMstepA{t_k}(\nu_{k-1})$ between the fixed points yields
  \[
    W_2(\nu_{k-1},\nu_k)
    \le H_k+\widetilde q_{t_k}W_2(\nu_{k-1},\nu_k),
  \]
  hence $W_2(\nu_{k-1},\nu_k)\le H_k/(1-\widetilde q_{t_k})$.
  Contraction at level $t_k$ and the triangle inequality therefore give
  \[
    W_2(\rho_k,\nu_k)
    \le\widetilde q_{t_k}W_2(\rho_{k-1},\nu_{k-1})
       +\frac{\widetilde q_{t_k}}{1-\widetilde q_{t_k}}H_k.
  \]
  \replaced{Unrolling it with}{Unroll this inequality and use}
  $W_2(\rho_0,\nu_0)\le W_2(\rho_0,\pi)+F_0$ and
  $W_2(\rho_n,\pi)\le W_2(\rho_n,\nu_n)+F_n$ \replaced{gives}{to obtain}
  \eqref{eq:fixed-point-bound}.

  Since $E^\pi_{t_k}=(1-\widetilde q_{t_k})F_k$, summation by parts gives
  \[
    \sum_{k=1}^n\widetilde Q_{k+1,n}E^\pi_{t_k}
    =F_n-\widetilde Q_{1,n}F_1
      -\sum_{k=2}^n\widetilde Q_{k,n}(F_k-F_{k-1}).
  \]
  Subtracting $B_{\rm rec}$ from $B_{\rm fp}$ proves
  \eqref{eq:fixed-point-excess}. Under the two comparison conditions,
  for each $k=2,\ldots,n$,
  \[
    F_k-F_{k-1}+\frac{H_k}{1-\widetilde q_{t_k}}
    =\frac{E^\pi_{t_k}+H_k}{1-\widetilde q_{t_k}}
      -\frac{E^\pi_{t_{k-1}}}{1-\widetilde q_{t_{k-1}}}
    \ge E^\pi_{t_{k-1}}
       \Big(\frac1{1-\widetilde q_{t_k}}
            -\frac1{1-\widetilde q_{t_{k-1}}}\Big)\ge0.
  \]
  All weights and $H_1$ are nonnegative, proving the comparison.
  For the exact defects $e_k:=W_2(\pi,\CMstepA{t_k}(\pi))$,
  the reverse triangle inequality gives
  $|e_k-e_{k-1}|
    \le W_2(\CMstepA{t_k}(\pi),\CMstepA{t_{k-1}}(\pi))\le H_k$,
  which verifies the error condition when $E^\pi_{t_k}=e_k$.
  At a repeated level, $H_k=0$ is admissible and $F_k$ is constant;
  \eqref{eq:fixed-point-excess} then reduces to the stated formula.
\end{proof}

A uniform H\"older bound on the sampling operators supplies
$H_k=H|t_k-t_{k-1}|^h$, but only the consecutive-level bounds above are
needed. A decreasing schedule with $\widetilde q_t$ nonincreasing in $t$
satisfies the monotonicity condition; the comparison still requires
compatible error bounds.

\begin{corollary}[Combining the two routes]\label{cor:mixed}
  Let \cref{ass:oneshot,ass:noising,ass:denoise,ass:oneshot-approximation,ass:approx-oneshot,ass:approx-denoise}
  hold. For any schedule $t_1,\ldots,t_n$ and any
  $\rho_0\in\mathcal P_2(\Xspace)$, the certificates $\bar d_0:=W_2(\rho_0,\pi)$
  and
  \[
    \bar d_k:=\min\big\{q_{t_k}\bar d_{k-1}+E_{t_k},\;
                   \widetilde q_{t_k}\bar d_{k-1}+\replaced{E^\pi_{t_k}}{\bar E_{t_k}}\big\},
    \qquad k=1,\ldots,n,
  \]
  satisfy $W_2(\rho_k,\pi)\le\bar d_k$.
\end{corollary}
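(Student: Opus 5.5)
The plan is induction on $k$, using the two one-step estimates already established and combining them by monotonicity. I would carry out three steps.

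\emph{First estimate.} From the proof of \cref{thm:main}, \cref{ass:oneshot,ass:noising,ass:denoise,ass:oneshot-approximation} give, for every $\rho\in\mathcal P_2(\Xspace)$ and $t>0$,
\[
W_2(\CMstepA{t}(\rho),\pi)\le q_t\,W_2(\rho,\pi)+E_t .
\]

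\emph{Second estimate.} From \cref{prop:learned-recursion}, \cref{ass:noising,ass:approx-oneshot,ass:approx-denoise} give, for every $\rho\in\mathcal P_2(\Xspace)$ and $t>0$,
\[
W_2(\CMstepA{t}(\rho),\pi)\le \widetilde q_t\,W_2(\rho,\pi)+E^\pi_t .
\]
Both estimates hold for every input law in $\mathcal P_2(\Xspace)$. They therefore apply at each iterate $\rho_{k-1}$, which stays in $\mathcal P_2(\Xspace)$ because $\CMstepA{t}$ preserves it.

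\emph{Induction.} The base case $k=0$ holds with equality by definition of $\bar d_0$. Suppose $W_2(\rho_{k-1},\pi)\le\bar d_{k-1}$. Apply both estimates with $\rho=\rho_{k-1}$ and $t=t_k$. The coefficients $q_{t_k}=R_{t_k}L_{t_k}$ and $\widetilde q_{t_k}=R_{t_k}\widetilde L_{t_k}$ are nonnegative, so each right-hand side is nondecreasing in $W_2(\rho_{k-1},\pi)$, and we may replace that distance by $\bar d_{k-1}$. This gives
\[
W_2(\rho_k,\pi)\le q_{t_k}\bar d_{k-1}+E_{t_k}
\quad\text{and}\quad
W_2(\rho_k,\pi)\le \widetilde q_{t_k}\bar d_{k-1}+E^\pi_{t_k}.
\]
Since $W_2(\rho_k,\pi)$ lies below both, it lies below their minimum, which is $\bar d_k$. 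This closes the induction.

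No step is a genuine obstacle. The one point needing care is the monotonicity used in the induction: the previous-step \emph{bound} $\bar d_{k-1}$ may come from either route, not only from the route being applied at step $k$. This is why each one-step estimate must hold uniformly in the input law rather than only along its own recursion, and the two estimates above do hold for arbitrary $\rho$. If some profile takes the value $+\infty$, the minimum still makes sense with the usual conventions, but the profiles are assumed finite, so this does not arise.
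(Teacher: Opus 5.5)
Your proposal is correct and matches the paper's proof: the paper also takes the two one-step estimates from the proofs of \cref{thm:main} and \cref{prop:learned-recursion}, observes that both hold for every input law and are nondecreasing in the previous certificate, and inducts on $k$. Your remark that $\bar d_{k-1}$ may come from either route, so each estimate must hold uniformly in $\rho$, spells out what the paper's one-line proof leaves implicit.
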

\begin{proof}
  Both one-step estimates hold at every step (proofs of \cref{thm:main}
  and \cref{prop:learned-recursion}) and are nondecreasing in the previous
  certificate; induct on $k$.
\end{proof}

\begin{remark}[Fixed points of the learned step]\label{rem:fixed-point}
  Under \cref{ass:noising,ass:approx-oneshot,ass:approx-denoise}, if
  \deleted{$\beta_t=0$ and }$\widetilde q_t<1$, $\CMstepA{t}$ is a contraction of
  $(\mathcal P_2(\Xspace),W_2)$ with a unique fixed point
  $\widetilde\pi^{(t)}$, and \eqref{eq:learned-one-step} at
  $\rho=\widetilde\pi^{(t)}$ gives
  $W_2(\pi,\widetilde\pi^{(t)})\le E^\pi_t/(1-\widetilde q_t)$: the plateau
  of the repeated-level bound is the certified distance from $\pi$ to the
  law the repeated sampler converges to, and the Gaussian example of \replaced{\cref{sec:gauss-example}}{\cref{sec:relaxed-noise-CM}}
  attains it. The fixed-point argument of \cref{lem:fixed-point-bound}
  additionally requires bounds on the change between consecutive sampling operators,
  uniformly over input laws. Its bound is no smaller than the unrolled
  recursion under the lemma's monotonicity and error-compatibility
  conditions; there is no such ordering for arbitrary error bounds.
\end{remark}

\begin{proof}[Proof of \cref{thm:lipschitz-noiser}]
Let $(X,Y)$ be an optimal coupling of $(\rho,\nu)$ and
$Z\sim\mathcal N(0,\eye)$ be independent of it. Then $\alpha_tX+\sigma_tZ$
and $\alpha_tY+\sigma_tZ$ couple $\OPnoise{t}(\rho)$ and
$\OPnoise{t}(\nu)$, so
$W_2(\OPnoise{t}(\rho),\OPnoise{t}(\nu))
 \le\alpha_t\big(\E\|X-Y\|^2\big)^{\nicefrac{1}{2}}=\alpha_tW_2(\rho,\nu)$,
with equality for $\rho=\delta_x$ and $\nu=\delta_y$ with $x\ne y$, whose images are
Gaussian with covariance $\sigma_t^2\eye$ and means $\alpha_tx$, $\alpha_ty$.
\end{proof}

\paragraph{Proof sketch for \cref{theo:lip-error-ft}.}
Dividing trajectories by $\alpha_u$ removes the linear drift. For exact
and approximate trajectories $x_u,y_u$, the rescaled distance
$h_u:=\norm{x_u-y_u}/\alpha_u$ satisfies
$\dot h_u\ge\gamma_u\dot\gamma_u(\mu_{\gamma_u}h_u-e_{\gamma_u})$
almost everywhere at positive times. Thus curvature controls backward
stability, while score errors act as a forcing term. A backward
integrating factor and the substitution $r=\gamma_u$ yield
\eqref{eq:general_Lt_Et}. Comparing two exact trajectories removes the
forcing and gives $L_t$; using the same terminal state for exact and
approximate trajectories gives $E_t$. These estimates imply linear
growth, preserving $\mathcal P_2(\Xspace)$, and transfer to $W_2$ by
transporting input couplings. The endpoint limit is justified by the
integrability assumptions in \cref{lem:lip-error-ft}.

\begin{proof}[Proof of \cref{theo:lip-error-ft}]
  Apply \cref{lem:lip-error-ft} at $s=0$, first to two exact trajectories
  ($e_r=0$), then to an exact and an approximate trajectory with the same
  terminal state. Since $\alpha_0=1$ and $\gamma_0=0$, this gives
  \[
    \begin{aligned}
      \norm{\Phi_{t\to0}(x)-\Phi_{t\to0}(y)}&\le L_t\norm{x-y},\\
      \norm{\replaced{\PhiA{t\to0}(x)}{\widetilde\Phi_{t\to0}(x)}-\Phi_{t\to0}(x)}&\le E_t.
    \end{aligned}
  \]
  Both constants are finite by the lemma. These inequalities imply linear
  growth of both Borel maps, since
  $\norm{\replaced{\PhiA{t\to0}(x)}{\widetilde\Phi_{t\to0}(x)}}\le
  \norm{\Phi_{t\to0}(0)}+L_t\norm{x}+E_t$; their pushforwards therefore
  preserve $\mathcal P_2(\Xspace)$.
  Pushing an optimal coupling of $\rho_1,\rho_2$ through the exact map gives
  $W_2(\DEnoise{t}(\rho_1),\DEnoise{t}(\rho_2))\le L_tW_2(\rho_1,\rho_2)$.
  Finally, coupling the exact and approximate outputs through the same
  $X_t\sim\OPnoise{t}(\rho)$ gives
  $W_2(\CMstep{t}(\rho),\CMstepA{t}(\rho))\le E_t$, uniformly over
  $\rho\in\mathcal P_2(\Xspace)$.
\end{proof}

\begin{proof}[Proof of \cref{coro:strong-log-concave}]
By \cref{lem:gauss-smoothing}, \(f_{r}\) is \(C^\infty\) and \(\mu_{r}\)-strongly convex with
  \(\mu_{r} = m/(1 + m r^{2})\); substituting \(\mu_r\) and \(e_r \leq M_0 + M_1/r\) into
\eqref{eq:general_Lt_Et} and integrating gives the result.
\end{proof}

\begin{proof}[Proof of \cref{thm:weak-curvature}]
\Cref{lem:curvature_weakly_convex} gives
$\nabla^2 f_r\succeq\mu_r\eye$ with
$\mu_r=m/(1+mr^2)-\ell/(1+mr^2)^2$. This profile is bounded and
$re_r\le M_0r+M_1$, so the integrability requirements of
\cref{theo:lip-error-ft} hold. Direct integration gives
$\exp\p*{-\int_0^r u\mu_u\D u}
  =A_{m,\ell}(r)/\sqrt{1+mr^2}$.
Substitution into \eqref{eq:general_Lt_Et} yields $L_t$. Since
$A_{m,\ell}$ is nondecreasing, the error integral there is at most
$A_{m,\ell}(\gamma_t)B_m(\gamma_t)$.
\Cref{thm:lipschitz-noiser} gives $R_t$; rearranging $R_tL_t<1$ gives
\eqref{eq:weak-curvature-contraction}.
\end{proof}

\begin{theorem}[Error of a trained consistency model, full statement]\label{thm:trained-cm-full}
  \added{Let $\PhiS{t\to s}$ and $\Phi_{t\to s}$, $0\lec s\lec t$, be the
  backward flow maps from $t$ to $s$ of the approximate-score ODE
  \eqref{eq:grad-flow-approx}, with $s_\theta$ replaced by the score
  approximation used by the training solver, and of the probability-flow ODE
  \eqref{eq:grad-flow}.} Assume the hypotheses of \cref{theo:lip-error-ft}\added{ for this score
  approximation, with $e_r$ in \eqref{eq:approx-assumption} denoting its
  error,} and that the exact
  flow transports the noised marginals, $(\Phi_{t\to s})_\#\varpi_t=\varpi_s$
  for $0\lec s\lec t$. Fix a grid $0=\tau_0<\cdots<\tau_N=T$ and let
  $E_{s,t}$ be defined by \eqref{eq:cell-error}\added{ with this $e_r$}. Let \replaced{the learned maps $\PhiA{\tau_i\to0}:\Xspace\to\Xspace$, $0\lec i\lec N$,}{$f_\theta:\Xspace\times\{\tau_0,\ldots,\tau_N\}\to\Xspace$}
  be Borel, with \replaced{$\PhiA{0\to0}$}{$f_\theta(\cdot,0)$} the identity\replaced{, so that
  $\widetilde L_{\tau_0}=1$, and $\PhiA{\tau_i\to0}$ Lipschitz with
  finite constant $\widetilde L_{\tau_i}$ for $1\lec i\lec N$. For $0\lec i<N$,
  define}{ and $f_\theta(\cdot,\tau_i)$
  Lipschitz with finite constant $\ell_i$ for $1\lec i\lec N$.
  Set $\ell_0:=1$ and, for $0\lec i<N$, define}
  \begin{align*}
    \varepsilon_i&:=\Big(\E_{X\sim\varpi_{\tau_{i+1}}}
      \big\|\replaced{\PhiA{\tau_{i+1}\to0}(X)}{f_\theta(X,\tau_{i+1})}
      -\replaced{\PhiA{\tau_i\to0}(\PhiS{\tau_{i+1}\to\tau_i}(X))}{f_\theta(\widetilde\Phi_{\tau_{i+1}\to\tau_i}(X),\tau_i)}\big\|^2\Big)^{\nicefrac{1}{2}},\\
    \varepsilon_i^\infty&:=\sup_{x\in\Xspace}
      \big\|\replaced{\PhiA{\tau_{i+1}\to0}(x)}{f_\theta(x,\tau_{i+1})}
      -\replaced{\PhiA{\tau_i\to0}(\PhiS{\tau_{i+1}\to\tau_i}(x))}{f_\theta(\widetilde\Phi_{\tau_{i+1}\to\tau_i}(x),\tau_i)}\big\|.
  \end{align*}
  \replaced{The operators $\DEnoiseA{\tau_k}$ and $\CMstepA{\tau_k}$ of
  \cref{sec:gen-noise-CM}}{Let $D^\theta_{\tau_k}(\rho):=(f_\theta(\cdot,\tau_k))_\#\rho$ and
  $\CMstep{\tau_k}^\theta:=D^\theta_{\tau_k}\circ\OPnoise{\tau_k}$.
  These operators} preserve $\mathcal P_2(\Xspace)$, all $\varepsilon_i$
  are finite, and, for $k=1,\ldots,N$,
  \begin{equation}
    W_2\big(\pi,\replaced{\CMstepA{\tau_k}}{\CMstep{\tau_k}^\theta}(\pi)\big)
    \le\sum_{i=0}^{k-1}\varepsilon_i
      +\sum_{i=0}^{k-1}\replaced{\widetilde L_{\tau_i}}{\ell_i}\,E_{\tau_i,\tau_{i+1}},
    \label{eq:trained-cm-target-full}
  \end{equation}
  If additionally $\varepsilon_i^\infty<\infty$ for $0\lec i<k$, then
  \begin{equation}
    \sup_{\rho\in\mathcal P_2(\Xspace)}
    W_2\big(\CMstep{\tau_k}(\rho),\replaced{\CMstepA{\tau_k}}{\CMstep{\tau_k}^\theta}(\rho)\big)
    \le\sum_{i=0}^{k-1}\varepsilon_i^\infty
      +\sum_{i=0}^{k-1}\replaced{\widetilde L_{\tau_i}}{\ell_i}\,E_{\tau_i,\tau_{i+1}}.
    \label{eq:trained-cm-uniform}
  \end{equation}
  Thus, \replaced{for schedules in the grid}{with $\DEnoiseA{\tau_k}:=D^\theta_{\tau_k}$ and schedules taken
  in the grid}, \cref{ass:approx-oneshot} holds with the right-hand side of
  \eqref{eq:trained-cm-target-full}, and \cref{ass:approx-denoise} holds with
  \replaced{the constants $\widetilde L_{\tau_k}$}{$\widetilde L_{\tau_k}=\ell_k$ and $\beta_{\tau_k}=0$}.
  If $\varepsilon_i^\infty<\infty$ for every $0\lec i<N$, then
  \cref{ass:oneshot-approximation} also holds\replaced{,}{ on the grid} with the
  right-hand side of \eqref{eq:trained-cm-uniform}.
\end{theorem}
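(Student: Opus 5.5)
The plan is a telescoping argument along the grid, coupling through the exact flow. Fix $k$ and let $X\sim\varpi_{\tau_k}$, so $\CMstepA{\tau_k}(\pi)=\Law(\PhiA{\tau_k\to0}(X))$. Set $X_i:=\Phi_{\tau_k\to\tau_i}(X)$ for $0\le i\le k$. By the transport hypothesis $(\Phi_{t\to s})_\#\varpi_t=\varpi_s$, we have $X_i\sim\varpi_{\tau_i}$. Since $\Phi_{\tau_k\to0}=\Phi_{\tau_i\to0}\circ\Phi_{\tau_k\to\tau_i}$ (flow property of the backward Borel flow), $X_0\sim\pi$. The pair $(X_0,\PhiA{\tau_k\to0}(X_k))$ is therefore a coupling of $\pi$ and $\CMstepA{\tau_k}(\pi)$, and
\[
W_2\big(\pi,\CMstepA{\tau_k}(\pi)\big)\le\Big(\E\big\|\PhiA{\tau_k\to0}(X_k)-\PhiA{\tau_0\to0}(X_0)\big\|^2\Big)^{1/2},
\]
using that $\PhiA{0\to0}$ is the identity. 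Telescoping in $i$ and applying Minkowski's inequality in $L^2$ bounds the right-hand side by $\sum_{i=0}^{k-1}\|\PhiA{\tau_{i+1}\to0}(X_{i+1})-\PhiA{\tau_i\to0}(X_i)\|_{L^2}$.

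Each increment is split by inserting $\PhiA{\tau_i\to0}(\PhiS{\tau_{i+1}\to\tau_i}(X_{i+1}))$. The first piece has $L^2$ norm exactly $\varepsilon_i$, since $X_{i+1}\sim\varpi_{\tau_{i+1}}$. The second piece is $\PhiA{\tau_i\to0}(\PhiS{\tau_{i+1}\to\tau_i}(X_{i+1}))-\PhiA{\tau_i\to0}(\Phi_{\tau_{i+1}\to\tau_i}(X_{i+1}))$. It is bounded pointwise by $\widetilde L_{\tau_i}\|\PhiS{\tau_{i+1}\to\tau_i}(x)-\Phi_{\tau_{i+1}\to\tau_i}(x)\|$. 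For $i=0$ this uses that the identity is $1$-Lipschitz. By \cref{lem:lip-error-ft} with $s=\tau_i$, $t=\tau_{i+1}$ and the same terminal state, the error is at most $E_{\tau_i,\tau_{i+1}}$. Summing gives \eqref{eq:trained-cm-target-full}. The uniform bound \eqref{eq:trained-cm-uniform} follows identically for arbitrary $\rho$: take $X\sim\OPnoise{\tau_k}(\rho)$, couple $\Phi_{\tau_k\to0}(X)$ with $\PhiA{\tau_k\to0}(X)$, and use sup norms pointwise in place of $L^2$ norms. The marginal identification is not needed there, because $\varepsilon_i^\infty$ is a supremum over all $x$.

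The auxiliary claims are handled as follows. Lipschitz Borel maps have linear growth, and $\OPnoise{t}$ preserves second moments, so $\DEnoiseA{\tau_k}$ and $\CMstepA{\tau_k}$ preserve $\mathcal P_2$. Finiteness of $\varepsilon_i$ follows from the same growth bounds. $\PhiA{}$ is Lipschitz, and $\PhiS{\tau_{i+1}\to\tau_i}$ has linear growth because, by \cref{lem:lip-error-ft}, it lies within $E_{\tau_i,\tau_{i+1}}$ of the Lipschitz map $\Phi_{\tau_{i+1}\to\tau_i}$. Combined with $\varpi_{\tau_{i+1}}\in\mathcal P_2$, this gives finiteness. \Cref{ass:approx-denoise} with $\widetilde L_{\tau_k}$ follows by pushing an optimal coupling through the Lipschitz map. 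The concluding assumption statements are then immediate rewordings of the two bounds.

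The main obstacle I anticipate is the first step. It needs $X_i\sim\varpi_{\tau_i}$ and $\Phi_{\tau_k\to0}=\Phi_{\tau_i\to0}\circ\Phi_{\tau_k\to\tau_i}$, i.e. the semigroup property of the backward flow maps, including at the endpoint $0$ where they are defined by continuous extension. I would justify this from uniqueness of trajectories through each terminal state (implicit in the flow-map hypothesis) together with the continuity of trajectories on $[0,t]$. If uniqueness were unavailable, I would instead define the maps consistently via a fixed selection of trajectories, for which the composition holds by construction.
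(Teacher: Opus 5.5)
Your proposal is correct and is essentially the paper's proof. You telescope $\PhiA{\tau_k\to0}(x)-\Phi_{\tau_k\to0}(x)$ along the exact trajectory through the grid. You split each cell into the consistency residual plus $\widetilde L_{\tau_i}$ times the one-cell flow discrepancy bounded by \cref{lem:lip-error-ft}. You then use Minkowski in $L^2(\varpi_{\tau_k})$ for \eqref{eq:trained-cm-target-full} and sup norms for \eqref{eq:trained-cm-uniform}.

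The obstacle you anticipate does not arise. First, $X_0=\Phi_{\tau_k\to0}(X)\sim\pi$ is the transport hypothesis with $s=0$. Second, the paper applies \cref{lem:lip-error-ft} directly to the restriction of the single exact trajectory $u\mapsto\Phi_{\tau_k\to u}(x)$ to $[\tau_i,\tau_{i+1}]$, against the approximate-score trajectory with the same terminal state $x_{\tau_{i+1}}$, so no semigroup identity is needed. In any case, the lemma with $e\equiv0$ gives backward uniqueness of exact trajectories, so no selection argument is required.
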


\paragraph{Proof sketch for \cref{thm:trained-cm-full}.}
The proof telescopes
$\replaced{\PhiA{\tau_k\to0}(x)}{f_\theta(x,\tau_k)}-\Phi_{\tau_k\to0}(x)$ along the exact trajectory
through the grid: each cell contributes its consistency error and the
discrepancy between the two flows over the cell, carried to time zero by
$\replaced{\PhiA{\tau_i\to0}}{f_\theta(\cdot,\tau_i)}$.

\begin{proof}[Proof of \cref{thm:trained-cm-full}]
  By \cref{lem:lip-error-ft}, each exact cell flow is Lipschitz and the
  \replaced{approximate-score}{learned-score} cell flow is at a uniformly bounded distance from it.
  Both have at most linear growth. Since the network is Lipschitz and
  $\varpi_{\tau_i}\in\mathcal P_2(\Xspace)$, all $\varepsilon_i$ are finite;
  the network pushforwards also preserve $\mathcal P_2(\Xspace)$.
  Fix $k$ and $x\in\Xspace$, and set $x_{\tau_i}:=\Phi_{\tau_k\to\tau_i}(x)$
  for $0\lec i\lec k$, so that $x_{\tau_k}=x$ and
  $x_{\tau_0}=\Phi_{\tau_k\to0}(x)$. Since $\replaced{\PhiA{0\to0}}{f_\theta(\cdot,0)}$ is the
  identity,
  \[
    \replaced{\PhiA{\tau_k\to0}(x)}{f_\theta(x,\tau_k)}-\Phi_{\tau_k\to0}(x)
    =\sum_{i=0}^{k-1}\big[\replaced{\PhiA{\tau_{i+1}\to0}(x_{\tau_{i+1}})}{f_\theta(x_{\tau_{i+1}},\tau_{i+1})}
                          -\replaced{\PhiA{\tau_i\to0}(x_{\tau_i})}{f_\theta(x_{\tau_i},\tau_i)}\big].
  \]
  Write $\hat x_{\tau_i}:=\PhiS{\tau_{i+1}\to\tau_i}(x_{\tau_{i+1}})$
  and split each summand as
  $[\replaced{\PhiA{\tau_{i+1}\to0}(x_{\tau_{i+1}})}{f_\theta(x_{\tau_{i+1}},\tau_{i+1})}-\replaced{\PhiA{\tau_i\to0}(\hat x_{\tau_i})}{f_\theta(\hat x_{\tau_i},\tau_i)}]
   +[\replaced{\PhiA{\tau_i\to0}(\hat x_{\tau_i})}{f_\theta(\hat x_{\tau_i},\tau_i)}-\replaced{\PhiA{\tau_i\to0}(x_{\tau_i})}{f_\theta(x_{\tau_i},\tau_i)}]$.
  The restriction of the exact trajectory $u\mapsto\Phi_{\tau_k\to u}(x)$
  to $[\tau_i,\tau_{i+1}]$ and the trajectory
  $u\mapsto\PhiS{\tau_{i+1}\to u}(x_{\tau_{i+1}})$ solve
  \eqref{eq:grad-flow} and \eqref{eq:grad-flow-approx} with the same
  terminal state $x_{\tau_{i+1}}$, so \cref{lem:lip-error-ft}, in its $s=0$
  form for $i=0$, gives
  $\|\hat x_{\tau_i}-x_{\tau_i}\|\le E_{\tau_i,\tau_{i+1}}$, and the
  Lipschitz property of $\replaced{\PhiA{\tau_i\to0}}{f_\theta(\cdot,\tau_i)}$, the identity for $i=0$,
  bounds the second bracket by $\replaced{\widetilde L_{\tau_i}}{\ell_i}E_{\tau_i,\tau_{i+1}}$.
  If $\varepsilon_i^\infty<\infty$ for all $i<k$, the first
  bracket is bounded by $\varepsilon_i^\infty$ pointwise, which proves
  $\sup_x\|\replaced{\PhiA{\tau_k\to0}(x)}{f_\theta(x,\tau_k)}-\Phi_{\tau_k\to0}(x)\|
   \le\sum_i\varepsilon_i^\infty+\sum_i\replaced{\widetilde L_{\tau_i}}{\ell_i}E_{\tau_i,\tau_{i+1}}$.
  For $\rho\in\mathcal P_2(\Xspace)$ and $Y\sim\OPnoise{\tau_k}(\rho)$, the
  pair $(\Phi_{\tau_k\to0}(Y),\replaced{\PhiA{\tau_k\to0}(Y)}{f_\theta(Y,\tau_k)})$ couples
  $\CMstep{\tau_k}(\rho)$ and $\replaced{\CMstepA{\tau_k}}{\CMstep{\tau_k}^\theta}(\rho)$, which gives
  \eqref{eq:trained-cm-uniform}.
  For \eqref{eq:trained-cm-target-full}, take $X\sim\varpi_{\tau_k}$, so that
  $x_{\tau_{i+1}}=\Phi_{\tau_k\to\tau_{i+1}}(X)\sim\varpi_{\tau_{i+1}}$ by
  the transport hypothesis and the first bracket of cell $i$ has
  $L^2$ norm $\varepsilon_i$. Minkowski's inequality in $L^2$, the pathwise
  bound on the second brackets, $(\Phi_{\tau_k\to0})_\#\varpi_{\tau_k}=\pi$
  and the coupling $(\Phi_{\tau_k\to0}(X),\replaced{\PhiA{\tau_k\to0}(X)}{f_\theta(X,\tau_k)})$ of $\pi$ and
  $\replaced{\CMstepA{\tau_k}}{\CMstep{\tau_k}^\theta}(\pi)$ give \eqref{eq:trained-cm-target-full}.
\end{proof}

\paragraph{Early stopping.}
For a grid $0<{\replaced{\delta}{\epsilon}}=\tau_0<\cdots<\tau_N=T$ with
\replaced{learned maps $\PhiA{\tau_i\to\delta}$ that return to time
$\delta$ and $\PhiA{\delta\to\delta}$}{$f_\theta(\cdot,{\epsilon})$} the identity, the target is $\varpi_{\replaced{\delta}{\epsilon}}$
and noising must start from time ${\replaced{\delta}{\epsilon}}$. Define, for $t\ge{\replaced{\delta}{\epsilon}}$,
\[
  N_t^{({\replaced{\delta}{\epsilon}})}(\rho):=
  \Law\left(\frac{\alpha_t}{\alpha_{\replaced{\delta}{\epsilon}}}X
       +\alpha_t\sqrt{\gamma_t^2-\gamma_{\replaced{\delta}{\epsilon}}^2}\,Z\right),
  \qquad X\sim\rho,\quad Z\sim\mathcal N(0,\eye),\quad X\perp Z.
\]
At each grid level, use
$\mathcal L_t^{({\replaced{\delta}{\epsilon}})}:=(\Phi_{t\to{\replaced{\delta}{\epsilon}}})_\#\circ N_t^{({\replaced{\delta}{\epsilon}})}$
and $\replaced{\widetilde{\mathcal L}}{\mathcal L}_t^{\theta,({\replaced{\delta}{\epsilon}})}:=\replaced{(\PhiA{t\to\delta})_\#}{D_t^\theta}\circ N_t^{({\replaced{\delta}{\epsilon}})}$.
Under the remaining hypotheses of \cref{thm:trained-cm-full},
\eqref{eq:trained-cm-target-full} and \eqref{eq:trained-cm-uniform} hold with
$(\pi,\CMstep{t},\replaced{\CMstepA{t}}{\CMstep{t}^\theta})$ replaced by
$(\varpi_{\replaced{\delta}{\epsilon}},\mathcal L_t^{({\replaced{\delta}{\epsilon}})},\replaced{\widetilde{\mathcal L}}{\mathcal L}_t^{\theta,({\replaced{\delta}{\epsilon}})})$,
using this grid to define the residuals, the same cell errors $E_{s,t}$
for $s\ge{\replaced{\delta}{\epsilon}}$, and $\replaced{\widetilde L_{\tau_0}}{\ell_0}=1$.
Indeed, $N_t^{({\replaced{\delta}{\epsilon}})}(\varpi_{\replaced{\delta}{\epsilon}})=\varpi_t$, and the proof\deleted{ above}
telescopes from $\tau_0={\replaced{\delta}{\epsilon}}$. The noising \deleted{stability }factor becomes
$R_t^{({\replaced{\delta}{\epsilon}})}=\alpha_t/\alpha_{\replaced{\delta}{\epsilon}}$.

\section{Comparison with other multistep bounds}\label{sec:app_compars}

\paragraph{The multistep bound of \cite{Lyu2024SamplingEasyCM}}
Their sampler is \eqref{eq:iterative-cm-sampler}
for the OU process restarted from $p_\delta$, with
$\alpha_t=e^{-(t-\delta)}$, $\sigma_t^2=1-e^{-2(t-\delta)}$ and a learned
map returning to \deleted{time }$\delta>0$. Thus $\pi=p_\delta$ and
$R_t=e^{-(t-\delta)}$. Their initial law $\mathcal N(0,\eye)$ is \replaced{OU-invariant}{invariant
under OU noising}, so their first step is
$\CMstepA{T}(\mathcal N(0,\eye))$.

Their Assumption~3.5 is an affine stability bound on the training grid:
for a universal constant $C_{\rm Lip}$, one may write
$\|f_\theta(x,t)-f_\theta(y,t)\|\le\Lambda\|x-y\|+\beta_t$, where
$\Lambda:=C_{\rm Lip}L_f$ and
$\beta_t:=C_{\rm Lip}t(L_f\varepsilon_{\rm sc}+\varepsilon_{\rm cm})$.
Coupling the inputs and applying Minkowski's inequality gives \replaced{the affine
stability of \cref{rem:affine-stability} with $\widetilde L_t=\Lambda$ and
this $\beta_t$}{\cref{ass:approx-denoise} with $\widetilde L_t=\Lambda$}.
Using the approximation bound stated in their Theorem~3.7, a coupling
through the exact denoiser gives \cref{ass:approx-oneshot} with
$E^\pi_t:=\mathrm{err}(t)$ at grid levels, where
\[
  \mathrm{err}(t):=C\big[t^2(\varepsilon_{\rm cm}+L_f\varepsilon_{\rm sc})
   +t(\varepsilon_{\rm cm}+L_f\varepsilon_{\rm sc}+L_fL_s^{\nicefrac{3}{2}}d^{\nicefrac{1}{2}}h)
   +t^{\nicefrac{1}{2}}L_fL_sd^{\nicefrac{1}{2}}h\big],
\]
where $C$ is universal, $\varepsilon_{\rm sc},\varepsilon_{\rm cm}$ are
their $L^2$ score and consistency errors, $h$ the training-grid step and
$L_s$ the Lipschitz constant of the noised scores.
\replaced{By \cref{rem:affine-stability}, \cref{prop:learned-recursion} then applies
with factor $\Lambda R_t$ and error budget $\bar E_t:=\mathrm{err}(t)+\beta_t$,}{Thus $\bar E_t=\mathrm{err}(t)+\beta_t$, and
\cref{prop:learned-recursion} applies directly with factor $\Lambda R_t$
and error budget $\bar E_t$,} recovering the recursion underlying their
Theorem~3.8 and Corollary~3.10.
For two levels $t_1\ge t_2>\delta$, this yields
\[
  W_2(\rho_2,p_\delta)\le
  \Lambda^2e^{-(t_1+t_2-2\delta)}W_2(\rho_0,p_\delta)
  +\Lambda e^{-(t_2-\delta)}\bar E_{t_1}+\bar E_{t_2}.
\]
The distance to $p_{\rm data}$ adds a term of order
$(\sqrt d\vee\mathfrak m)\sqrt\delta$, \replaced{with}{where} $\mathfrak m^2$ \deleted{is }its
second moment (their Eq.~(35)).

\Cref{thm:main} instead gives
$W_2(\rho_2,\pi)\le q_{t_2}q_{t_1}W_2(\rho_0,\pi)
+q_{t_2}E_{t_1}+E_{t_2}$, with factors certified from the exact denoiser.
For an $m$-strongly log-concave $p_\delta$, since
$\gamma_t^2=e^{2(t-\delta)}-1$, \cref{coro:strong-log-concave} gives
$q_t=(1+m(e^{2(t-\delta)}-1))^{-\nicefrac{1}{2}}<1$ for every $t>\delta$ and
$E_t\le B_m(\sqrt{e^{2(t-\delta)}-1})$.
The affine recursion has a coefficient below one when
$t>\delta+\log\Lambda$. These thresholds use different assumptions:
ours requires curvature control and a uniform approximation bound;
theirs requires the affine stability condition and target-marginal
error bounds. Moreover, $\bar E_t$ has polynomial growth in $t$, whereas the displayed bound
on $E_t$ grows like $e^t$ when $M_0>0$, so the comparison does not order
the error budgets. Their Theorem~C.1 \replaced{constructs}{provides a sufficient construction
of} the affine stability bound under bounded support and uniform training
errors; those extra hypotheses are not required by their main results.

\paragraph{The multistep bound of \cite{chen2025convergenceconsistencymodelmultistep}}
Use VE noising $X_r=X_0+rZ$, with $Z\sim\mathcal N(0,\eye)$ independent
of $X_0$, and
sampling levels $r_1>r_2>0$ on a uniform training grid of spacing
$\Delta r$. Write $c:=\epsilon_{\rm CM}/\Delta r$, where
$\epsilon_{\rm CM}$ is their $L^2$ consistency error, and
$D_0:=W_2(\delta_0,\pi)$. For target samples and learned outputs bounded
in norm by $R$, their Theorem~2 gives the two-step certificate
\begin{equation}\label{eq:chen-two-step-certificate}
  B_{\rm Chen}^{(2)}:=\min\left\{2R,\;
  2R\left(\frac{D_0^2}{4r_1^2}
       +\frac{c^2r_1^2}{4r_2^2}\right)^{\nicefrac{1}{4}}+cr_2\right\}.
\end{equation}
Here we strengthen their Eq.~(6) by retaining the actual second moment
$D_0^2$ in their proof, Eq.~(15), and by applying the support bound $2R$.
The initial noisy law is $\mathcal N(0,r_1^2\eye)$ in both analyses:
in our notation it is $\OPnoise{r_1}(\delta_0)$. Under
\cref{ass:oneshot,ass:noising,ass:denoise,ass:oneshot-approximation},
\cref{thm:main} instead gives
\begin{equation}\label{eq:ours-two-step-certificate}
  B_{\rm ours}^{(2)}:=q_{r_2}q_{r_1}D_0+q_{r_2}E_{r_1}+E_{r_2}.
\end{equation}
Their consistency error alone does not imply the uniform bounds $E_r$.
The following example verifies these additional bounds and shows a
quantitative gain from contraction.

Take $\pi=\mathrm{Unif}[-1,1]$. Let $F_r$ be the CDF of $U+rZ$ for
$U\sim\pi$ independent of $Z$, and let $\Phi_{\rm N}$ be the standard
normal CDF.
The one-dimensional probability flow preserves quantiles, so its
exact endpoint map is $g_r(x)=2F_r(x)-1$. Differentiating gives
$g'_r(x)=\Phi_{\rm N}((x+1)/r)-\Phi_{\rm N}((x-1)/r)$, maximized at
$x=0$. Thus, with $a:=\sqrt{2/\pi}$,
\[
  q_r=\operatorname{Lip}(g_r)=2\Phi_{\rm N}(1/r)-1
  \le a/r,\qquad D_0=1/\sqrt3.
\]
For $\kappa>0$, use the learned maps
$\widetilde g_r=(1-\kappa r)g_r$ at $0<r\lec T<1/\kappa$, with
$\widetilde g_0(x)=x$. Their positive-time outputs lie in
$[-1,1]$.\footnote{We use boundedness at positive sampling times,
as required by the proof of their Theorem~2. The printed output bound
also includes time zero, \deleted{which is }incompatible with the global identity
condition in their Assumption~1.}
For the exact probability-flow trajectory $(Y_r)$ started from $U$,
$g_r(Y_r)=U$. Each grid residual therefore has $L^2$ norm
$\kappa\Delta r/\sqrt3$, so
$c=\kappa/\sqrt3$. The pointwise approximation error is at most
$\kappa r$, while the learned map sends the noised target to
$\Law((1-\kappa r)U)$. Hence admissible
profiles for our two recursions are
\[
  E_r=\sqrt3\,cr,\qquad E_r^\pi=cr,\qquad \deleted{\beta_r=0,\qquad}
  \widetilde q_r=(1-\kappa r)q_r.
\]
This uses the abstract propositions; no full-support curvature
assumption is needed. At $\kappa=0.01$, $T=r_1=2$, $r_2=1$ and
$\Delta r=1$, \eqref{eq:ours-two-step-certificate} is $0.1746$, whereas
\eqref{eq:chen-two-step-certificate} is $0.7659$. The learned-map
certificate of \cref{prop:learned-recursion} is $0.1600$.

The gain also concerns the error rate. For an integer $j\ge3$, take
$c=j^{-4}$, a unit training grid up to $T=j^2$, and the two sampling
levels $r_1=j^2$, $r_2=j$. These choices satisfy $\kappa T<1$.
Using $q_r\le a/r$ in \eqref{eq:ours-two-step-certificate} gives
\[
  B_{\rm ours}^{(2)}\le Kc^{\nicefrac{3}{4}},\qquad
  K:=a^2/\sqrt3+\sqrt3(1+a)<3\sqrt2.
\]
In contrast, for any decreasing schedule with $N\ge2$ steps on this
same grid, the sum inside the fourth root in their Eq.~(6) contains
a term $c^2(r_{i-1}/r_i)^2/4\ge c^2/4$. Their certificate is therefore
at least $\sqrt2\,c^{\nicefrac{1}{2}}$, even with the second-moment refinement and
the support cap in \eqref{eq:chen-two-step-certificate}.
Since $K/j<\sqrt2$, our two-step certificate is strictly smaller than
every such multistep certificate. This example shows that additional
stability and uniform accuracy can yield a sharper guarantee.
These comparisons concern upper bounds, not an ordering of actual
sampling errors.

\section{Experimental estimators and implementation details}\label{sec:estimators}
This appendix details the diagnostics used in \cref{sec:experiments}.  They
are finite-sample estimators \replaced{of the quantities in}{for the case $\beta_t=0$ of}
\cref{prop:learned-recursion}\deleted{; they do not estimate an additive stability defect}. To distill SD 1.5, we run the \texttt{diffusers} consistency distillation script (\url{https://github.com/huggingface/diffusers/tree/main/examples/consistency_distillation}) for 1000 epochs on Conceptual~12M~\citep{changpinyo2021conceptual12mpushingwebscale}. We evaluate all quantities at 50 VE noise levels $\{\sigma_j\}_{j=1}^{50}$ from the schedule \citet{karras2022elucidating}.

\subsection{Latent coordinates and experimental setup}
For a clean latent $z\sim\pi$ and $\epsilon\sim\mathcal N(0,\eye)$, write
$r=\gamma_t=\sigma_t/\alpha_t$ and use the normalized noisy coordinate
\begin{equation}
    u=z+r\epsilon,
    \qquad
    F_t(u):=\replaced{\PhiA{t\to0}}{\widetilde\Phi_{t\to0}}(\alpha_tu).
    \label{eq:app-normalized-coordinate}
\end{equation}
Thus $F_t(u)$ is the clean-latent prediction of the learned denoiser.  In the
VE experiment $\alpha_t=1$ and $r=t=\sigma$; for TrigFlow
$\alpha_t=\cos t$, $\sigma_t=\sin t$, and $r=\tan t$.

For SD~1.5, AFHQ-cat images are resized to $512\times512$ and encoded with the
SD~1.5 VAE, producing scaled latents of size $4\times64\times64$ ($d=16384$
after flattening).  The current pool contains $2048$ target latents and uses
the fixed prompt \texttt{a high quality photo of a cat}, without
classifier-free guidance.  The VE grid contains 50 positive levels, from
approximately $0.13445$ to $14.61465$.  All reported distances are in
unsquared latent Euclidean units.

For the CelebA TrigFlow experiment, images are resized and center-cropped with
the transform used to train the local VAE, and the VAE posterior mean is used
directly as the latent representation.  We use at most $50{,}000$ CelebA
images and a fixed $90/10$ train/validation split, giving
$45{,}000$ calibration latents and $5{,}000$ held-out latents in the run used
for the plots.  The latent dimension is $d=32$.  The profiles shown in
\cref{fig:q-trigflow,fig:E-target-profile} are calibrated on the training
latent pool.

The CelebA LCM uses the trigonometric noising map
\begin{equation}
    x_t=\cos(t)x+\sigma_{\rm data}\sin(t)z,
    \qquad \sigma_{\rm data}=1,
    \label{eq:app-celeba-trig-noising}
\end{equation}
with $z\sim\mathcal N(0,I)$.  We evaluate $100$ TrigFlow times obtained by
first constructing a Karras grid with $\sigma\in[0.002,80]$ and $\rho=7$, and
then mapping it to TrigFlow time through
$t=\arctan(\sigma/\sigma_{\rm data})$.  Hence the grid spans approximately
$t\in[0.002,1.558]$.

\subsection{Pairwise stability estimator}
The population quantity entering \cref{prop:learned-recursion} is
$\widetilde q_t=R_t\widetilde L_t$, where $\widetilde L_t$ is a Lipschitz
constant of the learned denoising operator.  At each level, we draw noisy
particles $u_i=z_i+r\epsilon_i$ and compute $v_i=F_t(u_i)$.  For random index
pairs $(a,b)$ we form the secant ratios
\begin{equation}
    s_{ab}(t)=
    \frac{\|v_a-v_b\|_2}{\|u_a-u_b\|_2+10^{-12}}.
    \label{eq:app-secants}
\end{equation}
Taking $\widetilde L_t$ to be the optimal Lipschitz constant gives
$\operatorname{Lip}(F_t)=\alpha_t\widetilde L_t$;
for the Gaussian noising maps considered here $R_t=\alpha_t$, so the same
ratio directly estimates the learned-step factor $\widetilde q_t$ when the
normalized coordinate $u$ is used.

Let $Q_p(t)$ denote the empirical $p$-quantile of the ratios.  We then estimate
\begin{equation}
    \widehat q_t=2Q_{0.995}(t)-Q_{0.99}(t),
    \label{eq:app-qhat}
\end{equation}
with $512$ noisy particles and $8000$ sampled pairs per level for \deleted{the }SD 1.5\deleted{ experiment}, and with $2048$ noisy particles and $12000$ sampled pairs per level for CelebA. This high-quantile extrapolation is designed to be less sensitive than the sample
maximum while probing the upper tail of the \deleted{observed }secants.

\subsection{Target-only error estimators}
The quantity required by \cref{ass:approx-oneshot} is the target defect
\begin{equation}
    e_t^\pi:=W_2\!\left(\pi,\CMstepA{t}(\pi)\right).
    \label{eq:app-target-defect}
\end{equation}
We estimate this quantity with the same exact finite-sample OT estimator used
for $\widehat D_0$.  For repeat $r$, draw target particles
$X_{r,i}\sim\pi$ and noises $Z_{r,i}\sim\mathcal N(0,I)$, and evaluate each
level directly from the same clean targets,
\begin{equation}
    Y_{r,i}(t)=
    \replaced{\PhiA{t\to0}}{\widetilde\Phi_{t\to0}}(\alpha_tX_{r,i}+\sigma_tZ_{r,i}).
\end{equation}
For two equal-size clouds $A=\{a_i\}_{i=1}^N$ and
$B=\{b_i\}_{i=1}^N$, we use the exact empirical transport distance
\begin{equation}
    \widehat W_2(A,B)
    =\left[
       \frac1N\min_{\tau\in S_N}
       \sum_{i=1}^N\|a_i-b_{\tau(i)}\|_2^2
      \right]^{\nicefrac{1}{2}}.
    \label{eq:app-empirical-ot}
\end{equation}
The target-only error estimate for repeat $r$ is therefore
\begin{equation}
    \widehat E_{t,r}^\pi
    :=
    \widehat W_2\!\left(
        \{X_{r,i}\}_{i=1}^N,
        \{Y_{r,i}(t)\}_{i=1}^N
    \right),
\end{equation}
and the reported profile is the average across repeats,
\begin{equation}
    \widehat E_t^\pi
    :=
    \frac1R\sum_{r=1}^R \widehat E_{t,r}^\pi.
    \label{eq:app-Epi}
\end{equation}
The resulting $\widehat E_t^\pi$ is a
finite-sample plug-in estimate of the distributional target defect
$e_t^\pi$, rather than a Monte Carlo estimate of a fixed-coupling upper cost.

Within each repeat, the clean particles and noise realizations are held fixed
across noise levels, while evaluations at different levels are not chained.
For the CelebA TrigFlow experiment we use the same empirical-OT routine,
cloud size, and repeat count as for $\widehat D_0$; in the run used for the
plots this corresponds to $N=2048$ and $R=3$.  The SD~1.5 experiment likewise
uses the same empirical-OT estimator and OT budget as its corresponding
$\widehat D_0$ computation.

\subsection{Relation between $E_t$ and $E_t^\pi$}
The distinction used in \cref{sec:experiments} is important.  The error
$E_t$ in \cref{ass:oneshot-approximation} must satisfy
\begin{equation}
    W_2\!\left(\CMstep{t}(\rho),\CMstepA{t}(\rho)\right)\le E_t
    \qquad\text{for every }\rho\in\mathcal P_2(\Xspace),
\end{equation}
and the analytical expression $B_m(\gamma_t)$ in
\cref{coro:strong-log-concave,rem:VE-TrigFlow} is derived for this uniform
quantity from a pointwise score-approximation bound.  By contrast,
$E_t^\pi$ in \cref{ass:approx-oneshot} only controls the learned step at the
target.  Since the exact step preserves $\pi$, any admissible uniform $E_t$
also gives an admissible target-only bound $E_t^\pi\lec E_t$, but the converse
need not hold.

For the two parametrizations we compare the measurements with the common
analytical expression
\begin{equation}
    B_m(r)=
    \frac{M_0}{m}\bigl(\sqrt{1+mr^2}-1\bigr)
    +\frac{M_1}{\sqrt m}\operatorname{arcsinh}(\sqrt m\,r),
    \qquad r=\gamma_t.
    \label{eq:app-E-shape}
\end{equation}
For VE, $r=t=\sigma$.  We keep $m=0.801$ from the stability fit and choose
$M_0=0$, $M_1=110.19$, giving the upper envelope shown in
\cref{fig:E-target-profile}.  For the CelebA TrigFlow experiment,
$r=\tan t$; keeping the independently fitted stability value $m=0.5$ and
using $M_0=0$, $M_1=6.10$ gives the analytical overlay shown in the TrigFlow
panel.  These $M_0,M_1$ values are empirical envelope fits, whereas the
functional dependence on $\gamma_t$ is the analytical shape predicted by
\cref{rem:VE-TrigFlow}.

\subsection{One-step empirical OT comparison}
For the one-step experiment, the initialization is
$\rho_0=\mathcal N(0,I_d)$.  For each repeat and each VE level $t$, we sample
$G_i^0\sim\rho_0$ and generate
\begin{equation}
    G_i(t)=\replaced{\PhiA{t\to0}}{\widetilde\Phi_{t\to0}}(G_i^0+tZ_i).
    \label{eq:app-one-step-samples}
\end{equation}
Each time is evaluated independently from the same initialization law; there
is no preceding step at a larger noise level.  The current sweep uses $N=128$
particles and $3$ repeats.

The empirical output-to-target distance is computed with the same exact
linear-assignment estimator \eqref{eq:app-empirical-ot}.  The same estimator
also provides $\widehat D_0$ between a Gaussian reference cloud and a target
cloud.  We then evaluate the plug-in version of
\eqref{eq:learned-one-step},
\begin{equation}
    \widehat B_1(t)=\widehat q_t\widehat D_0+\widehat E_t^\pi.
    \label{eq:app-one-step-plugin}
\end{equation}
The target-error profile $\widehat E_t^\pi$ is calibrated separately using
the same empirical-OT construction in \eqref{eq:app-Epi}.  Consequently,
\cref{fig:one-step-bound} compares the empirical finite-cloud OT of the
one-step samples with a recursive estimate whose two distance terms,
$\widehat D_0$ and $\widehat E_t^\pi$, are computed with the same OT
estimator, while $\widehat q_t$ is calibrated independently from the secant
ratios.

\section{The Gaussian example: derivations and further regimes}
\label{sec:gauss-verification}

This appendix derives the statements of \cref{sec:gauss-example}, in its
notation, and treats further regimes.

\subsection{Model}
\label{sec:gauss-verif-model}

Let $\pi=\mathcal N(0,m^{-1}\eye_d)$. Then $\pi_r=\mathcal N(0,(m^{-1}+r^2)\eye)$
and $\nabla^2f_r=\mu_r\eye$ with $\mu_r=m/(1+mr^2)$, so
\cref{lem:gauss-smoothing} is attained, and the probability-flow ODE
\eqref{eq:grad-flow} integrates to $\Phi_{t\to0}(x)=L_tx$ with
\begin{equation}\label{eq:gauss-verif-profiles}
  R_t=\alpha_t,\qquad
  L_t=\frac{1}{\alpha_t\sqrt{1+m\gamma_t^2}},\qquad
  q_t=\frac{1}{\sqrt{1+m\gamma_t^2}} ,
\end{equation}
the profiles of \cref{rem:VE-TrigFlow}. \Cref{ass:oneshot,ass:noising,ass:denoise}
hold with these constants, which are exact operator norms ($R_t$ is attained
on Dirac masses).

The learned map may err in its slope, which encodes the variance it
ascribes to the target, and by an offset along a unit vector $v$, which
displaces the mean:
\begin{equation}\label{eq:gauss-verif-map}
  \PhiA{t\to0}(x)=\widetilde L_t\,x+\omega_t\,v,
  \qquad
  \widetilde L_t=\frac{1}{\alpha_t\sqrt{1+\widetilde m\gamma_t^2}},
\end{equation}
with a learned precision $\widetilde m>0$ and offsets $\omega_t\ge0$, which play
the role of $\theta$. Without offset, $\omega\equiv0$, this is the
exact map of $\mathcal N(0,\widetilde m^{-1}\eye)$, a model of the right mean
and the wrong variance.

Two affine objects appear: the learned map \eqref{eq:gauss-verif-map}, which
the example specifies, and the score \eqref{eq:gauss-verif-learned-score} it
implies; in each, the slope encodes the variance and the offset the mean. To
connect the map with \cref{theo:lip-error-ft}, take offsets of the form
$\omega_t=\int_0^{\gamma_t}r\,\eta_r(1+\widetilde mr^2)^{-\nicefrac{1}{2}}\D r$ with
$\eta_r\ge0$. In the rescaled coordinates of \cref{lem:lip-error-ft} the
approximate-score ODE is then linear, with integrating factor
$(1+\widetilde m\gamma_u^2)^{-\nicefrac{1}{2}}$, and
\eqref{eq:gauss-verif-map} is its flow for the implied affine score
\begin{equation}\label{eq:gauss-verif-learned-score}
  \widetilde s_\theta(x,r)=-\widetilde\mu_r\,x+\eta_r\,v,
  \qquad
  \widetilde\mu_r=\frac{\widetilde m}{1+\widetilde mr^2}.
\end{equation}
Each $\widetilde s_\theta(\cdot,r)$ is the score of a Gaussian, of variance
$\widetilde m^{-1}+r^2$ and mean $\eta_r(\widetilde m^{-1}+r^2)v$; with an
offset, the mean depends on $r$, which \cref{theo:lip-error-ft} allows since
it treats the levels separately. The score error \eqref{eq:approx-assumption}
is $e_r=\eta_r$ if $\widetilde m=m$, and $e_r=+\infty$ otherwise, the slope
mismatch $(\mu_r-\widetilde\mu_r)x$ being unbounded: the offset is the score
error only when the slope is right. For $\widetilde m=m$, $\omega_t$ is therefore
the error profile $E_t$ of \eqref{eq:general_Lt_Et}. The figures use
$\eta_r=M_0+M_1/r$, for which $\omega_t=B_{\widetilde m}(\gamma_t)$ with $B_m$ as
in \cref{rem:VE-TrigFlow}; for $\widetilde m=m$ this offset is the
score-error envelope of \cref{coro:strong-log-concave}, attained with
equality. The learned step is then the exact one followed by a shift, which
moves every law by the same vector; hence $E_t$, $E^\pi_t=E_t$ and
$\widetilde L_t=L_t$ are the best constants for both \cref{thm:main} and
\cref{prop:learned-recursion}, whose certificates coincide.

The mean/variance recursion of \cref{sec:gauss-example} follows, since noising
maps $\mathcal N(\zeta v,s^2\eye)$ to
$\mathcal N(\alpha_t\zeta v,(\alpha_t^2s^2+\sigma_t^2)\eye)$, the learned map
is affine, and two Gaussians with proportional covariances are compared
coordinatewise. For $\widetilde m=m$, $s=m^{-\nicefrac{1}{2}}$ is preserved.

\subsection{How the two errors propagate}
\label{sec:gauss-verif-propagation}

By the mean/variance recursion of \cref{sec:gauss-example}, an error in the mean
contracts at exactly the certified $\widetilde q_t$. An error in the variance
contracts more strongly: for $\nu_s=\mathcal N(0,s^2\eye_d)$, rationalizing
the difference of standard deviations after noising gives
\begin{equation}\label{eq:gauss-verif-variance-rate}
  \frac{W_2\big(\OPnoise t(\nu_s),\OPnoise t(\nu_{m^{-\nicefrac{1}{2}}})\big)}
       {W_2(\nu_s,\nu_{m^{-\nicefrac{1}{2}}})}
  =\frac{\alpha_t}{\kappa_t(s)},
  \qquad
  \kappa_t(s)=\frac{(\alpha_t^2s^2+\sigma_t^2)^{\nicefrac{1}{2}}+(\alpha_t^2/m+\sigma_t^2)^{\nicefrac{1}{2}}}
                   {\alpha_t\,(s+m^{-\nicefrac{1}{2}})} .
\end{equation}
Since $(\alpha_t^2x^2+\sigma_t^2)^{\nicefrac{1}{2}}\ge\alpha_tx$ and its derivative lies
in $[0,\alpha_t]$, $\kappa_t\ge1$ is nonincreasing; it equals $q_t^{-1}$ at
$s=m^{-\nicefrac{1}{2}}$, tends to $q_t^{-1}+\sqrt m\,\gamma_t$ as $s\to0$ and to $1$ as
$s\to\infty$. The constant $R_t=\alpha_t$ is exact on Dirac masses, which
differ by a shift, but misses the factor $\kappa_t$ on the laws the sampler
visits.

Let $S_t(x)=\widetilde L_t(\alpha_t^2x^2+\sigma_t^2)^{\nicefrac{1}{2}}$ be the standard
deviation after one learned step from standard deviation $x$. After a step
from $\rho=\mathcal N(\zeta v,s^2\eye)$, $\zeta\ge0$, the error in the
standard deviation is the sum of a part inherited from $\rho$,
$\Delta^{\rm init}_t=S_t(s)-S_t(m^{-\nicefrac{1}{2}})$, and a part created by the step,
$\Delta^{\rm step}_t=S_t(m^{-\nicefrac{1}{2}})-m^{-\nicefrac{1}{2}}$; each is positive if it makes
the law too wide and negative if it makes it too narrow. Since
$\CMstepA t(\pi)=\mathcal N\big(\omega_tv,S_t(m^{-\nicefrac{1}{2}})^2\eye\big)$, the
target error of \cref{ass:approx-oneshot} is attained at
\begin{equation}\label{eq:gauss-verif-target-error}
  E^\pi_t=\norm{\big(\omega_t,\;\sqrt d\,\Delta^{\rm step}_t\big)}
  =\Big(\omega_t^2+\frac dm\Big[\Big(\frac{1+m\gamma_t^2}{1+\widetilde m\gamma_t^2}\Big)^{\nicefrac{1}{2}}-1\Big]^2\Big)^{\nicefrac{1}{2}} .
\end{equation}
Its two entries are the errors in the mean and in the standard deviation of
one learned step from $\pi$; as $\gamma_t\to\infty$ the second tends to that of
the learned target $\mathcal N(0,\widetilde m^{-1}\eye)$. Since
$\abs{\Delta^{\rm init}_t}=\widetilde q_t\abs{s-m^{-\nicefrac{1}{2}}}/\kappa_t(s)$ by
\eqref{eq:gauss-verif-variance-rate}, the exact error and the certificate
after one step are
\begin{equation}\label{eq:gauss-verif-onestep}
  \begin{aligned}
    W_2\big(\CMstepA t(\rho),\pi\big)
    &=\norm{\big(\widetilde q_t\zeta+\omega_t,\;
       \sqrt d\,(\Delta^{\rm init}_t+\Delta^{\rm step}_t)\big)},\\
    \widetilde q_t\,W_2(\rho,\pi)+E^\pi_t
    &=\norm{\big(\widetilde q_t\zeta,\;
       \sqrt d\,\kappa_t(s)\abs{\Delta^{\rm init}_t}\big)}
     +\norm{\big(\omega_t,\;\sqrt d\,\abs{\Delta^{\rm step}_t}\big)} .
  \end{aligned}
\end{equation}
The mean entries agree, so an error in the mean is certified exactly,
while the inherited variance error is overstated by $\kappa_t$, a bounded
factor per step that compounds over steps. Adding two norms instead of
combining their entries costs at most a further factor $\sqrt2$, and adding
the sizes of $\Delta^{\rm init}_t$ and $\Delta^{\rm step}_t$ instead of their
signed values ignores cancellation (\cref{sec:gauss-verif-regimes}).

At a repeated level the iterates converge to
$\rho_\star=\mathcal N(\zeta_\star v,s_\star^2\eye)$, with
$\zeta_\star=\omega_t/(1-\widetilde q_t)$ and
$s_\star=\widetilde L_t\sigma_t(1-\widetilde q_t^{\,2})^{-\nicefrac{1}{2}}$. Writing
$\Delta_t=\sqrt d\,\abs{\Delta^{\rm step}_t}$ and
$\delta_t=\sqrt d\,\abs{s_\star-m^{-\nicefrac{1}{2}}}$, the plateau of
\cref{rem:fixed-point} exceeds the exact limit by
\begin{equation}\label{eq:gauss-verif-plateau-ratio}
  \frac{E^\pi_t/(1-\widetilde q_t)}{W_2(\rho_\star,\pi)}
  =\sqrt{\frac{\omega_t^2+\Delta_t^2}{\omega_t^2+(1-\widetilde q_t)^2\delta_t^2}} ,
\end{equation}
which is at least one, by \eqref{eq:learned-one-step} at $\rho_\star$, and
decreases in $\omega_t$.

\subsection{Regimes}
\label{sec:gauss-verif-regimes}

\Cref{tab:gauss-verif-regimes} lists what the certificate overstates per
regime; the extremes isolate a single error source.

\begin{table}[h]
  \centering
  \small
  \begin{tabular}{@{}lllll@{}}
    \toprule
    Regime & Initialization & Learned step & Certificate overstates & Outcome \\
    \midrule
    Mean & mean & mean & nothing & exact \\
    Initial variance & variance & none & inherited variance, by $\kappa_t$
      & ratio $\sim q_t^{-n}$ \\
    Learned variance & variance or none & variance
      & inherited variance, by $\kappa_t$ & no uniform $E_t$ \\
    Both & both kinds & both kinds & the same, diluted by the mean
      & loose, then tight \\
    \bottomrule
  \end{tabular}
  \caption{Regimes of \cref{sec:gauss-verif-regimes}. The second and third
  columns give the kind of error introduced by the initialization and by the
  learned step, \emph{none} meaning exact. The certificate also adds the
  sizes of variance errors of opposite directions, which may cancel
  (\cref{sec:gauss-verif-propagation}).}
  \label{tab:gauss-verif-regimes}
\end{table}

\paragraph{Error in the mean.}
With $\widetilde m=m$ and $s_0=m^{-\nicefrac{1}{2}}$ the two lines of
\eqref{eq:gauss-verif-onestep} agree, and so do the unrolled
\eqref{eq:learned-one-step} and \eqref{eq:multi-stage-cm-err-bound}.
\Cref{fig:gauss-verif-translation} shows the statements of
\cref{sec:gauss-example} in both parametrizations; the plateau of
\cref{rem:fixed-point} is attained. The geometric form \eqref{eq:cm-geometric-error-bound} also needs the
balance condition \eqref{eq:cm-error-balance}, which for
$t_k=t_1\lambda^{k-1}$ holds only for slowly decaying schedules, a range
that narrows as $n$ grows.

\begin{figure}[t]
  \centering
  \definecolor{gaussblue}{HTML}{1F4E79}
\definecolor{gaussorange}{HTML}{B5651D}
\definecolor{gaussred}{HTML}{8B1A1A}
\definecolor{gaussgrey}{HTML}{7A7A7A}
\definecolor{gaussbrown}{HTML}{C26A2A}
\begin{tikzpicture}
\begin{groupplot}[scale only axis, clip mode=individual, grid=major, grid style={gray!20}, yticklabel style={/pgf/number format/1000 sep={}}, tick label style={font=\scriptsize}, label style={font=\small}, title style={font=\small}, legend style={font=\scriptsize, draw=none, fill=none}, legend cell align=left, group style={group size=2 by 1, horizontal sep=1.5cm},
  width=0.37\linewidth, height=3.6cm, xmin=0, xmax=24,
  xlabel={number of steps $n$}, ylabel={$W_2(\rho_n,\pi)$}]
\nextgroupplot[title={Certificate and error coincide}, ymode=log,
  log ticks with fixed point, ytick={0.3,0.5,1,2}, legend columns=2,
  legend pos=north east]
\addplot[gaussblue, thick] table[x=n, y=ve] {Figures/gaussian/translation-left.dat};
\addlegendentry{VE: error}
\addplot[gaussblue, only marks, mark=o, each nth point=2] table[x=n, y=ve] {Figures/gaussian/translation-left.dat};
\addlegendentry{certificate}
\addplot[gaussorange, thick] table[x=n, y=tf] {Figures/gaussian/translation-left.dat};
\addlegendentry{TrigFlow: error}
\addplot[gaussorange, only marks, mark=square, each nth point=2] table[x=n, y=tf] {Figures/gaussian/translation-left.dat};
\addlegendentry{certificate}
\addplot[gaussblue, only marks, mark=star, mark size=3.5pt, very thick, forget plot]
  coordinates {(9,0.313797)};
\addplot[gaussorange, only marks, mark=star, mark size=3.5pt, very thick, forget plot]
  coordinates {(7,0.32103)};
\nextgroupplot[title={Repeated level}, ymin=0, legend pos=north east]
\addplot[gaussblue, thick] table[x=n, y=above] {Figures/gaussian/translation-right.dat};
\addlegendentry{$\zeta_0$ above the plateau}
\addplot[gaussgrey, thick] table[x=n, y=at] {Figures/gaussian/translation-right.dat};
\addlegendentry{$\zeta_0$ at the plateau}
\addplot[gaussorange, thick] table[x=n, y=below] {Figures/gaussian/translation-right.dat};
\addlegendentry{$\zeta_0$ below the plateau}
\addplot[gaussred, dashed] table[x=n, y=plateau] {Figures/gaussian/translation-right.dat};
\addlegendentry{plateau $E_t/(1-q_t)$}
\end{groupplot}
\end{tikzpicture}
  \caption{Error in the mean only: $m=0.8$, the curvature fitted in
  \cref{sec:experiments}, $\widetilde m=m$, $\eta_r=M_0+M_1/r$ with
  $M_0=0.03$, $M_1=0.05$, and $s_0=m^{-\nicefrac{1}{2}}$. Left: from $\zeta_0=2$, along
  $t_k=2(0.85)^{k-1}$ in VE coordinates and $t_k=1.3(0.85)^{k-1}$ in TrigFlow
  coordinates, the certificate \eqref{eq:multi-stage-cm-err-bound} (markers)
  and the exact error (lines) coincide; stars mark the optimal step count.
  Right: at the repeated VE level $t=1$, the error moves to the plateau
  $E_t/(1-q_t)$ from above, at, or below it.}
  \label{fig:gauss-verif-translation}
\end{figure}

\paragraph{Variance error from the initialization.}
With an exact CM ($\widetilde m=m$, $\eta\equiv0$, hence $E_t=E^\pi_t=0$)
started from $\mathcal N(0,s_0^2\eye)$, only the factor $\kappa_t$ acts: the
certificate is the pure contraction $Q_{1,n}W_2(\rho_0,\pi)$, while the error
contracts at $q_{t_k}/\kappa_{t_k}(s_{k-1})$.

\paragraph{Variance error from the learned step.}
With $\widetilde m\ne m$ and $\eta\equiv0$, the model has learned
$\mathcal N(0,\widetilde m^{-1}\eye)$: the right mean and the wrong variance.
This is the extreme of a learned step that errs only in the variance, and it
holds exactly when $\pi$ is symmetric and the model respects the symmetry,
since an odd affine score has no offset. The score is nonetheless wrong,
$e_r=+\infty$, and no finite
$E_t$ satisfies \cref{ass:oneshot-approximation}, since
$W_2\big(\CMstep t(\nu_s),\CMstepA t(\nu_s)\big)
 =\sqrt d\,\abs{L_t-\widetilde L_t}(\alpha_t^2s^2+\sigma_t^2)^{\nicefrac{1}{2}}\to\infty$
as $s\to\infty$. \Cref{ass:approx-oneshot,ass:approx-denoise} still hold,
with $\widetilde L_t$ and $E^\pi_t=\Delta_t$, so
\cref{prop:learned-recursion} applies where \cref{thm:main} does not: a CM
that learned a slightly wrong variance, a first-order description of the
models of \cref{sec:experiments}, admits no uniform error bound, however
accurate at the target. Its certificate overstates the inherited
variance error by $\kappa_t$ and keeps a gap at the plateau, where \eqref{eq:gauss-verif-plateau-ratio} equals
$\Delta_t/[(1-\widetilde q_t)\delta_t]$ (\cref{fig:gauss-verif-misses}).

\paragraph{Both errors.}
The mean error contracts at $\widetilde q_{t_k}$ and the inherited variance
error at $\widetilde q_{t_k}/\kappa_{t_k}$, so the variance component
dominates early and the mean component late: the certificate is loose, then
tight, and the crossover comes earlier as $\omega_t$ grows and later as the
initial variance error grows (\cref{fig:gauss-verif-misses}). At the
plateau the mean error dilutes the gap, \eqref{eq:gauss-verif-plateau-ratio}
decreasing from its value at $\omega_t=0$ to one. A tight certificate thus indicates that the dominant error is in the
mean, not that the denoiser is accurate.

The same balance governs the choice of the next level.
\Cref{fig:gauss-verif-schedule} (left) takes one step at a large fixed level $t_1$
and then sweeps the level $t_2$ of a second step, for an initialization and
a learned step that are both too wide, so that no cancellation occurs.
Without an error in the mean, a larger $t_2$ contracts the remaining variance
error more strongly, and the exact error and the certificate both favour the
largest level. An error in the mean makes each step add a bias $\omega_{t_2}$ that
grows with $t_2$, so the exact error has an interior optimum. The
certificate, which overstates the inherited variance error by
$\kappa_{t_2}$, prefers more noise than the optimum, and for a large mean
error it still takes a step when none helps. Its choice nonetheless costs
only a few percent of the error.

\begin{figure}[tb]
  \centering
  \definecolor{gaussblue}{HTML}{1F4E79}
\definecolor{gaussorange}{HTML}{B5651D}
\definecolor{gaussred}{HTML}{8B1A1A}
\definecolor{gaussgrey}{HTML}{7A7A7A}
\definecolor{gaussbrown}{HTML}{C26A2A}
\begin{tikzpicture}
\begin{groupplot}[scale only axis, clip mode=individual, grid=major, grid style={gray!20}, yticklabel style={/pgf/number format/1000 sep={}}, tick label style={font=\scriptsize}, label style={font=\small}, title style={font=\small}, legend style={font=\scriptsize, draw=none, fill=none}, legend cell align=left, group style={group size=2 by 1, horizontal sep=0.5cm,
    yticklabels at=edge left},
  width=0.42\linewidth, height=3.8cm, xmin=0, xmax=4,
  ymode=log, ymin=0.018, ymax=2, log ticks with fixed point,
  ytick={0.02,0.05,0.1,0.2,0.5,1}, xlabel={level $t_2$ of the second step}]
\nextgroupplot[title={No cancellation ($\widetilde m=0.55$)},
  ylabel={$W_2(\rho_2,\pi)$},
  legend to name=gaussschedlegend, legend columns=4,
  legend style={/tikz/every even column/.append style={column sep=0.6em}}]
\addplot[gaussblue, thick, forget plot] table[x=t2, y=exact] {Figures/gaussian/schedule-nocancel-0.dat};
\addplot[gaussblue, thick, dashed, forget plot] table[x=t2, y=cert] {Figures/gaussian/schedule-nocancel-0.dat};
\addplot[gaussblue, only marks, mark=*, mark size=2.2pt, mark options={fill=white}, forget plot] coordinates {(4,0.24737)};
\addplot[gaussblue, only marks, mark=star, mark size=3.5pt, very thick, forget plot] coordinates {(4,0.416374)};
\addplot[gaussblue, only marks, mark=*, mark size=1.6pt, forget plot] coordinates {(4,0.24737)};
\addplot[gaussorange, thick, forget plot] table[x=t2, y=exact] {Figures/gaussian/schedule-nocancel-1.dat};
\addplot[gaussorange, thick, dashed, forget plot] table[x=t2, y=cert] {Figures/gaussian/schedule-nocancel-1.dat};
\addplot[gaussorange, only marks, mark=*, mark size=2.2pt, mark options={fill=white}, forget plot] coordinates {(2.46,0.377449)};
\addplot[gaussorange, only marks, mark=star, mark size=3.5pt, very thick, forget plot] coordinates {(4,0.55876)};
\addplot[gaussorange, only marks, mark=*, mark size=1.6pt, forget plot] coordinates {(4,0.399849)};
\addplot[gaussgrey, thick, forget plot] table[x=t2, y=exact] {Figures/gaussian/schedule-nocancel-4.dat};
\addplot[gaussgrey, thick, dashed, forget plot] table[x=t2, y=cert] {Figures/gaussian/schedule-nocancel-4.dat};
\addplot[gaussgrey, only marks, mark=*, mark size=2.2pt, mark options={fill=white}, forget plot] coordinates {(0,1.02865)};
\addplot[gaussgrey, only marks, mark=star, mark size=3.5pt, very thick, forget plot] coordinates {(2.22,1.30737)};
\addplot[gaussgrey, only marks, mark=*, mark size=1.6pt, forget plot] coordinates {(2.22,1.07559)};
\addlegendimage{gaussblue, thick}\addlegendentry{no error in the mean}
\addlegendimage{gaussorange, thick}\addlegendentry{offset $\eta_r$}
\addlegendimage{gaussgrey, thick}\addlegendentry{offset $4\eta_r$}
\addlegendimage{black, thick}\addlegendentry{exact error}
\addlegendimage{black, thick, dashed}\addlegendentry{certificate}
\addlegendimage{only marks, mark=*, mark options={fill=white}}\addlegendentry{best level}
\addlegendimage{only marks, mark=star, very thick}\addlegendentry{certificate's choice}
\addlegendimage{only marks, mark=*, mark size=1.6pt}\addlegendentry{error at that choice}
\nextgroupplot[title={Cancellation ($\widetilde m=1.1$)}]
\addplot[gaussblue, thick, forget plot] table[x=t2, y=exact] {Figures/gaussian/schedule-cancel-0.dat};
\addplot[gaussblue, thick, dashed, forget plot] table[x=t2, y=cert] {Figures/gaussian/schedule-cancel-0.dat};
\addplot[gaussblue, only marks, mark=*, mark size=2.2pt, mark options={fill=white}, forget plot] coordinates {(0,0.024308)};
\addplot[gaussblue, only marks, mark=star, mark size=3.5pt, very thick, forget plot] coordinates {(4,0.265249)};
\addplot[gaussblue, only marks, mark=*, mark size=1.6pt, forget plot] coordinates {(4,0.15651)};
\addplot[gaussorange, thick, forget plot] table[x=t2, y=exact] {Figures/gaussian/schedule-cancel-1.dat};
\addplot[gaussorange, thick, dashed, forget plot] table[x=t2, y=cert] {Figures/gaussian/schedule-cancel-1.dat};
\addplot[gaussorange, only marks, mark=*, mark size=2.2pt, mark options={fill=white}, forget plot] coordinates {(0,0.193946)};
\addplot[gaussorange, only marks, mark=star, mark size=3.5pt, very thick, forget plot] coordinates {(3.82,0.378454)};
\addplot[gaussorange, only marks, mark=*, mark size=1.6pt, forget plot] coordinates {(3.82,0.279349)};
\addplot[gaussgrey, thick, forget plot] table[x=t2, y=exact] {Figures/gaussian/schedule-cancel-4.dat};
\addplot[gaussgrey, thick, dashed, forget plot] table[x=t2, y=cert] {Figures/gaussian/schedule-cancel-4.dat};
\addplot[gaussgrey, only marks, mark=*, mark size=2.2pt, mark options={fill=white}, forget plot] coordinates {(1.38,0.751702)};
\addplot[gaussgrey, only marks, mark=star, mark size=3.5pt, very thick, forget plot] coordinates {(2.04,0.923409)};
\addplot[gaussgrey, only marks, mark=*, mark size=1.6pt, forget plot] coordinates {(2.04,0.773712)};
\end{groupplot}
\node[anchor=south] at (current bounding box.north) {\pgfplotslegendfromname{gaussschedlegend}};
\end{tikzpicture}
  \caption{Choosing the level of a second step, in VE coordinates with $m=0.8$
  and $d=1$: a first step at $t_1=4$ from $\mathcal N(0,2.5^2\eye)$, then a
  second at level $t_2$, for no error in the mean and for the offsets
  $\eta_r$ and $4\eta_r$, with $\eta_r$ as in
  \cref{fig:gauss-verif-translation}, on a logarithmic scale. Exact error
  solid, certificate dashed; circles mark the best level ($t_2=0$: no second
  step), stars the certificate's choice and dots the exact error at that
  choice. Left: the learned variance
  is too large ($\widetilde m=0.55$), like the initialization, so no
  cancellation occurs. Right: it is too small ($\widetilde m=1.1$), so the
  learned step makes the law too narrow while the initialization is too
  wide, and the two variance errors cancel.}
  \label{fig:gauss-verif-schedule}
\end{figure}

\paragraph{Cancellation.}
When the inherited and created variance errors have opposite directions, the
exact error in \eqref{eq:gauss-verif-onestep} adds them with their signs
while the certificate adds their sizes, and the certificate can exceed the
error by an arbitrary factor: if $\widetilde m>m$ and $\eta\equiv0$, the step
makes laws too narrow, and the too wide law with $S_t(s)=m^{-\nicefrac{1}{2}}$ is mapped
exactly onto $\pi$ while its certificate is positive.
This needs a variance error entering the step in the direction opposite
to the one the step creates. A model whose variance error keeps one
direction across levels produces such an error only through the
initialization, which a large first level largely erases.
The right panel of \cref{fig:gauss-verif-schedule} repeats the
comparison with a learned variance that is too small, against the same too
wide initialization. After the first step the two variance errors nearly
cancel, so the law is closer to $\pi$ than its certificate says, and the
certificate recommends a second step that makes the error worse. The error in
the mean, certified exactly, limits the damage: with no error in the mean
the law after one step is almost exactly $\pi$ and the recommended step
multiplies the error several times; with the offset $\eta_r$ it still
increases the error; with $4\eta_r$ a second step helps, and following the
certificate costs little.

\paragraph{Scope.}
For $\pi=\mathcal N(0,\Sigma)$ the profiles hold with
$m=\lambda_{\max}(\Sigma)^{-1}$ and the recursions extend coordinatewise; the
certificate is then driven by the flattest direction while the error averages
over all of them, a further, dimension-dependent slack. The model does not
show that these mechanisms are at work in
\cref{sec:experiments}, which would require measuring the error across step
counts for a trained model.

\section{Validating the one-step bound on SD~1.5}\label{app:one-step-bound}
We combine the estimated profiles $\widehat q_t$ and $\widehat E^\pi_t$ (\cref{sec:estimators}) of the SD~1.5 LCM into the one-step $W_2$ bound that underpins our recursion, and compare this bound with the empirical $W_2$ error measured by an optimal transport estimator. More precisely, starting from $\rho_0 = \mathcal{N}(0, I)$, we apply a single noising--denoising step at each VE level $t$ separately, and compute the empirical $W_2$ distance between the resulting latent samples and latent samples from a target dataset (different from the one used to calibrate $\widehat q_t$ and $\widehat E^\pi_t$), using the exact optimal assignment between the two point clouds. We compare this distance with the plug-in one-step bound
\begin{equation}\label{eq:empirical-one-step-bound}
    \widehat{B}_1(t) := \widehat q_t\,\widehat{D}_0 + \widehat{E}^\pi_t,
    \qquad
    \widehat{D}_0 \approx W_2(\rho_0, \pi).
\end{equation}

\begin{figure}[t]%
    \centering
    \begin{tikzpicture}
\begin{axis}[
  theoryplot,
  height=2.8cm,
  xmode=log,
  xmin=0.13, xmax=16.5,
  ymin=118, ymax=194,
  ytick={120,130,140,150,160,170,180,190},
  xlabel={VE time $t=\sigma$},
  ylabel={OT vs bound},
  legend style={at={(0.98,0.97)}, anchor=north east},
]
\addplot[gaussblue, thick]
  table[x=t,y=empirical] {Figures/theory/data/ve_one_step_bound.dat};
\addlegendentry{Measured empirical OT $W_2$}

\addplot[gaussorange, thick, dashed]
  table[x=t,y=bound] {Figures/theory/data/ve_one_step_bound.dat};
\addlegendentry{Recursive bound}

\addplot[gaussred, only marks, mark=*, mark size=2.2pt]
  coordinates {(8.6399,121.232)};
\addlegendentry{Minimum (empirical OT $W_2$)}

\addplot[gaussorange, only marks, mark=*, mark size=2.2pt]
  coordinates {(8.6399,129.094)};
\addlegendentry{Minimum (recursive bound)}
\end{axis}
\end{tikzpicture}%
    \caption{One-step SD~1.5 experiment. The measured empirical W2 error is
    compared with the plug-in recursive estimate
    \mbox{$\widehat q_t\widehat D_0+\widehat E_t^\pi$}.}
    \label{fig:one-step-bound}
\end{figure}

As shown in \cref{fig:one-step-bound}, $\widehat{B}_1(t)$ lies above the observed $W_2$ error over the full range of VE times and follows its variation with $t$. Since $\widehat{B}_1(t)$ is assembled from independently estimated profiles rather than fitted to the measured error, its proximity to that error indicates that the bound is reasonably tight. Interestingly, the gap is smallest near the optimal value of $t$, which is the regime that matters for schedule design.

Unlike in the Gaussian case, we do not report multistep bounds for the considered CMs. Already for two
steps, the bound drops to the level of the statistical bias of the empirical
$W_2$ estimator, so it cannot be meaningfully compared with the measured error.
Resolving this would require vastly more samples, since empirical $W_2$ estimates converge at rate that is notoriously slow in high dimensions.

\end{document}